\crefname{section}{Sec.}{Secs.}
\Crefname{section}{Section}{Sections}
\Crefname{table}{Table}{Tables}
\crefname{table}{Tab.}{Tabs.}
\begin{document}

%%%%%%%%% TITLE - PLEASE UPDATE
\title{ ZZ-Net: A Universal Rotation Equivariant Architecture for 2D Point Clouds}

\author{Georg Bökman$^\text{a}$, Fredrik Kahl$^\text{a}$, Axel Flinth$^\text{a,b}$ \\ {\small\texttt{bokman@chalmers.se, fredrik.kahl@chalmers.se, axel.flinth@umu.se}}
\vspace{.1cm}\\
$^\text{a}$Department of Electrical Engineering, Chalmers University of Technology \\
$^\text{b}$Department of Mathematics and Mathematical Statistics, Umeå University
}

%\author{Georg Bökman$^\star$ \\ \texttt{bokman@chalmers.se} \and Fredrik Kahl$^\star$ \\ \texttt{fredrik.kahl@chalmers.se} \and Axel Flinth$^{\star,\dagger}$ \\ \texttt{axel.flinth@umu.se} 
%\vspace{.3cm}\\
%$^*$Chalmers University of Technology, 
%$^\dagger$Umeå University 
%}

%\author{Georg Bökman$^\star$, Fredrik Kahl$^\star$, Axel Flinth${\star,\dagger}$\footnote{This research was pursued when F. was at Chalmers University of Technology} \\ 
%$\star$ \begin{minipage}Chalmers University of Technology \\ adress \end{minipage} \begin{minipage}$\dagger$ Umeå University \\ adress \end{minipage} \\ 
%{\small \tt bokman@chalmers.se, fredrik.kahl@chalmers.se, axel.flinth@umu.se}
%}

% For a paper whose authors are all at the same institution,
% omit the following lines up until the closing ``}''.
% Additional authors and addresses can be added with ``\and'',
% just like the second author.
% To save space, use either the email address or home page, not both

\maketitle

%%%%%%%%% ABSTRACT
\begin{abstract}
In this paper, we are concerned with rotation equivariance on 2D point cloud data. 
We describe a particular set of functions able to approximate any continuous rotation equivariant
and permutation invariant function.
%We show that the set of continuous rotation equivariant and permutation invariant functions can be approximated by a function class consisting of weighted centroids of the input clouds.
Based on this result, we propose a novel neural network architecture for processing 2D point clouds %. Our architecture is permutation invariant and rotation equivariant
and we prove its universality for approximating functions exhibiting these symmetries.

We also show how to extend the architecture to accept a set of 2D-2D correspondences as indata, while maintaining similar equivariance properties. Experiments are presented on the estimation of essential matrices in stereo vision.
\end{abstract}

\thispagestyle{footnotepage1}
\section{Introduction}

The need to interpret and process point clouds arises in many different application areas such as autonomous driving, augmented reality and robotics  \cite{izadi2011kinectfusion}. Basic problem examples in computer vision are classification, segmentation and object detection as well as correspondence problems in multiple view geometry \cite{rapp2020lidar} .
Considering as input object a point cloud or a pair of point clouds, 
it is a natural requirement that permuting the order of the
points doesn't change the object in question. Such a permutation should therefore not change the way the points are processed. 
This permutation symmetry needs to be considered when designing a neural network
for point cloud input, which is typically done by having \emph{equivariant} network layers.
Another possible symmetry is rotation of the point clouds about the origin. For an example of a single point cloud processing task that is rotation equivariant, see Figure~\ref{fig:equiv_stars}. We will also consider rotational symmetries for pairs of point clouds.
\begin{figure}
    \centering
    \includegraphics[width=0.9\columnwidth]{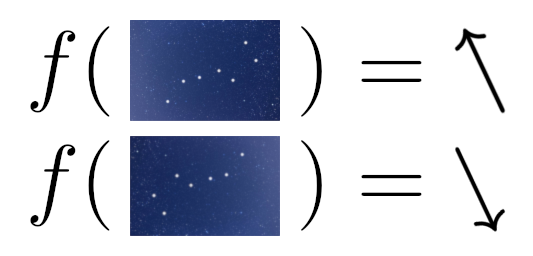}
    \caption{A simple example of rotation equivariance. The illustration shows the task of determining the direction to the North Star given other stars in the night sky. The input is the set of locations of the visible stars in some 2D coordinate frame. Rotation equivariance of the point cloud processor $f$ means here that the determined direction should rotate if the night sky (or the observer) rotates. (Picture of stars \cite{big_dipper_pic}.)}
    \label{fig:equiv_stars}
\end{figure}

\newcommand{\ifmathfrak}{}
\newcommand{\ifmathscr}{}

\textbf{Equivariance.}
\textcolor{black}{Let us introduce some notation and provide a formal definition of equivariance.} Given a group $\ifmathfrak G$ we consider sets which exhibit $\ifmathfrak G$-symmetries (in a sense to be made precise shortly)
and functions between such sets.
A $\ifmathfrak G$-set is a set $\ifmathscr X$ equipped with a $\ifmathfrak G$-action, i.e., a group homomorphism $\varphi$ from 
$\ifmathfrak G$ to the group of bijections from $\ifmathscr X$ to $\ifmathscr X$.
One should think of a $\ifmathfrak G$-action $\varphi$ as a way to relate elements of $\ifmathfrak G$ to symmetries of $\ifmathscr X$.
Typically, we will suppress the group homomorphism in the notation and write $\ifmathfrak gx$ or $\ifmathfrak g^*x$ for $\varphi(\ifmathfrak g)(x)$.
An example of a
$\ifmathfrak G$-set is $\C^m$ acted on by the permutation group $S_m$. Note that $S_m$ could act on $\C^m$
in different ways and we must specify the action to describe a $\ifmathfrak G$-set.
The canonical action is to permute the $m$ dimensions, but another obvious action is the trivial action
given by $\pi Z = Z$ for all $\pi\in S_m$ and all $Z\in\C^m$.
If we have two $\ifmathfrak G$-sets $\ifmathscr X$ and $\ifmathscr Y$, we say that a function $f: \ifmathscr X\to \ifmathscr Y$ is ($G$-)equivariant if it commutes with the $\ifmathfrak G$-actions: $f(\ifmathfrak gx)=\ifmathfrak gf(x)$. A special case is when the action on $\ifmathscr Y$ is trivial and then we call $f$ invariant: $f(\ifmathfrak gx)=f(x)$. \textcolor{black}{For more information on groups, symmetries and equivariance in general, we refer to e.g. \cite{kosmann-schwarzbachGroupsSymmetries2010}.}

In this paper, we focus on the  group $SO(2)\times S_m$. The $SO(2)$-action on a point cloud is
given by rotating all points about the origin and the $S_m$-action is given by permuting the
points.\footnote{Technical note: These two actions commute with each other and hence define an {$SO(2)\times S_m$-action}.}  More concretely, we are concerned with functions that are invariant to permutations, but equivariant to rotations. Let us call the set of such functions $\calR(m)$.

Additionally, we go further, and describe new results and neural network architectures for the case of clouds of pair of points, or correspondences. In this case, we deal with functions that are permutation invariant, rotation equivariant to one of the clouds, and rotation invariant with respect to the other. We call this set of functions $\calR_2(m)$.

An obvious limitation with our work is that we only deal with $SO(2)$-equivariance and not higher order rotations. Still, it is an important case with many different applications. For instance, in many scenarios, invariance with respect to rotation around one axis is the correct model. \textcolor{black}{Another example is essential matrix estimation \cite{hartleyMultipleViewGeometry2003}, which we will explore in Section \ref{sec:exp_ess}. Note that} the derivations are simplified and the computations can be made more efficient as the group of 2D rotations is commutative, which is not the case for $SO(d)$ with $d>2$.

The main contributions of this paper are as follows. First, we describe a dense set of equivariant functions on 2D point clouds (Theorem~\ref{th:denseset}). With that set as a basis,  we describe a neural network architecture for approximating the function space $\calR(m)$ and prove its universality (Theorem \ref{th:universality}). We then present how to extend that architecture to also cover $\calR_2(m)$ and discuss the extension's universality properties. We test our architecture on a (toy) rotation estimation problem and the estimation of essential matrices in stereo vision.

\subsection{Related work}
Equivariance for regular image grids has been studied in various settings, ranging from classical CNNs for translation invariance \cite{fukushima-1980,lecun-1989} to rotation and rigid transformation invariance \cite{worrall2017cvpr,weiler2018cvpr,weiler_cesa_2019}.
Equivariance on more general domains and under general groups has also been investigated in a recent line of research. In particular there has been a focus on describing linear equivariant functions, which can be alternated with non-linearities to obtain equivariant neural network architectures \cite{cohen2016group, cohenEquivariantConvolutionalNetworks2021, aronssonHomogeneousVectorBundles2021, finzi2021icml, lang2021ICLR}.
Recent surveys of the theory include \cite{bronsteinGeometricDeepLearning2021, gerkenGeometricDeepLearning2021, weilerCoordinateIndependentConvolutional2021}.

There exist a number of high-performing deep learning architectures for 3D point cloud processing,
mostly targeted for recognition, classification and segmentation, including methods that do not
take rotation equivariance into account
\cite{qi2017pointnet, Zhao_2021_ICCV} and methods that do consider the effects of rotations
\cite{fuchs2020se3transformers,  thomasTensorFieldNetworks2018, Melnyk_2021_ICCV,dengVectorNeuronsGeneral2021}.
The approach most similar in spirit to ours is \cite{Xu_2021_ICCV}, but while we let every 
point in the point cloud gather information from all others 
to obtain rotation invariant and permutation equivariant features, 
they use the sorted Gram matrix of local neighbourhoods to 
obtain local rotation and permutation invariant features. 
They do not prove the universality of their approach.

We focus on 2D rather than 3D. While the approaches for the 3D case could be modified
to apply to the 2D case as well, doing so would not take advantage of the fact that the 2D case is simpler.
Specifically, all rotations in 2D commute and this fact plays a crucial role in our proofs.

Our work is inspired by fundamental theoretical results in machine learning which aim to characterize equivariant point cloud networks. In the seminal work of \cite{zaheer2018deep}, all permutation equivariant functions were shown to belong to a particular family of functions from which equivariant network architectures can be constructed. In more recent work, the theory has been further developed and additional symmetries have been considered 
\cite{pmlr-v97-wagstaff19a,maron2018invariant,maron2019universality, keriven2019universal,yarotsky2021universal}. 
In \cite{dym-maron-2021}, the authors present a method for proving universality for rotation equivariant point cloud networks in 3D. 
Their proof technique is applicable to networks which allow latent features consisting of arbitrary high order tensors, such as for e.g\ Tensor Field Networks \cite{thomasTensorFieldNetworks2018}. In contrast, our networks only need to handle tensors of order two.

While finalizing this work, we were made aware of the concurrent papers \cite{villar2021scalars, yaoSimpleEquivariantMachine2021},
with an approach that is related to ours. In fact, Proposition~10 of \cite{villar2021scalars} is similar to our Theorem~\ref{th:denseset} but for the group $O(2)$ instead of $SO(2)$ (in fact, they deal with a $d$-dimensional underlying space and the group $O(d)$). In particular, we make a more thorough description and analysis of neural network architectures.

From an application point-of-view, we are interested in correspondence problems and more generally, robust fitting problems in multiple view geometry. State-of-the-art deep learning approaches in this context include early work such as CNe~\cite{moo-cvpr-2018} and OANet~\cite{zhang2019oanet} but also the more recent approaches ACNe~\cite{sun-cvpr-2020} where attentive context normalization is shown to improve permutation-equivariant learning and T-Net~\cite{Zhong_2021_ICCV} which also consists of a permutation equivariant network that is able to capture both global and channel-wise contextual information. However, these methods only incorporate permutation equivariance, which make them dependent on the coordinate frame of the points.
We give experimental comparisons to some of these approaches.

\textbf{Notation.}
Throughout the entire paper, we will identify $\R^2$  with $\C$. The group $SO(2)$ of rotations is then naturally identified with the unit circle $\mathbb{S}\subset \C$. To keep things simple, we understand point clouds as vectors $Z~=~(z_0, \dots, z_{m-1})\in \C^m$, where $m$ is the number of points. Note that the action of $SO(2)$ on $\C^m$ can be simply written $\theta Z$, where $\theta\in\mathbb{S}$ and that this can be equivalently read as complex multiplication or an action of the rotation group. 
We  write $[m]$ for the set of indices from $0$ to $m-1$.
The group of permutations is denoted $S_m$, and for $\pi \in S_m$, we let $\pi^*Z$ denote the permuted version of $Z$, i.e., $[\pi^*Z]_i = Z_{\pi^{-1}(i)}$. As in \cite{maron2018invariant}, we extend the latter to tensors: for $T \in (\C^{m})^{\otimes 2}$, $[\pi^*T]_{ij}=T_{\pi^{-1}(i)\pi^{-1}(j)}$. Let us further denote the subgroup  of permutations which fix the $0$-element, i.e., $\set{\pi \in S_m \, \vert \,\pi(0)=0}$  with $\Stab(0)$, which is called the \emph{stabilizer} of 0.
Finally, we let $\tau_i\in S_m$ be the transposition of $i$ and $0$.

\section{Approximating functions in \texorpdfstring{$\calR(m)$}{R(m)}}
In this section we describe the theory underlying our permutation invariant, rotation equivariant neural network architecture. We denote the set of continuous rotation equivariant and permutation invariant functions, i.e., functions $f:\C^m \to \C$ with $f(\theta \pi^*Z) =\theta f(Z)$ for all $\pi \in S_m$ and $\theta \in \S$ with $\calR(m)$.\footnote{$\calR$ for \emph{rotation}.} Throughout the paper, $m$ is fixed.

\subsection{A dense set of functions in \texorpdfstring{$\calR(m)$}{R(m)}}

To get an idea of how to design a network for approximating functions on $\calR(m)$, let us look at the DeepSet\cite{zaheer2018deep}, or PointNet\cite{qi2017pointnet}, architectures. In a nutshell, the reason that they are universal for approximating permutation invariant functions is that all such functions can be written as
   $\chi(\sum_{i \in [m]} \varrho(z_i) )$
 for some $K\in \N$ and functions $\varrho:\C \to \R^K$ and $\chi:\R^K \to \C$. A natural Ansatz for approximating functions in $\calR(m)$ is therefore to use a network of the same structure, but letting $\varrho$ and $\chi$ be rotation equivariant. Unfortunately, this simple idea will provably not succeed.
 \begin{prop} \label{prop:nogo1}
    For any $m \geq 5$, there are functions $f \in \calR(m)$ that cannot be uniformly approximated only using functions as
    $\chi(\sum_{i \in [m]} \varrho(z_i) )$ for $\chi$ and $\varrho$ rotation equivariant. 
\end{prop}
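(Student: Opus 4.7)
The plan is to produce two point clouds $Z, Z'\in\C^m$ lying in distinct $SO(2)\times S_m$-orbits on which every function of the form $\chi(\sum_i \varrho(z_i))$ (with $\chi$, $\varrho$ rotation equivariant) must take the same value, and then to exhibit an $f\in\calR(m)$ that separates them; since $f$ is continuous, no uniform approximation of $f$ by such expressions can then exist. The first step is a structural description of $\varrho$: interpreting $\R^K$ as $K/2$ copies of the standard $SO(2)$-representation $\C$, the equation $\varrho(\theta z)=\theta\varrho(z)$ applied with $\theta = z/|z|$ forces $\varrho_k(z) = z\,g_k(|z|)$ componentwise for some continuous $g_k$, and in particular $\varrho(0)=0$. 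Hence
\[
    \sum_{i\in[m]} \varrho(z_i) = \Bigl(\,\sum_i z_i\, g_k(|z_i|)\,\Bigr)_k
\]
depends on $Z$ only through the $\C$-valued measure $\mu_Z := \sum_i z_i\,\delta_{|z_i|}$ on $[0,\infty)$, so any admissible approximator takes the same value on $Z$ and $Z'$ whenever $\mu_Z=\mu_{Z'}$.

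The second step is to produce such a pair. For $\alpha \in (0,\pi/2)$ set
\[
    Z_\alpha := \bigl(1,\; e^{i\alpha},\; -e^{i\alpha},\; 0,\ldots,0\bigr) \in \C^m.
\]
All nonzero entries have modulus $1$ and sum to $1$, so $\mu_{Z_\alpha} = \delta_1$ independently of $\alpha$. A short case analysis — matching the unpaired nonzero entry of $\theta Z_\alpha$ with $1 \in Z_\beta$ — shows that $Z_\alpha$ and $Z_\beta$ lie in distinct $SO(2)\times S_m$-orbits whenever $\alpha \neq \beta$ in $(0,\pi/2)$. To separate them inside $\calR(m)$, I would use
\[
    f(Z) := \bigl(\,\textstyle\sum_i z_i^{\,2}\,\bigr)\cdot \overline{\bigl(\,\textstyle\sum_j z_j\,\bigr)},
\]
which is permutation invariant and transforms as $f(\theta Z) = \theta^2\bar\theta\, f(Z) = \theta f(Z)$, hence lies in $\calR(m)$; direct computation gives $f(Z_\alpha) = 1 + 2e^{2i\alpha}$, which separates distinct $\alpha \in (0,\pi/2)$.

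Putting the pieces together: if $f_n := \chi_n(\sum_i \varrho_n(z_i))$ converges uniformly to $f$ on any set containing $Z_\alpha$ and $Z_\beta$, then passing to the limit in the equality $f_n(Z_\alpha) = f_n(Z_\beta)$ forces $f(Z_\alpha) = f(Z_\beta)$, contradicting the computation above. The step I expect to require the most care is the structural characterization of $\varrho$: under the reading in which $\R^K$ carries only the standard rotation action — so that each component of $\varrho$ has rotation-degree one — the description is immediate. If higher-degree irreducibles were permitted in the target, moments such as $\sum_i z_i^{\,n} g(|z_i|)$ with $n\geq 2$ would become accessible and the counterexample above would need to be refined; but under the natural reading of the statement, the argument works for every $m \geq 3$, and in particular for every $m \geq 5$.
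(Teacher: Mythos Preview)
Your argument is correct and in fact cleaner than the paper's. Both proofs begin with the same observation that rotation equivariance forces $\varrho(z)=z\,\nu(|z|)$ componentwise, so that the approximators factor through a coarse invariant of the cloud; after that, the two constructions diverge. The paper restricts to \emph{balanced} clouds (all points of equal modulus and summing to zero), on which the approximators are forced to the constant $\chi(0)$, and then works rather hard to exhibit a function in $\calR(m)$ that is non-constant on such clouds: it uses $f(Z)=\sum_{i<j}a(|z_i-z_j|)\sum_{k\notin\{i,j\}}z_k$ with a carefully placed bump function $a$, and constructs by hand a balanced cloud on the unit circle with a single prescribed pairwise distance --- this is where the constraint $m\geq 5$ enters. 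Your approach instead identifies the relevant invariant as the complex measure $\mu_Z=\sum_i z_i\,\delta_{|z_i|}$, produces an explicit one-parameter family $Z_\alpha=(1,e^{i\alpha},-e^{i\alpha},0,\dots,0)$ with constant $\mu_{Z_\alpha}=\delta_1$, and separates the family with the simple polynomial $f(Z)=\bigl(\sum_i z_i^2\bigr)\overline{\sum_j z_j}$. This is both shorter and sharper: it works already for $m\geq 3$. One small remark: the orbit check you sketch is not actually needed, since the inequality $f(Z_\alpha)\neq f(Z_\beta)$ together with $f_n(Z_\alpha)=f_n(Z_\beta)$ already yields the contradiction (and, since $|f(Z_\alpha)|$ varies with $\alpha$, the orbit distinctness follows a posteriori from $f\in\calR(m)$). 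Your closing caveat about higher-weight targets is well taken and matches the paper's implicit reading of ``rotation equivariant''.
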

The technical proof is given in Section~\ref{sec:nogoPN} in the supplementary material. An idea for a design is instead given by the following theorem.
\begin{theo} \label{th:denseset}
    The set of functions on the form \begin{align}
    f(Z) =  \sum_{i \in [m] } \gamma(\tau_i^*Z)z_i, \label{eq:repr}\end{align} where $\gamma$ is an arbitrary continuous, rotation invariant and $\Stab(0)$-invariant function,
    is dense in $\calR(m)$.
\end{theo}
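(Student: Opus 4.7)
My plan is to first verify that every function of the prescribed form already lies in $\calR(m)$, and then to show, for any $f\in\calR(m)$, that one can construct a sequence of such functions converging to $f$ uniformly on compact sets. The forward inclusion is a direct computation: rotation equivariance is immediate from rotation invariance of $\gamma$, and permutation invariance is obtained by reindexing $i\mapsto \sigma(j)$ and writing $\tau_{\sigma(j)}\sigma = \rho\,\tau_j$ with $\rho = \tau_{\sigma(j)}\sigma\tau_j \in \Stab(0)$, so that $\Stab(0)$-invariance of $\gamma$ absorbs the $\rho$.

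For density, the key insight is to guess $\gamma$ by solving $f(Z) = \sum_i \gamma(\tau_i^*Z)\,z_i$ directly. Since $[\tau_i^*Z]_0 = z_i$ and $f(\tau_i^*Z)=f(Z)$ by permutation invariance, the natural Ansatz is
\begin{align*}
\gamma(Z) = \frac{f(Z)\,\overline{z_0}}{\norm{Z}^2},
\end{align*}
which gives $\gamma(\tau_i^*Z) = f(Z)\,\overline{z_i}/\norm{Z}^2$ and $\sum_i \gamma(\tau_i^*Z)\,z_i = f(Z) \sum_i |z_i|^2 / \norm{Z}^2 = f(Z)$. Rotation invariance of $\gamma$ is immediate from the $SO(2)$-equivariance of $f$ combined with the conjugation of $z_0$, and $\Stab(0)$-invariance follows because such permutations fix both $z_0$ and $\norm{Z}$ while $f$ is $S_m$-invariant.

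The only defect of this $\gamma$ is that the formula need not extend continuously at $Z=0$, even though $f(0)=0$ is forced by rotation equivariance. I would repair this by picking a continuous cutoff $\psi_\epsilon\colon [0,\infty)\to[0,1]$ vanishing on $[0,\epsilon/2]$ and equal to $1$ on $[\epsilon,\infty)$, and replacing $f$ by $f_\epsilon(Z) := \psi_\epsilon(\norm{Z})\,f(Z)$. Since $\psi_\epsilon(\norm{Z})$ is $SO(2)\times S_m$-invariant, $f_\epsilon$ still lies in $\calR(m)$ and vanishes identically near $0$; the corresponding $\gamma_\epsilon(Z) := f_\epsilon(Z)\,\overline{z_0}/\norm{Z}^2$, set to $0$ at the origin, is then continuous everywhere, inherits both required invariances, and satisfies $\sum_i \gamma_\epsilon(\tau_i^*Z)\,z_i = f_\epsilon(Z)$. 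On any compact $K\subset\C^m$, continuity of $f$ together with $f(0)=0$ yields $f_\epsilon\to f$ uniformly on $K$ as $\epsilon\to 0$, completing the density argument. I expect this continuity fix at the origin to be the only real obstacle; everywhere else the representation is exact.
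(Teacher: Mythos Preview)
Your argument is correct. The forward inclusion is handled cleanly, and your density construction via
\[
\gamma(Z)=\frac{f(Z)\,\overline{z_0}}{\norm{Z}^2}
\]
together with the cutoff $\psi_\epsilon$ is valid: $\gamma_\epsilon$ is continuous (it vanishes identically on the ball $\norm{Z}<\epsilon/2$), inherits both invariances, gives $\sum_i\gamma_\epsilon(\tau_i^*Z)z_i=f_\epsilon(Z)$ exactly, and $f_\epsilon\to f$ uniformly on compacts since $f(0)=0$.

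Your route is genuinely different from the paper's. The paper first reduces to rotation-equivariant, permutation-invariant \emph{real polynomials} (via a separate density lemma and an averaging/symmetrization step), and then algebraically decomposes each symmetrized monomial term $\sum_{\pi\in S_m} Z^{\pi^*\alpha}\overline{Z}^{\pi^*\beta}$ with $\abs{\alpha}=\abs{\beta}+1$ by peeling off one $z$-factor and splitting the sum over $S_m$ according to $\pi(0)$; the resulting $\gamma$ is itself a polynomial. Your argument bypasses the polynomial machinery entirely and produces, for each $f\in\calR(m)$, an explicit $\gamma_\epsilon$ directly from $f$. This is more elementary and in fact shows more: away from the origin, every $f\in\calR(m)$ is \emph{exactly} of the stated form, not merely approximable by such. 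The paper's polynomial decomposition, on the other hand, yields $\gamma$'s that are themselves polynomial---a structural fact which, while not strictly needed later (the universality proof of the architecture appeals to Stone--Weierstrass independently), does make the connection to the algebraic generators of the invariant ring more transparent.
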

We remind the reader that $\tau_i$ is the transposition of $0$ and $i$. The proof of Theorem~\ref{th:denseset}, which rests upon the density of polynomials and algebraic manipulations of them, is found  in Section \ref{sec:dense} in the supplementary material. Let us here instead concentrate on intuitively explaining it.
 
It is fruitful to interpret the values $(\gamma(\tau_i^*Z))_{i \in [m]}$ as scaled rotations $c_i\theta_i$, with $c_i \in \R$ and $\theta_i \in \S$. Considering this, \eqref{eq:repr} can be interpreted as a \emph{weighted centroid} of the point cloud, where each point can be individually rotated prior to calculation of the weighted centroid.

To calculate the rotation invariant weight $\gamma(\tau_i^*Z)$ for point $z_i$, we are allowed to examine the entire cloud, and not only $z_i$. Hence, \eqref{eq:repr} can be interpreted as an attention mechanism 
(compare, e.g., \cite{sun-cvpr-2020, fuchs2020se3transformers, xuShowAttendTell2015, xieAttentionalShapeContextNetPoint2018, vaswaniAttentionAllYou2017, jaderberg2015,lee2019set})
-- when calculating `its' weight, $z_i$ can attend to all other points in the network. It does not however do so in an arbitrary fashion: when calculating $\gamma(Z)$, because of the $\Stab(0)$-invariance, the point $z_0$ takes a special role, but the collective $(z_i)_{i \geq 1}$ is treated like a set. In the vector $\tau_i^*Z$, the special, first, position is occupied by $z_i$. Hence, when $z_i$ calculate `its' weight, it is allowed to attend to its own position $z_i$ and to the positions of the rest of the points  $(z_j)_{j \neq i}$ as a set. Finally, note that the weight calculation function $\gamma$ is shared by all the points.

\subsection{A universal architecture for \texorpdfstring{$\calR(m)$}{R(m)}} 

\begin{figure*}
\centering
\fbox{\includegraphics[width=0.45\textwidth]{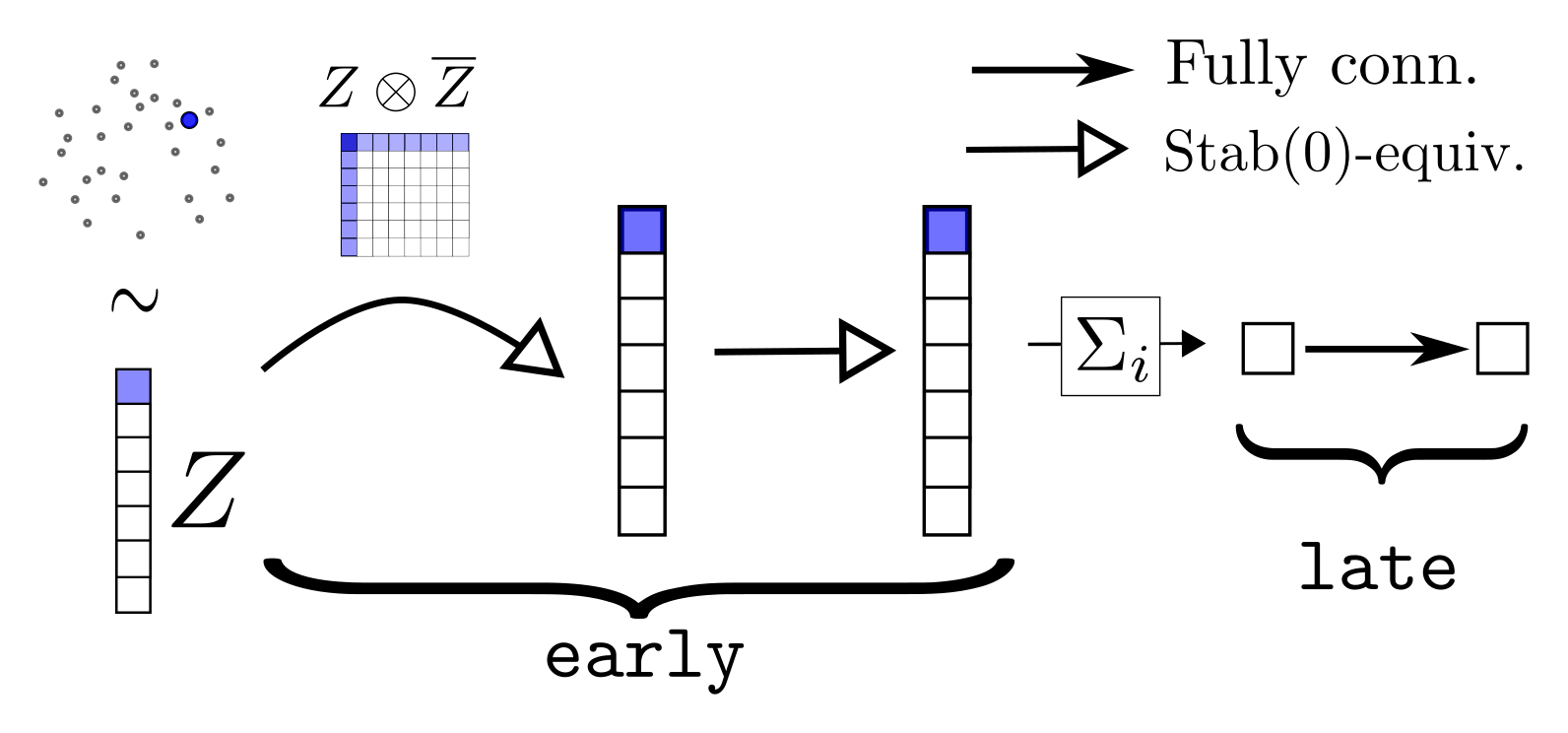}} \quad  \fbox{\includegraphics[width=0.45\textwidth]{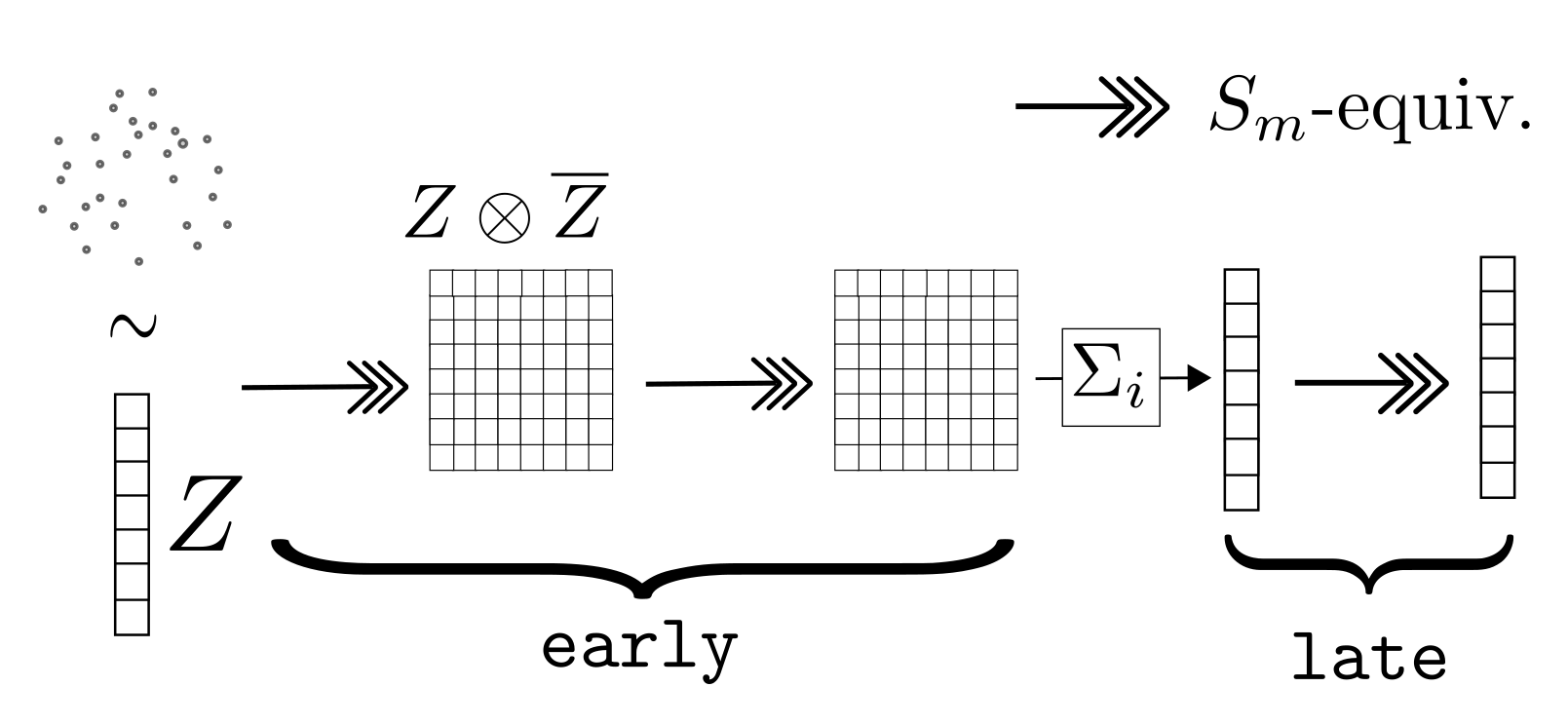}}
\caption{The $\calNS(m)$ (left) and $\calNS^+(m)$ (right) architectures. Note that one of the points take on a special role in the left architecture, whence the highlightings. \label{fig:NS} }
\end{figure*}
We now describe how a neural network for approximation of functions in $\calR(m)$ can be built. In the light of Theorem \ref{th:denseset}, we should design a weight unit $\alpha: \C^m \to \C$ which is invariant to both rotations and $\Stab(0)$-permutations, approximating the function $\gamma$. As for the rotation invariance, we propose to let the network simply act on the tensor $Z \otimes \overline{Z}=(z_i \overline{z}_j)_{i,j\in [m]}$
instead of $Z$ -- since $Z \otimes \overline{Z}$ is invariant to rotations of the network, the entire network will then automatically also be.
Note that the real part of $Z\otimes \overline{Z}$ is the Gram matrix of scalar products $(\sprod{z_i,z_j})_{i,j \in [m]}$, where we see the $z_i$ as vectors in $\R^2$.  This strategy hence has clear connections to \cite{Xu_2021_ICCV}, which uses \emph{sorted} Gram matrices of local neighbourhoods.
Compared to them, we apply a different way of handling the $\Stab(0)$-invariance. We follow a canonical design idea for equivariant networks -- first alternately apply equivariant linear layers and pointwise nonlinearities, add an invarizing step, and thereafter apply fully connected layers.
We denote the resulting set of neural networks $\calNS(m)$.\footnote{
$\cal{N}$ for \emph{network} and $\mathcal S$ for \emph{stabilizer}.}
In the following closer description, `linear layer' always refer to a real-linear layer with bias term.

The $\calNS(m)$ architecture is constructed as follows (cf.\ Figure~\ref{fig:NS}):

{\bf Early layers.} The very first layer consists of applying a $\Stab(0)$-equivariant linear layer $$B_0: (\C^m)^{\otimes 2} \to (\C^m)^{\ell_1}$$ to $Z\otimes \overline{Z}$. \textcolor{black}{Here, as in the following, $\ell_j$ refers to the number of channels in layer $j$.} Then, a nonlinearity $\rho: \C \to \C$ is applied pointwise, i.e., $\rho(X)_i= \rho(x_i)$. Concretely, we use a standard activation function separately applied to real- and imaginary parts.
 
 Subsequently, $L$ $\Stab(0)$-equivariant layers $$B_i~:~(\C^m)^{\ell_i}\to (\C^m)^{\ell_{i+1}}$$ are applied in alternation with a pointwise linearity $\rho: \C \to \C$. The final output of the early layers is a multivector $V\in (\C^m)^{\ell_L}$

 {\bf Invarization step.} Next, we calculate $v=~~\sum_{i \in [m]}v_i~$. Note that this transforms the $\Stab(0)$-equivariant multivector $V$ into a $\Stab(0)$-invariant multiscalar $v$. In fact, we could here instead apply any $\Stab(0)$-invariant functional, but we concentrate on summation for simplicity.

{\bf Late layers.} Finally, a number of fully connected layers are applied to $v$.

\

 Importantly, the very first linear layer maps into a space of multivectors, rather than multitensors. This saves a significant amount of memory compared to letting all early layers handle multitensors, which would be the naive way to process the tensor $Z\otimes \overline{Z}$. In fact, it is even possible to apply the first layer without explicitly calculating $Z\otimes \overline{Z}$ -- see Section~\ref{sec:spanning sets} of the supplementary material.

When implementing $\calNS(m)$, one of course needs a way to parametrize the $\Stab(0)$-equivariant linear layers. In Section \ref{sec:linearlayers} of the supplementary material, heavily building on the results from \cite{maron2018invariant} about permutation equivariant linear maps, we provide such a parametrization. It is not needed to know this construction in order to follow the rest of the paper. Let us just note that the number of parameters needed to describe each input-output-channel pair is independent of $m$ (just as for the permutation invariant layers in \cite{maron2018invariant}).  \newline 

In order to build a provably universal architecture for $\calR(m)$, it turns out that the above approximation of the $\gamma$-function is not enough. We instead need to add another component, a  `vector unit' $\psi: \C \to \C$ acting on the individual points $z_i$. These units use fully connected \emph{complex-linear} linearities \emph{without bias} and complex $\mathrm{ReLU}$s $\rho_\C$,
\begin{align*}
    \rho_\C(z,\eta) = \mathrm{ReLU}(\abs{z}-\eta)\tfrac{z}{\abs{z}}
\end{align*}
as nonlinearities. Here, $\eta \in \R_+$ is a learnable parameter, and $\mathrm{ReLU}$ is the real ReLU. Note that $\rho_\C$ is rotation equivariant. Since the complex-linear maps also are, the entire $\psi$-unit automatically becomes rotation equivariant.
Let us call the set of such rotation equivariant networks $\calNC$.\footnote{
$\calN$ for \emph{network} and $\calC$ for \emph{complex}. }

Using $\alpha$-units from $\mathcal{NS}(m)$ and $\psi$-units in $\calNC$, we may now build a set $\calNR(m)$\footnote{$\calN$ for \emph{network} and $\calR$ for \emph{rotation}.} of rotation equivariant, permutation invariant $\Psi$ networks through
\begin{align} \label{eq:NR}
    \Psi(Z) = \sum_{i \in [m]} \alpha(\tau_i^*Z)\psi(z_i).
\end{align}
Our main result is now that this architecture is universal for $\calR(m)$.

\begin{theo}\label{th:universality}
    $\calNR(m)$ is universal for $\calR(m)$.
\end{theo}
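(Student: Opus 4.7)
The plan is to combine the representation result (Theorem~\ref{th:denseset}) with separate universal approximation statements for the two building blocks $\alpha$ and $\psi$, and then control the total error by the triangle inequality. Throughout, ``universal'' is read as dense in the topology of uniform convergence on compact sets. I fix $f \in \calR(m)$, a compact $K \subset \C^m$ and $\epsilon>0$, and use Theorem~\ref{th:denseset} to choose a continuous, rotation- and $\Stab(0)$-invariant $\gamma$ such that $g(Z) = \sum_{i \in [m]} \gamma(\tau_i^*Z)\,z_i$ approximates $f$ to within $\epsilon/2$ on $K$. It then suffices to approximate $\gamma$ by a $\calNS(m)$ unit and the identity $z\mapsto z$ by a $\calNC$ unit.

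The $\psi$-side is easy: taking a single complex-linear layer with identity matrix followed by $\rho_\C(\cdot,0)$ yields $\rho_\C(z,0)=\mathrm{ReLU}(|z|)\,z/|z|=z$, so the identity map lies in $\calNC$ and no approximation error arises from $\psi$. The heart of the proof is the $\alpha$-side, namely showing that $\calNS(m)$ can uniformly approximate any continuous $SO(2)$-invariant and $\Stab(0)$-invariant function $\gamma:\C^m\to\C$ on compact sets. I split this into two sub-steps. First, I verify that $\mu: Z \mapsto Z\otimes\overline{Z}$ separates $SO(2)$-orbits in $\C^m$: if $\mu(Z)=\mu(Z')$ and some $z_i\neq 0$, then $|z_i|=|z_i'|$ gives $z_i'=\theta z_i$ for some $\theta\in\S$, and comparing off-diagonal entries $z_j\overline{z_i}=z_j'\overline{z_i'}$ forces $z_j'=\theta z_j$ for all $j$. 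Hence $\gamma$ factors as $\tilde\gamma\circ\mu$ on $K$, and $\Stab(0)$-invariance of $\gamma$ transfers to $\tilde\gamma$ on $\mu(K)$, which is compact in $(\C^m)^{\otimes 2}$. After extending $\tilde\gamma$ to a continuous $\Stab(0)$-invariant function on a neighbourhood of $\mu(K)$ by Tietze followed by averaging over $\Stab(0)$, it remains to approximate this $\Stab(0)$-invariant function by the ``$\Stab(0)$-equivariant layers + sum-pooling + MLP'' architecture acting on $(\C^m)^{\otimes 2}$.

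This last universality statement is a $\Stab(0)$-analogue of the DeepSets/Maron et~al.\ theorems. I would prove it by invoking the explicit parametrization of $\Stab(0)$-equivariant linear maps provided in the supplementary material to argue that the early layers, interleaved with pointwise nonlinearities, can reproduce a separating family of $\Stab(0)$-equivariant polynomials in the tensor entries; summation turns these into a separating family of $\Stab(0)$-invariants, after which a standard MLP universality argument combined with a Stone--Weierstrass closure finishes the job. This is the step I expect to be the main obstacle, since one must check both that the spanning set is rich enough to separate $\Stab(0)$-orbits in $(\C^m)^{\otimes 2}$ and that the resulting invariant features can be composed with a single MLP to approximate $\tilde\gamma$ uniformly.

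To assemble the global approximation, set $M=\max_{Z\in K,\,i\in [m]}|z_i|$ and choose $\alpha\in\calNS(m)$ with $|\alpha(\tau_i^*Z)-\gamma(\tau_i^*Z)|<\epsilon/(2mM)$ uniformly in $i\in[m]$ and $Z\in K$; this is possible because $\bigcup_i \tau_i^*(K)$ is still compact and $\calNS(m)$ is dense on it. Defining $\Psi(Z)=\sum_{i\in[m]}\alpha(\tau_i^*Z)\,z_i\in\calNR(m)$, the estimate
\begin{align*}
|g(Z)-\Psi(Z)| \le \sum_{i\in[m]}|\gamma(\tau_i^*Z)-\alpha(\tau_i^*Z)|\,|z_i| < \epsilon/2
\end{align*}
combined with $|f-g|<\epsilon/2$ on $K$ yields $|f-\Psi|<\epsilon$, completing the proof.
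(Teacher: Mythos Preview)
Your overall strategy---reduce via Theorem~\ref{th:denseset} to approximating $\gamma$, then apply a Stone--Weierstrass argument---matches the paper's, but there is a genuine gap at the $\calNS(m)$-universality step, and it is precisely the gap that the vector unit $\psi$ is designed to close. You assert that $\calNS(m)$ approximates any continuous rotation- and $\Stab(0)$-invariant $\gamma$ uniformly on an arbitrary compact set, and you trivialize $\psi$ to the identity. Neither is what the paper does, and the first assertion is in fact false. The very first layer of an $\calNS(m)$-net is a $\Stab(0)$-equivariant linear map $B_0:(\C^m)^{\otimes 2}\to(\C^m)^{\ell_1}$; inspecting the basis of $\calL_0(2,1)$ in Section~\ref{sec:spanning sets}, one sees that on inputs $Z\otimes\overline{Z}$ with $z_0=0$ and $\sum_i z_i=0$ the only nonvanishing basis outputs are $(\sum_i|z_i|^2)e_0$, $(\sum_i|z_i|^2)\one$ and $(|z_i|^2)_{i\in[m]}$. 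Hence every $\calNS(m)$ network depends, on that slice, only on the norms $(|z_i|)_i$. For $m\geq 5$ there are pairs of clouds with $z_0=0$, $\sum_i z_i=0$ and identical norm profiles that are not in the same $SO(2)\times\Stab(0)$-orbit (e.g.\ $(0,1,i,-1,-i)$ versus $(0,1,e^{i\pi/3},-1,-e^{i\pi/3})$), so the invariants produced by $\calNS(m)$ cannot separate them and your Stone--Weierstrass argument cannot close on the full compact set $\bigcup_i\tau_i^*(K)$. Your appeal to ``a $\Stab(0)$-analogue of DeepSets/Maron et al.'' also overlooks that the architecture immediately drops from order-$2$ to order-$1$ tensors; the tensor-network universality results do not cover this restricted design.

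The paper's remedy is exactly the mechanism you discarded. Lemma~\ref{lem:weightsdense} only shows that $\calNS(m)$ is dense on $C_\epsilon^m=\{|z_0|>\epsilon\}$, where the obstruction above disappears because one may divide by $z_0$ in the separation argument. The vector unit is then taken not as the identity but as the function $s\in\calNC$ of Lemma~\ref{lem:vectordense}, which vanishes for $|z|<\epsilon$ and satisfies $|s(z)|\leq|z|$. In the final estimate the sum is split into $\{|z_i|<\epsilon\}$, $\{\epsilon\leq|z_i|<2\epsilon\}$ and $\{|z_i|\geq 2\epsilon\}$: on the last region $s(z_i)=z_i$ and $\alpha\approx\gamma$ by Lemma~\ref{lem:weightsdense}; on the first region $s(z_i)=0$ kills the possibly uncontrolled $\alpha(\tau_i^*Z)$, leaving only the bounded $\gamma(\tau_i^*Z)z_i$; the middle region is handled by smallness of $|z_i|$. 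Your assembly step, by contrast, requires $|\alpha(\tau_i^*Z)-\gamma(\tau_i^*Z)|$ to be small uniformly in $Z\in K$ even when $z_i=0$, which cannot be obtained, and with $\psi=\mathrm{id}$ you have no way to absorb that error.
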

\begin{proof}[Proof Sketch] The entire proof is too long to present here, and is postponed to Section \ref{sec:universal} of the supplementary material. Let's however sketch it.

{\bf Step 1: Universality of $\calNS(m)$}. First, one proves that for any $\epsilon>0$, $\calNS(m)$ is dense in the set of $\Stab(0)$- and rotation invariant function \emph{when restricting to point clouds with} $\abs{z_0}> \epsilon$. Intuitively, we apply the  Stone-Weierstrass Theorem \cite[7.32]{rudin} to show that $\alpha$ can approximate any function of the form $\phi(\abs{z_0}^2,z_0\overline{Z})$, where $\phi$ is permutation invariant with respect to the second argument. Since we are only concerned with the case of $\abs{z_0}\neq 0$, the map $Z \to (z_0,z_0\overline{Z})$ is injective. From that, we obtain the claim.

{\bf Step 2: Universality of $\calN\calR(m)$.} The first step shows that for any fixed $\epsilon>0$, $\alpha$ can be chosen so that $\alpha(\tau_i^*Z) \approx \gamma(\tau_i^*Z)$ as long as $\abs{z_i}>\epsilon$. However, since the product $\gamma(\tau_i^*Z)\cdot z_i$ is small if $\abs{z_i}<\epsilon$, we can still achieve a good approximation anywhere. 
This is the technical reason for the inclusion of the  vector unit -- it can eliminate any problem with large  $\alpha(\tau_i^*Z)$-values when $z_i$ is small.
\end{proof}

\section{Modifications of the universal architecture}

Although the architecture in the previous section is universal, we will modify it in a number of ways  before using them for our experiments.
\subsection{A richer, parallel architecture}

In the $\calNR(m)$-nets, note that each permuted version $\tau_i^*Z$ of the cloud is sent through the $\alpha$-unit individually. It would intuitively be better to calculate all weights in parallel, and in that process let the weight values `communicate' with each other. A simple way to achieve this is the following modification, 
which we denote $\calNS^+(m)$.

\underline{The $\calNS^+(m)$ architecture} consists of the following: (see also Figure~\ref{fig:NS}).

{\bf Early layers.} Apply an $S_m$-equivariant linear layer $$B_0^+~:~(\C^m)^{\otimes 2}\to((\C^m)^{\otimes 2})^{\ell_1}$$ to $Z \otimes \overline{Z}$. Subsequently, apply, in alternation, a point-wise non-linearity and $S_m$-equivariant layers $$B_i^+~:~((\C^m)^{\otimes 2})^{\ell_i}\to((\C^m)^{\otimes 2})^{\ell_{i+1}}.$$ The final output of the early layers is then a multitensor $T\in((\C^m)^{\otimes 2})^{\ell_L}$.

{\bf Invarization step.} Next, $V~=~(\sum_{j \in [m]}T_{ij})_{i \in [m]}$ is calculated, which transforms the $S_m$-equivariant multi{-}tensor $T$ to an $S_m$-equivariant multivector $V$.

{\bf Late layers.} Now apply, in alternation, $S_m$-equivariant layers 
  $ C_i : (\C^m)^{\ell_{L+i}} \to (\C^m)^{\ell_{L+i+1}}$
and pointwise non-linearities. The final network output is 
$\alpha^+(Z)\in \C^m$.

\

We also modify the architecture for calculating the $\psi$-units: We still use $\rho_\C$ as the non-linearity and apply $\C$-linear layers, however $S_m$-equivariant such to the entire cloud $Z$. The final output of such networks is thus a vector $\psi^+(Z)\in \C^m$. The set of these networks are called $\calNC^+$.

Given an $\alpha^+\in \calNS^+(m)$ and a $\psi^+ \in \calNC^+$, we now build a network $\Psi^+$ through
\begin{align*} 
    \Psi^{+}(Z)  = \sum_{i \in [m]} \alpha^+(Z)_i \cdot \psi^+(Z)_i
\end{align*}
Let us denote the set of these networks $\mathcal{NR}^+(m)$. These networks are still equivariant, and are at least as expressive as the non-modified ones. 
\begin{prop}\label{prop:NRplus}
(i) The new architecture has the correct equivariance, i.e., $\mathcal{NR}^+(m) \sse \mathcal{R}(m)$. \\
(ii) The new architecture is at least as expressive as the non-modified, i.e., $\mathcal{NR}(m) \sse \mathcal{NR}^+(m)$.
\end{prop}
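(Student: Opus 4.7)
The proposition splits naturally into two parts, which I address in turn.

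Part~(i) is a direct verification of the equivariance properties. The map $\alpha^+$ is $S_m$-equivariant (it is built from $S_m$-equivariant layers) and rotation-invariant (its input is $Z\otimes\overline{Z}$, which is rotation-invariant, and every subsequent layer is real-linear or a pointwise real nonlinearity). The map $\psi^+$ is $S_m$-equivariant (by construction) and rotation-equivariant (its layers are $\C$-linear without bias interleaved with the rotation-equivariant nonlinearity $\rho_\C$). Substituting $\theta\pi^*Z$ into $\Psi^+(Z)=\sum_i\alpha^+(Z)_i\psi^+(Z)_i$ and reindexing the sum then gives $\Psi^+(\theta\pi^*Z)=\theta\Psi^+(Z)$, so $\Psi^+\in\calR(m)$.

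For Part~(ii), take an arbitrary $\Psi\in\calNR(m)$ of the form $\Psi(Z)=\sum_i\alpha(\tau_i^*Z)\psi(z_i)$ with $\alpha\in\calNS(m)$ and $\psi\in\calNC$. The plan is to construct $\alpha^+\in\calNS^+(m)$ and $\psi^+\in\calNC^+$ satisfying $\alpha^+(Z)_i=\alpha(\tau_i^*Z)$ and $\psi^+(Z)_i=\psi(z_i)$; then $\Psi^+=\Psi$, giving the inclusion. The $\psi^+$ part is immediate: replace each $\C$-linear map $\C^\ell\to\C^{\ell'}$ in $\psi$ by its ``diagonal'' version that applies the same map at each position in $\C^m$, which is $S_m$-equivariant and $\C$-linear without bias, and keep $\rho_\C$ (already pointwise) as the nonlinearity.

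For $\alpha^+$ I rely on the following reduction: if $\alpha^+$ is $S_m$-equivariant and satisfies $\alpha^+(Z)_0=\alpha(Z)$ for every $Z$, then equivariance yields $\alpha^+(Z)_i=\alpha^+(\tau_i^*Z)_0=\alpha(\tau_i^*Z)$ for free. So it suffices to build a network in $\calNS^+(m)$ whose $0$-th output entry computes $\alpha$. I do this layer by layer, preserving the invariant ``the $0$-th row of the current tensor/vector matches the corresponding intermediate state of $\alpha$.'' Any $\Stab(0)$-equivariant basis element for a linear layer of $\calNS(m)$, described by its dependence on the distinguished index pair $\{0,j\}$, has a natural $S_m$-equivariant lift, described by the distinguished pair $\{i,j\}$, that reduces to the original when $i=0$. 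Pointwise nonlinearities preserve the invariant trivially; the invarization step $V_i=\sum_j T_{ij}$ at $i=0$ reproduces the invarized multiscalar of $\alpha$; and fully-connected late layers of $\alpha$ are realized by the $S_m$-equivariant basis element ``apply the same $\C$-linear map at every index,'' with all the ``global-sum'' basis elements set to zero.

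The main technical obstacle I expect is carrying out this lift rigorously for each of the finitely many basis elements that parametrize the equivariant linear layers (cf.\ Section~\ref{sec:linearlayers} of the supplementary, and \cite{maron2018invariant}). The verification is essentially bookkeeping, exploiting the structural parallel between the two groups: a $\Stab(0)$-equivariant basis element is characterized by how each summation index relates to $0$ and to the output index $j$, while its $S_m$-equivariant lift is characterized by how each summation index relates to the two distinguished indices $i$ and $j$; replacing ``$0$'' by the free index ``$i$'' gives the lift, and setting $i=0$ recovers the original.
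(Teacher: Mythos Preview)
Your proposal is correct and follows essentially the same route as the paper. The paper writes the lifted layers explicitly as $B_0^+(T)=\sum_{i\in[m]} e_i\otimes\tau_i^*B_0(\tau_i^*T)$ (invoking Corollary~\ref{cor:isomorphism} for the first layer and verifying $S_m$-equivariance by a direct computation for the subsequent ones), and then tracks the full invariant $\Phi_k^+(Z)=\sum_i e_i\otimes\tau_i^*\Phi_k(\tau_i^*Z)$ through the network; your reduction to the $0$-th output component via $S_m$-equivariance, and your description of the lift in terms of the partition-indexed basis (replacing the distinguished index $0$ by a free row index $i$), are equivalent reformulations of that same construction.
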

See Section \ref{sec:Nrplus} in the supplement for a proof.

 It does take more parameters to parametrize each input-output-channel pair of the linear layers in the $\calNR^+(m)$, but this can be compensated by using less input-output-channel pairs. As for the memory requirements, we have to handle 2-tensors in memory, which leads to a quadratic cost. This is worse than the $\calNR(m)$-architecture, whose memory cost is only linear. However, recall that we need to calculate $m$ values  $\alpha(\tau_i^*Z)$, $i \in [m]$ for each application of the network. If we want to parallelize those calculations, which we should do for efficiency,  we need to handle $m$ vectors, again resulting in a quadratic memory cost.

A subtle, but nonetheless reasonable, reason for using  the $\mathcal{NR}^+(m)$-architecture instead of the $\calNR(m)$ architecture is that it allows for more exchange of information between the points. As an example, notice that when calculating the weight $\alpha(\tau_i^*Z)$, each early layer in an $\calNR(m)$-net is only allowed to attend to one vector, which can be seen as a preliminary version of the vector weight. In the $\calNR(m)^+$-architecture, it is additionally allowed to attend to all the `preliminary weight vectors', i.e., the other columns of the input tensor (as a set). This arguably makes the modified architecture more versatile.

\section{Approximating functions in \texorpdfstring{$\calR_2(m)$}{R\_2(m)}}\label{sec:NR2}
In our experiments, we will actually consider tasks which take \emph{pairs} $(Z,X)$ of point clouds as indata. Thereby, we assume that for each $i$, the  points $z_i$ and $x_i$ correspond to each other, meaning that we only get invariance towards \emph{simultaneous permutations of both clouds}. The tasks we consider will be (or will be transformed into ones that are)  rotation equivariant with respect to one cloud, and rotation invariant with respect to the other. That is, we will have to approximate functions $f$  such that for every $\pi \in S_m$ and $\theta, \omega \in \mathbb{S}$, we have
\begin{align*}
    f(\theta \pi^*Z, \omega \pi^* X) = \theta f(Z,X).
\end{align*}
We denote the set of such functions $\calR_2(m)$.

We can use the same ideas as above to build an architecture for them.
We propose to use the exact same scheme, with the only difference that the very first layer $L$ of the  $\alpha$-unit depends on $Z\otimes \overline{Z}$ and $X \otimes \overline{X}$, as 
$$L(Z,X) = A(Z \otimes \overline{Z}) +B(X \otimes \overline{X}),$$
where $A$ and $B$ are linear layers of the same flavor as for $\mathcal{NR}(m)$ and $\mathcal{NR}(m)^{+}$, respectively. This yields architectures $\calNR_2(m)$ and $\calNR_2^+(m)$. In Section~\ref{sec:twoclouds} of the supplementary material, we prove that $\calNR_2(m)$ is  \emph{not} dense for the whole of $\calR_2(m)$. We however also prove that if we only consider cloud pairs $(Z,X)$ for which no points close to the origin in $X$ correspond to points far away from the origin in $Z$, we again obtain universality for both versions.

\subsection{A deeper architecture}

\begin{figure}
\centering
\fbox{ \includegraphics[width=.35\textwidth]{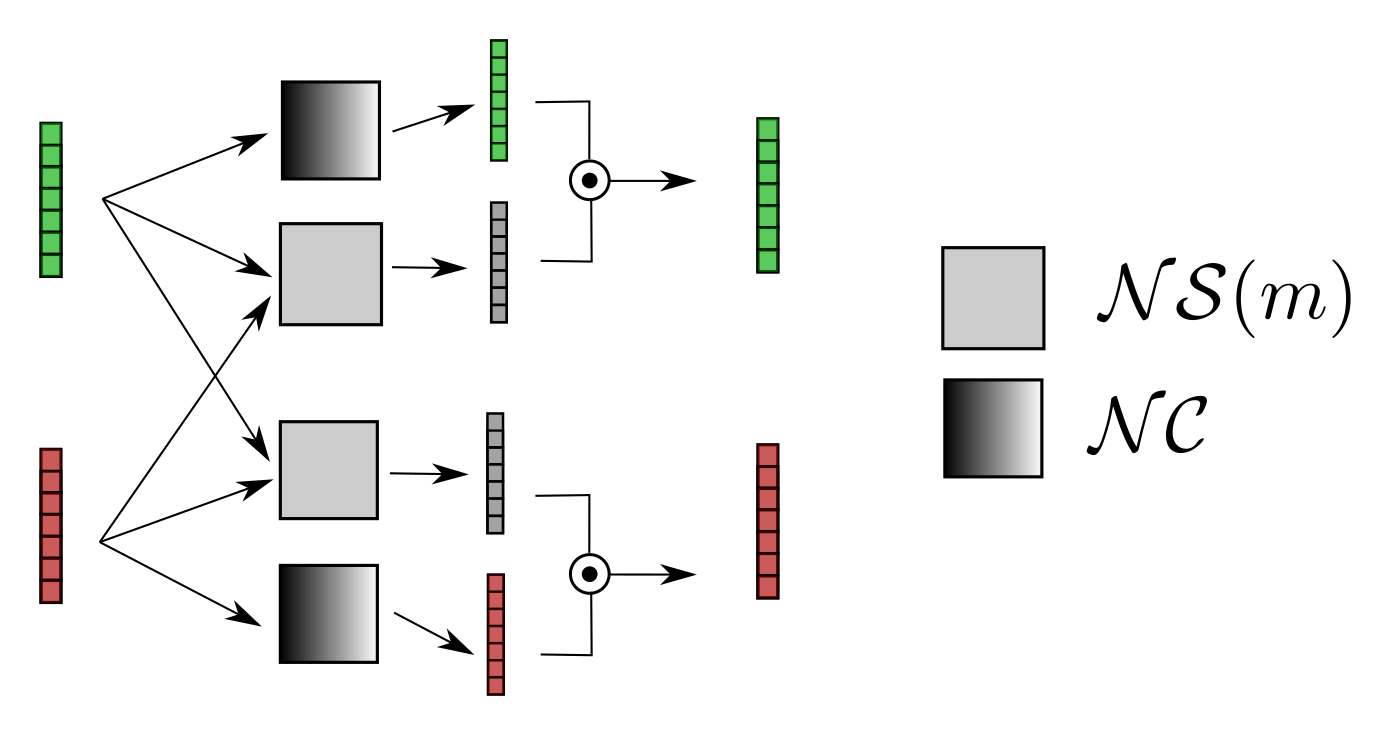}} 
 \caption{The architecture of a ZZ-unit. \textcolor{black}{Two clouds are fed into $SO(2)$-invariant weight units and $SO(2)$-equivariant vector units, and then combined to produce a new pair of clouds.} Best viewed in color. \label{fig:Rt2}}
\end{figure}

We can easily combine several weight and vector units $\alpha_k\in \calNS^+(m)$, $\psi_k \in \calNC^+$, to build an iterative architecture.  If $Z=Z^0$ is the input cloud, we iteratively define new clouds $Z^k$ through
\begin{align*}
 z^{k+1}_i &=\alpha_k^+(Z^{k})_i \cdot \psi_k^+(Z^{k})_i, \quad  
\end{align*}
$i\in [m]$. A particular case where such chains of units can be especially beneficial is the case when the cloud is filled with outliers. The weight units of early layers can then be used to filter those out, by giving the outliers small weights. They will then cluster around the origin, which can safely be ignored by later weight units. This is in spirit similar to (attentive) context normalization \cite{moo-cvpr-2018,sun-cvpr-2020}.

In the cloud pair case, we can iteratively construct new pairs of clouds by chaining pairs of weight and vector units (see Figure \ref{fig:Rt2}):
\begin{align*}
    z^{k+1}_i &=\alpha_k^+(Z^{k},X^{k})_i \cdot \psi_k^+(Z^{k})_i, \\ x^{k+1}_i &=\beta_k^+(X^{k},Z^{k})_i \cdot \phi_k^+(X^{k})_i.
\end{align*}
The final output of such a network is then a pair of scalars $(F_0(Z,X),F_1(X,Z))$, where the first scalar is equivariant to rotations in the first cloud, and invariant to rotations in the second, and vice versa. If we let $\alpha_k=\beta_k$ and $\psi_k =\phi_k$, we will even obtain a network which is equivariant to switching the pairs. This is the version 
we are using in our experiments. Since the weight-units are using tensors of the form $Z\otimes \overline{Z}$ as input, we will refer to such layers as \emph{ZZ-units}.

To obtain a rotation equivariant output of the network, we sum over $i$ in the final (respective) cloud. The set of such obtained architecture will be referred to as \emph{ZZ-nets}.

\subsection{Limitations of the architecture}

Although our architecture is provably universal, it has its limitations. 
First and foremost, it operates on tensors rather than vectors, making its memory requirement quadratic in the number of points per cloud. Secondly, all linear layers of our architecture are global in nature, which could hurt performance.

A simple way to mitigate these issues would be to let the weight units $\alpha$ only operate on the nearest neighbors to $z_i$ when calculating the weight for $i$ -- we would then return to a memory requirement which is linear in the cloud sizes, and induce locality. \textcolor{black}{However, such an architecture would not be universal.}

\section{Experiments}

Here we present two experiments to demonstrate our network in action.
Further details about the experiments are given in Section~\ref{sec:exp} of the supplementary material. Code for the experiments is available at \href{https://github.com/georg-bn/zz-net}{github.com/georg-bn/zz-net}.

\subsection{Estimating rotations between noisy point clouds}
Let us, as a proof of concept more than anything else, test our model on a toy problem: Given a point pair $(Z,X)$, estimate a rotation $R(Z,X)$ so that  $X=R(Z,X)Z$. This rotation responds to rotations of either cloud through $R(\theta Z, \omega X) = \omega \overline{\theta} R(Z,X)$. 

If $Z$ and $X$ are completely noise-free, this is of course trivial (one can e.g.\ calculate $\sfrac{z_0}{x_0}$). In order to make the problem more challenging, we consider a setting with both inlier and outlier noise.

{\bf Data.} We synthetically generate data. The details of the data generation are presented in the supplementary~\ref{sec:data_gen}. Each point cloud pair $(Z,X)$ contains $m=100$ correspondences out of which a fraction $r$ are outliers. The inliers lie on a triangle with low-level inlier noise. An example pair is shown in Figure~\ref{fig:cloudpair}.

\begin{figure}
    \centering
    \includegraphics[width=.3\textwidth]{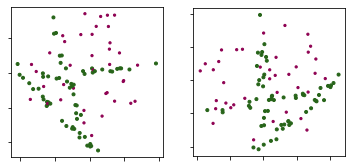}
    \caption{A pair of noisy point clouds as used in the rotation estimation experiments. The inlier points are larger and colored green for illustration purposes. Here the outlier ratio $r$ is 0.4. }
    \label{fig:cloudpair}
\end{figure}

{\bf Models.} We test two versions of our model: A `broad' and a `deep' one. The `broad' model consists of a single ZZ-unit, with $2$ early and $3$ late layers in the weight unit, and $2$ layers in the vector unit. The `deep' model consists of three ZZ-units, where each unit only has $1$ early and $2$ late weight-layer units, and $1$ vector layer, and each such layer is smaller than for the broad model. The broad unit has around $4$k, and the deep around $7$k, parameters in total.

We train a unit with weights shared, thus outputting two scalars $F(Z,X)$ and $F(X,Z)$. The final output of our model $\widehat{\theta}(X,Z) = F(X,Z)\overline{F(Z,X)} \in \C$ then responds correctly to rotations of either cloud.

For comparison, we implement two alternative  models. A PointNet and a simplified version of ACNe \cite{sun-cvpr-2020} which we call `ACNe\textminus'. They have $34$k and $11$k parameters respectively. Details about these models are presented in the supplementary~\ref{sec:comp_mod}.

{\bf Experiments.} We test each of the models on four outlier ratios: $0.4,0.6,0.8$ and $0.85$. We use an $\ell_2$-loss between the ground truth rotation and the output of the networks, and manually tune hyperparameters to optimize the mean error on the validation set. 

To evaluate the experiments, we test how many of the ground truth rotations the models can estimate within an error that corresponds to a difference $1^\circ$, $5^\circ$ and $10^\circ$ for two normalized complex numbers (note that the output of our models is not necessarily normalized), respectively. The results are presented in Table~\ref{tab:RotResults}. The broad model easily beats the PointNet model, and also the `ACNe\textminus'-model for low outlier ratios, but starts to struggle against the context-normalization based model for $r=0.85$. The deep model however easily outperforms all other models. 

We notice that some models struggled somewhat on the $r=0.8$-data set. We had to stop the broad model early due to severe overfitting, and the `ACNe\textminus' model did worse on the $0.8$-set than on the $0.85$-set. We suspect that this ultimately boils down to the fact that due to our data generation method, the actual outlier ratios are random. Therefore, the $0.8$ dataset could contain some especially hard examples just by chance.

\begin{table}
\begin{tabular}{|r || c | c | c | }
\hline
Outlier ratio & \multicolumn{3}{c|}{$r=0.4$} \\
\hline
     Threshold & $1^\circ$ & $5^\circ$ & $10^\circ$  \\
   \hline Broad ZZ-net  &   .42  & .97  & .99 \\
   Deep ZZ-net      & {\bf.85}     &  {\bf .99}  & {\bf 1.0} \\
   PointNet  & .02	& .45 & .78 \\
   ACNe\textminus& .05 &	.63 & 	.96
   \\ \hline
   \hline
Outlier ratio & \multicolumn{3}{c|}{$r=0.8$} \\
\hline
     Threshold & $1^\circ$ & $5^\circ$ & $10^\circ$  \\
   \hline Broad ZZ-net${}^\dagger$  &   .03  & .46   & .81 \\
   Deep ZZ-net     &    {\bf .32} & {\bf .90}   & {\bf.96} \\
   PointNet  & .03	& .25 & 	.54 \\
   {ACNe\textminus}  & .01	& .27 &	.69 \\ \hline
\end{tabular}
%%%%%%
\begin{tabular}{| c | c | c | }
\hline
\multicolumn{3}{|c|}{$r=0.6$} \\
 \hline
 $1^\circ$ & $5^\circ$ & $10^\circ$  \\
   \hline    .21    & .87  & .96 \\
            {\bf.84}  &  {\bf .99}  & {\bf .99} \\
           .03 & 	.34	& .67
\\
      .04	& .54 &	.90
   \\ \hline
 \hline
\multicolumn{3}{|c|}{$r=0.85$} \\
\hline
 $1^\circ$ & $5^\circ$ & $10^\circ$  \\
   \hline     .02   & .24  & .50 \\
             {\bf.11}   & {\bf.73}  & {\bf.90} \\
      .03 & 	.21 & 	.37 \\
     .02 &	.45 &	.75 \\
   \hline
\end{tabular}
\caption{Results for rotation estimation with varying outlier ratios. ${}^\dagger$This experiment was stopped early due to severe overfitting. \label{tab:RotResults}
}
\end{table}

\subsection{Essential matrix estimation} \label{sec:exp_ess}
The input in the problem is a (noisy) set of calibrated 2D-2D correspondences
$\{(p_1, p_2)\}$, where $p_1, p_2\in \R^2$ are points of interest in two images of the same object. 
The task is then to estimate the essential matrix $E\in \R^{3\times 3}$ such that 
$\tilde p_2^T E \tilde p_1 = 0$ for the (correct) correspondences.
Here, $\tilde p$ is the homogeneous representation of $p$ obtained by adding a third coordinate $1$ to $p$.
See \cite{hartleyMultipleViewGeometry2003} for an in depth description of essential matrices.
Considering the points $\{p_1\}$ as elements of $\C$ and stacking them into a vector
yields the $Z$ vector considered in earlier sections, and similar for $\{p_2\}$ and $X$.

\textbf{Rotation equivariance of $E$.} 
If $\tilde p_2^T E \tilde p_1 = 0$ for a set of correspondences $\{(p_1, p_2)\}$,
it follows that if we rotate $p_1$ by an image plane rotation $R\in SO(2)$, say to $q_1 = Rp_1$, then 
$\tilde p_2^T E \tilde R^T \tilde q_1 = 0$ where $\tilde R \in SO(3)$ is the rotation obtained by applying $R$ as a rotation around the $z$-axis. 
 Hence, $ E \tilde R^T$ is an essential matrix for the correspondences $\{(q_1, p_2)\}$.
Similarly one shows that a rotation of $p_2$ to $q_2 = Rp_2$ yields an essential matrix
$\tilde R E$ for the correspondences $\{(p_1, q_2)\}$.

The essential matrix has an SVD of the form $E = USV^T$, where $U$ and $V$ are orthogonal
and $S=\mathrm{diag}(1, 1, 0)$.
Since $E$ is only determined up to scale, we can choose $U$ and $V$ as rotation matrices.
It is then possible to decompose $U$ and $V$ into Euler rotations about the $z$- and $y$-axes:
$E = R_{z,2} R_{y,2} R_{z',2} S R_{z',1}^T R_{y,1}^T R_{z,1}^T$ where we can merge $R_{z',2}$ and 
$R_{z',1}$ as they commute with $S$.
We obtain $E = R_{z,2} R_{y,2} R_{z'} S R_{y,1}^T R_{z,1}^T$
and we have one degree of freedom for each $R$, thus five in total, as expected. 

The equivariance properties of $E$ imply that $R_{z,1}$ is equivariant to rotations in $p_1$ and $R_{z,2}$ is equivariant to rotations in $p_2$, both while being invariant to rotations of the other cloud. The other matrices are invariant to rotations in both clouds. We design the network to output
five complex numbers on the unit circle $\mathbb S$, where two of them lie in $\calR_2(m)$, and three are invariant to rotations in either cloud.

\textbf{The model: ZZ-net.}
We use a back-bone architecture $\mathcal{B}$ with three ZZ-units, the first two units having 
2 early, 2 late and 2 vector layers and the last unit having 1 early, 1 late and 1 vector layer.
We also add skip-connections between the units in the back-bone for ease of training.
This back-bone outputs 8 channels of point clouds which are fed into two further units.
One is a ZZ-unit $\mathcal{E}$ which is responsible for predicting the \emph{equivariant}
$R_{z,2}$ and $R_{z,1}$. The second is a PointNet $\mathcal{I}$ that takes as input the
$\alpha^+$-values of the last layer of the back-bone (which are rotation invariant) to predict
the \emph{invariant} $R_{y,2}$, $R_{y,1}$ and $R_{z'}$. 

To account for the symmetry of changing order of the clouds, we approximate $R_{z,1}$ with $\mathcal{E}(\mathcal{B}(Z,X))$, and $R_{z,2}$ with $\mathcal{E}(\mathcal{B}(X,Z))$. In turn,
$\mathcal{I}(\mathcal{B}(Z,X))$ yields two rotations: $R_{y,1}$ and $R_{z',1}$, while
$\mathcal{I}(\mathcal{B}(X,Z))$ yields $R_{y,2}$ and $R_{z',2}$. $R_{z',1}$ and $R_{z',2}$ are
combined to form $R_{z'} = R_{z',2} R_{z',1}^T$.  In total, the architecture thus outputs five rotations. It has around 55k parameters.

Similar to OANet~\cite{zhang2019oanet}, we use a geometric loss based on virtual matches generated from the ground truth essential matrix.
For further information on the model and training setup, see the supplementary~\ref{sec:ess_train_setup}.

\textbf{Data.} We use the subset of the YFCC100M data \cite{thomeeYFCC100MNewData2016}
corresponding to the sequence `Reichstag' compiled by \cite{heinlyReconstructingWorldSix2015}.
Two example images can be seen in Figure~\ref{fig:reichstag}.
The image sequence is processed to obtain SIFT-matches \cite{loweDistinctiveImageFeatures2004}
between image pairs using code supplied by the authors of CNe \cite{moo-cvpr-2018}.
Some image pairs are discarded due to visibility issues and for each remaining image pair
2000 correspondences are found, many of which might be incorrect matches.
The obtained dataset is quite small -- the training set consists of 3302,
the validation set of 56 and the test set of 52 point cloud pairs\footnote{
In fact half of the 3302 (resp. 56, 52) pairs correspond to the 
other half but with the two images in the pair swapped.}.
Therefore our experiments should be viewed as a limited data case study.

\begin{figure}
    \centering
    \includegraphics[width=0.8\columnwidth]{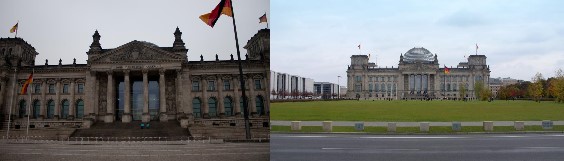}
    \caption{Two images from the `Reichstag' data.}
    \label{fig:reichstag}
\end{figure}

\textbf{Evaluation metric.}
From the essential matrix we can recover the rotation between the two views and the 
translation between the views up to scale. 
We evaluate the estimated
essential matrix in terms of the mAP score
proposed by \cite{moo-cvpr-2018}, which is a measure of error in angle of the estimated translation
and rotation axes.

\textbf{Comparisons.}
We compare against CNe~\cite{moo-cvpr-2018}, \\
OANet~\cite{zhang2019oanet} and ACNe~\cite{sun-cvpr-2020}.
These methods build on the idea of learning inlier weights for the correspondences
and using a weighted formulation of the 
8 point method~\cite{longuet-higginsComputerAlgorithmReconstructing1981}
as a final layer in the network. They are all
very good at handling outliers, as they are explicitly trained on classifying each
correspondence as an inlier or outlier as well as outputting a reasonable essential matrix.
In contrast, our network is only trained to output a reasonable essential matrix
but does it in a way that is resilient to rotations of the data, which is not
part of the other frameworks.
We do not compare against 
T-net~\cite{Zhong_2021_ICCV} as they have not published their code at the time of writing.

We retrain the implementation of the authors of CNe, OANet and ACNe on the `Reichstag' dataset.
For the sake of fairness, we do not use RANSAC at test time. Note that therefore our reported numbers
for CNe are below what they report in their paper.
CNe has 394k parameters, ACNe 400k parameters and OANet 2347k parameters.

\textbf{Rotated test data.}
To demonstrate the resilience of our method to rotation perturbations
of the data, we evaluate both on the original test data as well as versions of the
test data where the $p_1$ points are rotated a random amount (and the ground truth essential
matrix is altered correspondingly, as described earlier). We sample rotations for each test example
uniformly in the interval $(-a, a)$ and consider three different values for the 
maximum rotation angle: $a=\ang{30},\ang{60},\ang{180}$.
All methods are evaluated on the same rotated versions of the test set for consistency. 

\textbf{Results.}
We present results in Table~\ref{tab:essential20} for mAP at $\ang{20}$.
The results for our method are averaged over two training runs.
The maximum difference in mAP scores between
the two runs was 0.01. mAP scores at $\ang{10}$ and $\ang{30}$ are presented in the supplementary~\ref{sec:ess_further} and they tell a similar story.
\begin{table}
\begin{tabular}{|r || c | c | c | c |}
\hline
     Max. test rot. $a=$& $\ang{0}$ & $\ang{30}$ & $\ang{60}$  & $\ang{180}$ \\
   \hline ZZ-net (Ours)  & 0.26 & \textbf{0.26} & \textbf{0.26} & \textbf{0.26} \\
   ACNe         & \textbf{0.67} & 0.25 & 0.15 & 0.038 \\
   CNe          & 0.43 & 0.14 & 0.12 & 0.0048 \\
   OANet        & 0.42 & 0.24 & 0.077 & 0.0048 
   \\ \hline
\end{tabular}
\caption{Results for essential matrix estimation. mAP at $\ang{20}$ error in the estimated translation and rotation vectors for different values of
image plane rotations $a$ at test time.
\label{tab:essential20}}
\end{table}

\textbf{Discussion.}
Our method does not compete well on the base problem ($a=\ang{0}$).
This may in part be due to the order of magnitude fewer parameters of our network.
Note that we had to limit the number of parameters due to the quadratic memory cost of
the weight-units. We however demonstrate the resilience to rotation perturbations of \mbox{ZZ-net}.
Already at modest rotations uniformly sampled from $-\ang{30}$ to $\ang{30}$ it is on par
with the more mature competitors. At larger rotations \mbox{ZZ-net} is superior. It should however be noted that for this dataset, all images are oriented close to parallel with the ground. There is hence a clear bias in the training data, so that the comparison to the other models on artificially rotated test data is not completely fair.

We still believe that rotation equivariance can add robustness to methods attacking the essential matrix
estimation problem and regard it as an interesting future research direction to try to merge our approach
with the outlier robust methods, using for instance the weighted 8-point method.
Furthermore, it would be interesting
to develop methods which are equivariant only to small rotations -- rotations larger
than $\ang{60}$ will typically not be seen in practice. This would require leaving the mathematical framework of group theory, as such bounded rotations do not form a group.

\section{Conclusions}
We have presented a foundational framework for learning tasks based on a rotation equivariant and permutation invariant neural network architecture. A proof is given showing that this architecture is indeed universal. We have described several ways of modifying the architecture, in particular, how to extend it to pairs of point clouds as appearing in correspondence problems and how to perform efficient computations. As for limitations, the framework is only applicable in two dimensions. 
Our architecture further lacks locality and has a high memory requirement. To mitigate the latter issues are examples of interesting future work.

\section*{Acknowledgements}
The authors acknowledge support
from CHAIR, SSF, as well as 
WASP
funded by the Knut and Alice Wallenberg Foundation. 
The computations were enabled by resources provided by SNIC at C3SE.

%%%%%%%%% REFERENCES
{\small
\bibliographystyle{ieee_fullname}
\bibliography{egbib}
}

%%%%%%%%%% Supplementary

\newpage
\onecolumn

\appendix
\section{Proofs}

Here, we provide proofs, and other theoretical details, left out in the the main text.

\subsection{Spaces of point clouds}
  In the main paper, we have, in the interest of readability, intentionally refrained from being too formal. In particular, we have equated point clouds with vectors in $\C^m$ in a quite streamlined fashion. As we want to present formal proofs here, this will no longer suffice. In particular, since we are aiming to apply the Stone-Weierstrass theorem,  we will need to consider the point clouds as points in a metric space. We will therefore consider the following well-known approach (the same ideas were applied in e.g. \cite{keriven2019universal,maron2019universality}.)
\begin{defi} \label{defi:metric_spaces}For a subgroup $G\sse S_m$, let $\sim_G$ denote the equivalence relation $$Z\sim_G W \Leftrightarrow \ \exists \pi \in G, Z = \pi^*W$$ on $\C^m$. We can equip the set of equivalence classes $\C^m / \sim_G$ with the metric $$d_G(Z,W)~=~\inf_{\pi \in G} \norm{Z-\pi^*W}.$$ 
For $G=S_m$, we denote the resulting metric space $\calP^m$. For $G=\Stab(0)$, we denote it $\calP_0^m$.

On $\calP^m$ and $\calP^m_0$, we may define a further equivalence relation via $Z\sim_{\mathbb{S}} W \Leftrightarrow Z=\theta W$ for some $\theta \in \mathbb{S}$. We can again define a metric on the set of equivalence classes under this relation via 
   $$ d_{\mathbb{S}}(Z,W) = \inf_{\theta \in \mathbb{S}} d_G(Z,\theta W),$$
where $d_G$ is the metric from above. We call the resulting metric spaces $\calR\calP^m$ and $\calR\calP_0^m$
\end{defi}

In the following, we will without comment equip all spaces of continuous functions with the topology induced by the supremum norm on compact sets. If $M$ is a metric space, we let $\calC(M)$ denote the space of complex-valued continuous functions on $M$.

\begin{rem}
    (i) It is clear that permutation invariant functions $F\in \calC(\C^m)$ can be identified with functions in $\calC(\calP^m)$. If they are additionally rotation invariant, we can even identify them with functions on $\calC(\calRP^m)$. Similar statements hold for $\calC(\calP_0^m)$ and $\calC(\calRP_0^m)$.
    
    (ii) In the following, we will sometimes consider expressions in which functions defined on $\calP_0^m$, or $\C^m$, are applied to members in $Z \in \calP^m$. This is clearly in general not formally well-defined. However, in each such expression, there are other operations present which makes the object per se well defined again. For instance, $\nu_i(Z) = \abs{z_i}$ is not well defined on $\calP^m$, but $\nu(Z) = \sup_{i \in [m]} \abs{z_i}$ is. In the interest of readability, we will not comment on this in detail every time.
\end{rem}

\subsection{Proof of Proposition \ref{prop:nogo1}} \label{sec:nogoPN}

Let us begin by proving the no-go result of Proposition \ref{prop:nogo1}, stating that the most straighforward way of making the pointnet architecture rotation equivariant will not yield a universal architecture.

\begin{proof}[Proof of Proposition \ref{prop:nogo1}]
    Let us call a cloud $Z$ for which all points have the same norm and obey $\sum_{i \in [m]}z_i =0$ \emph{balanced}. We claim that every function of the form $\chi(\sum_{i \in [m]} \varrho(z_i))$ is constant on the set of balanced clouds.
    
    To see this, let us first notice that if $\varrho: \C\to \C^K$ is rotation equivariant, it must be possible to write it on the form $\varrho(z) = \nu(\abs{z})z$ for some function $\nu: \R_{+} \to \C^K$. A formal way to prove this is to notice that the function $z \mapsto \overline{z}\varrho(z)$ is rotation equivariant, and hence can only depend on the modulus of $z$.

Now, if $r$ is the common value for the norms in a balanced cloud $Z$, we have 
    \begin{align*}
        \chi \big(\sum_{i\in[m]} \varrho(z_i) \big) =\chi \big( \nu(r) \sum_{i \in [m]} z_i\big) =\chi(0).
    \end{align*}
   Hence $\chi(Z)=\chi(0)$ for all such clouds. To finish the proof, it is therefore enough to prove the existence function $f \in \calR(m)$ that is not  constant on the set of balanced clouds.

    Towards this endeavour, let $a:\R\to \C$ be a function and consider 
    \begin{align*}
        f(Z) = \sum_{i< j \in [m]} a\left( \abs{z_i-z_j}\right) \cdot \sum_{k\notin \set{i,j}}z_k
    \end{align*}
   That is, in words; First, for each pair $z_i, z_j$ of points, calculate $a(\abs{z_i-z_j})$ and multiply that with the sum of the rest of the points. Then sum over the set of such pairs. It is not hard to realize that such functions are members of $\calR(m)$.
    
    Now let $a$, for some $\epsilon>0$, be equal to $1$ in $\sfrac{2\sqrt{5}}{3}$ and zero outside $[\sfrac{2\sqrt{5}}{3}-\epsilon,\sfrac{2\sqrt{5}}{3}+\epsilon]$. Then, if $Z$ is a cloud with all pairwise distances smaller than $\sfrac{2\sqrt{5}}{3}-\epsilon$, $f(Z)=0$. There exists balanced clouds with that property for all $m$. Therefore, if $f$ is constant on the set of balanced clouds, we must have $f(Z)=0$ for all such. We can however construct a balanced cloud for which $f(Z)\neq 0$ as follows:

    Let us first assume that $m=2k+3$ is odd. We define
    \begin{align*}
        z_1=i, \quad z_{2,3}= -\frac{2i}{3}\pm \frac{\sqrt{5}}{3}, \quad z_{2\ell, 2\ell+1} = \frac{i}{6k} \pm \frac{\sqrt{36k^2-1}}{6k}.
    \end{align*}
    All points in these clouds have the norm $1$, and
    \begin{align*}
        \sum_{k=1}^{m} z_k = i  -\frac{4i}{3} + 2k \cdot  \frac{i}{6k} = 0.
    \end{align*}
    Note that we used that the real parts of the points cancel each other. Thus, the set is balanced. (See also Fig. \ref{fig:balanced}).
    
    \begin{figure}
        \centering
        \includegraphics[width=.3\textwidth]{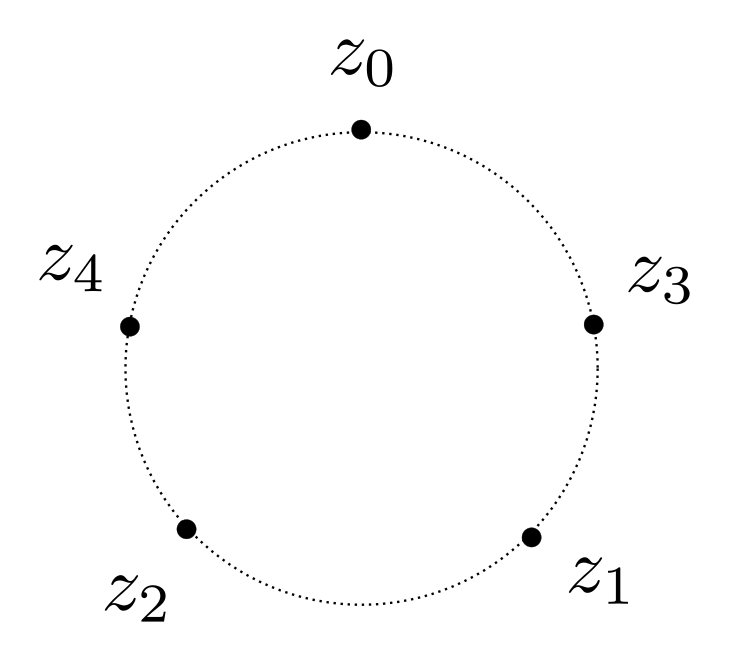}
        \caption{The balanced cloud $Z$ used in the proof of Prop \ref{prop:nogo1}. Note that among the pairwise distances $\abs{z_i-z_j}$, only $\abs{z_1-z_2}$ is equal to $\sfrac{2\sqrt{5}}{3}$.}
        \label{fig:balanced}
    \end{figure}
    
    Now, by calculating all distances between points, we see that  $\abs{z_1-z_2}=\sfrac{2\sqrt{5}}{3}$, and that all other pairwise distances $\abs{z_i-z_j}$ for $(i,j)\neq(1,2)$ are unequal to  $\sfrac{2\sqrt{5}}{3}$. Therefore, if we choose the parameter $\epsilon$ from above small enough, we get 
    \begin{align*}
        a(\abs{z_i-z_j}) = \begin{cases} 1 & \text{ if } i=1,j=2 \\
        0 & \text{ else,}
        \end{cases}
    \end{align*}
    and 
    \begin{align*}
        f(Z) &= \sum_{k \notin \{1,2\}} z_k = i + 2k  \cdot  \frac{i}{6k} = \frac{4i}{3}\neq 0.
    \end{align*}
Hence, $f$ is not constant on the set of balanced clouds, and the argument is finished.

In the case of even $m$, we proceed as above, but interchange $z_0=i$ with the two points $z_{-1,0} = 0.5i\pm \frac{\sqrt{3}}{2}$. The argument then proceeds just as above.
\end{proof}

\subsection{Proof of Theorem \ref{th:denseset}} \label{sec:dense}

Here, we prove that functions of the form $\sum_{i \in [m]} \gamma(\tau_i^*Z)z_i$ are dense in $\calR(m)$. Before starting the actual proof, let us agree on a simplifying notational convention.  For a  complex polynomial $q$, we will refer to the function
\begin{align*}
    p(Z) = q(Z,\overline{Z}).
\end{align*}
as a \emph{real polynomial} in $Z$. Note that the set of these functions is dense in $\calC(\C^m)$ with respect to supremum norm \emph{compact sets}, n.b.. To see this, note that the classical Stone-Weierstrass theorem states that for any $N\in \N$ the set of real polynomials is dense in $\calC(\R^n)$.  By equating $\C^m$ with $\R^{2m}$, we see that the space of real polynomials \emph{in the real and imaginary parts} of $Z \in \C^m$,
\begin{align*}
    r(\mathrm{re}(Z),\mathrm{im}(Z))
\end{align*}is dense  $\calC(\C^m)$. Since we however for each such polynomial $r$ can find a complex $q$ with $r(\mathrm{re}(Z),\mathrm{im}(Z))=q(Z, \overline{Z})$ for all $Z$, the claim follows.

Having established that density result we now move on to prove that in order to approximate functions in $\calR(m)$, it is enough to consider polynomials with the same equivariance properties. Similar statements have been proven in e.g. \cite{yarotsky2021universal,dym-maron-2021}. 

    \begin{lemma} \label{lem:equipolys}
    The set of real polynomials $p$ that are permutation invariant and rotation equivariant is dense in $\calR(m)$.
\end{lemma}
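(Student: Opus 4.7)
My plan is the classical Reynolds-averaging strategy: approximate $f \in \calR(m)$ by an arbitrary real polynomial, then project that polynomial onto the permutation-invariant, rotation-equivariant subspace by averaging over the $S_m\times \mathbb{S}$-action. The approximation error will behave well automatically; the main thing to check is that the projection still lands inside the space of polynomials.

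For the approximation step, I would fix a compact set $K \sse \C^m$ and $\epsilon>0$ and pass to the compact orbit $K' = \set{\theta \pi^*Z : \theta \in \mathbb{S}, \pi \in S_m, Z \in K}$, which is the continuous image of the compact set $\mathbb{S}\times S_m\times K$. By the density of real polynomials in $\calC(\C^m)$ recorded just above the lemma, I can select a real polynomial $p$ with $\sup_{W \in K'}|f(W)-p(W)| < \epsilon$. Then I would define the Reynolds average
$$p_{\mathrm{avg}}(Z) = \frac{1}{m!}\sum_{\pi \in S_m} \int_\mathbb{S} \overline{\theta}\, p(\theta \pi^*Z)\, d\theta,$$
using the normalized Haar measure on $\mathbb{S}$. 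Routine changes of variables, using the group law of $S_m$ and the translation invariance of $d\theta$, will show that $p_{\mathrm{avg}}$ is permutation invariant and rotation equivariant.

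Error control should come essentially for free. Since $f$ itself lies in $\calR(m)$, applying the same Reynolds average to $f$ leaves it unchanged, i.e.\ $f(Z) = \tfrac{1}{m!}\sum_\pi \int_\mathbb{S} \overline{\theta}\, f(\theta \pi^*Z)\, d\theta$. Subtracting, moving the absolute value inside sum and integral, and observing that $\theta \pi^*Z \in K'$ throughout, I obtain $|f(Z)-p_{\mathrm{avg}}(Z)| \leq \sup_{W \in K'}|f(W)-p(W)| < \epsilon$ for all $Z \in K$, which is the uniform bound demanded on $K$.

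The remaining and, I expect, only genuinely delicate step is to verify that $p_{\mathrm{avg}}$ is itself a real polynomial. Permutation averaging clearly preserves the polynomial class, so the question reduces to the integral over $\mathbb{S}$. Here I would expand $p(Z)=q(Z,\overline{Z})$ in monomials of the form $z_{i_1}\cdots z_{i_k}\overline{z}_{j_1}\cdots \overline{z}_{j_\ell}$; the scaling $Z \mapsto \theta Z$ multiplies such a monomial by $\theta^{k-\ell}$, so $\overline{\theta}\, p(\theta Z)$ carries a factor $\theta^{k-\ell-1}$. Integration against the Haar measure on $\mathbb{S}$ then annihilates every monomial with $k \neq \ell + 1$ and preserves those with $k = \ell+1$. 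Thus $p_{\mathrm{avg}}$ is exactly the isotypic projection of $p$ onto the $\mathbb{S}$-weight $+1$ subspace, and in particular still a real polynomial in the sense introduced above the lemma, finishing the argument.
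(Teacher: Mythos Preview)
Your proposal is correct and takes essentially the same approach as the paper: approximate by a real polynomial, then project onto the equivariant subspace via the Reynolds operator $\tfrac{1}{2\pi}\int_{\mathbb{S}}\overline{\theta}\,(\cdot)(\theta Z)\,d\theta$ together with symmetrization over $S_m$, observing that this projection preserves polynomials and does not increase the sup-norm error. The only cosmetic difference is that the paper performs the two averagings sequentially (rotation first, then permutation) whereas you combine them into a single Reynolds average; your explicit passage to the compact orbit $K'$ corresponds to the paper's ``WLOG $K$ is rotation-invariant'' reduction.
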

\begin{proof}

Let us first prove that it suffices to consider rotation equivariant polynomials, we argue as follows. For some multi-indices $\alpha, \beta \in \N^m$, consider the 'real monomial'
\begin{align*}
    \mu_{\alpha \beta} (Z) =  Z^{\alpha} \overline{Z}^\beta.
\end{align*}
It is clear that $\mu_{\alpha\beta}$ is rotationally equivariant if and only if $\abs{\alpha}=\abs{\beta}+1$. This together with the fact that $ \tfrac{1}{2\pi} \int_{\sph} \theta^k \dd \theta = \delta_{k,0}$ implies that
   \begin{align} \label{eq:roteq}
       \int_{\sph} \overline{\theta} \mu_{ \alpha \beta}(\theta Z) \, \dd \theta \neq 0 \quad \Longleftrightarrow \quad \mu_{ \alpha \beta}  \text{ rotationally equivariant.}
   \end{align}
   Also notice that if $f$ is rotationally equivariant,
   \begin{align*}
        \tfrac{1}{2\pi}\int_\S \overline{\theta}f(\theta Z) \,  \dd \theta = \tfrac{1}{2\pi}\int_\S f(Z) \,  \dd  \theta= f(Z).
   \end{align*}
   
   Now fix a compact set $K \sse \C^m$, which without loss of generality has the property $Z \in K \Leftrightarrow \theta Z \in K$, $\theta \in \S$. For every $f \in \calR(m)$, there exists a real polynomial $p$ with $\sup_{Z \in K}\abs{p(Z)-f(Z)}\leq \epsilon$. We now split the monomial terms in $p$ according to whether they are rotationally equivariant or not. This defines two polynomials $p_0$ and $p_1$. Now notice that for each $Z \in K$,
   \begin{align*}
       \abs{f(Z)-p_0(Z)} = \left\vert\tfrac{1}{2\pi} \int_{\sph} \overline{\theta} (f(\theta Z) -  p_0(\theta Z)  - p_1 (\theta Z)) \, \dd \theta \right\vert \leq \sup_{Z \in K} \abs{f(Z)-p(Z)}
   \end{align*}
   We used that $p_0$ and $f$ are rotationally equivariant, and also  \eqref{eq:roteq} together with the fact that $p_1$ only consists of monomial terms that are not rotationally equviarant.
   This means that the rotationally equivariant real polynomial $p_0\in \calR(m)$ has a supremum distance at most $\epsilon$ to $f$ on $K$, and we hence we might as well use $q$ to approximate $f$.
   
   The permutation invariance part is now easily handeled by symmetrization. That is if, $p$ is a non-symmetric polynomial approximating $f$ well, the symmetric polynomial
   \begin{align*}
       \widehat{p}(Z) = \tfrac{1}{\abs{S_m}} \sum_{\pi \in S_m}p(\pi^*Z)
   \end{align*}
   will approximate $f$ just as good --  see for instance  \cite{maron2019universality}.
\end{proof}

With the previous lemma in our toolbox, the proof of Theorem \ref{th:denseset} is relatively simple.

\begin{proof}[Proof of Theorem \eqref{th:denseset}]
    Fix a compact set and a function $f$. By Lemma \ref{lem:equipolys}, there exists a real, symmetric and rotation equivariant polynomial
    \begin{align*}
        p(Z) = \sum_{\alpha, \beta} c_{\alpha,\beta} Z^\alpha \overline{Z}^\beta,
    \end{align*}
   which is close to $f$. Since  $p$ is rotation invariant, it must be  $c_{\alpha,\beta} = 0$ for all ($\alpha$,$\beta$) with $\abs{\alpha}\neq\abs{\beta}+1$. Due to its permutation invariance,  we furthermore have $c_{\alpha,\beta}=c_{\pi^*\alpha,\pi^*\beta}$ for all $\pi \in S_m$ and multiindices $\alpha,\beta$. Hence, $p$ consists of terms of the form
   \begin{align}
        q_{\alpha,\beta}(Z)= \sum_{\pi \in S_m}Z^{\pi^*\alpha} \overline{Z}^{\pi^*\beta}, \ \abs{\alpha}=\abs{\beta}+1. \label{eq:basispolynomial},
    \end{align}
    and it is therefore enough to approximate such terms. 
    Here, by the permutation equivariance, we can WLOG assume that the indices $\alpha_i$ are in ascending order. Consequently, we can write $\alpha = \hat{\alpha}+e_0$ for some $\hat{\alpha}$ with $\abs{\hat\alpha}= \abs{\beta}$.
    
    Now let us split the sum in \eqref{eq:basispolynomial} over $S_m$ in accordance to the value of $\pi(0)$
    \begin{align} \label{eq:split}
         q_{\alpha,\beta}(Z)  = \sum_{i \in [m]} \sum_{\pi(0)=i}Z^{\pi^*(e_0 + \hat{\alpha})} \overline{Z}^{\pi^*\beta} = \sum_{i \in [m]} \sum_{\pi(i)=0}Z^{\pi^* \hat{\alpha}} \overline{Z}^{\pi^*\beta} z_i,
    \end{align}
    where we in the last step used that $\pi^*e_0=e_{\pi(0)}=e_i$. It is clear that we can write each $\pi$ with $\pi(0)=i$ as $\tau_i \circ \sigma$ for a unique $\sigma \in \Stab(0)$. We have
    \begin{align*}
        Z^{\pi^*\hat{\alpha}} \overline{Z}^{\pi^*\beta} &= Z^{\tau_i^*\sigma^*\hat{\alpha}} \overline{Z}^{\tau_i^*\sigma^*\hat{\beta}}z_i 
    \end{align*}
    Since  $(\tau_i^* Z)^\alpha=Z^{\tau_i^*\alpha}$, we see that our sum turns into 
    \begin{align*}
       \sum_{i \in [m]} \sum_{\sigma \in \Stab(0)} (\tau_i^*Z)^{\sigma^* \hat{\alpha}} (\tau_i^* \overline{Z})^{\sigma^* \beta} z_i = \sum_{i \in [m]} \gamma(\tau_i^*Z)z_i,
    \end{align*}
    where we defined
    \begin{align*}
        \gamma(Z) = \sum_{\sigma \in \Stab(0)} Z^{\sigma^* \hat{\alpha}}  \overline{Z}^{\sigma^* \beta}
    \end{align*}
 The function $\gamma$ is clearly $\Stab(0)$-invariant, and also rotation invariant due to $\abs{\hat{\alpha}}=\abs{\beta}$. The proof is finished.
\end{proof}

\subsection{\texorpdfstring{$\Stab(0)$}{Stab(0)}-equi- and invariant linear maps} \label{sec:linearlayers}

Our architectures make heavy use of linear layers which are equi- and invariant to the action of the $\Stab(0)$ group. It is a priori not clear how to construct such, and in particular parametrize all of them. In this section, we provide such a description.

We let $\K$ denote either of the fields $\R$ or $\C$. For a tensor $T \in (\K^m)^{\otimes k}$, i.e. of order $k$, we define the action of a permutation $\pi \in S_m$ on $T$ through
\begin{align*}
    (\pi^*T)_{i_0, \dots, i_{k-1}} = T_{\pi^{-1}(i_0), \dots, \pi^{-1}(i_{k-1})}.
\end{align*}
This is exactly as in \cite{maron2018invariant}. Let us begin by introducing some notation for the spaces we are interested in.
\begin{defi}
    For $k, \ell \in \N$, we let $\calL(k,\ell)$ denote the space of linear operators $L: (\C^m)^{\otimes k} \to (\C^m)^{\otimes \ell}$ which are $S_m$-equivariant. The space of operators of the same kind which are $\Stab(0)$-equivariant is denoted $\calL_0(k,\ell)$.
\end{defi}

Let us briefly comment on two special cases. First, if $\ell=0$, the spaces $\calL(k,0)$ and $\calL_0(k,0)$ can be identified with the space of invariant functionals of the respective kind. This is because of the fact that the action of $S_m$ on scalars $v\in \K$ is trivial. In the same manner, the spaces $\calL(0,k)$ and $\calL_0(0,k)$ denote constant $k$-tensors which are invariant to the action of the respective groups. Such elements can be used as biases in our architecture.

\begin{rem}
    In our architecture, we are actually dealing with linear layers mapping multi-tensors to multi-tensors. It is however clear that such a mapping can be seen as a matrix of linear maps $L_{ij}$, where each $L_{ij}$ corresponds to one input-output-channel pair. As such, it is enough to characterize the spaces $\calL(k,\ell)$ and $\calL_0(k,\ell)$ to obtain a way to parametrize the linear layers of our architecture.
\end{rem}

Let us reiterate that the results of  \cite{maron2018invariant} give a complete characterization of the spaces $\calL(k,\ell)$\footnote{Technically, they only state their theorems in the case $\K=\R$, but their proofs go through also for $\K=\C$}. In brief, they identify such maps as fixed points of a certain linear equation, which they then explicitly calculate.  We refer to \cite{maron2018invariant} for details.

In particular, the results in the mentioned paper prove that $\dim \calL(k,\ell) \leq B_{k+\ell}$, where $B_n$ denotes the $n$:th \emph{Bell number}. As noted in \cite{finziPracticalMethodConstructing2021}, the dimension of the space cannot get larger than the dimension of the space of all linear maps from $(\K^m)^{\otimes k}$ to $(\K^m)^{\otimes \ell}$, which is $m^{k+\ell}$. In all cases, the number of scalars needed to describe a map in $\calL(k,\ell)$ can be bounded independent of $m$. 

Our idea here is to link the spaces $\calL_0(k,\ell)$ with spaces $\calL(k',\ell')$. In doing so, the following simple Lemma will be convenient . For completeness. we include a proof.
\begin{lemma} \label{lem:duality} 
    For $k,\ell$ in $N$, consider the map
    \begin{align*}
        \Phi_{k,\ell}: L \mapsto \lambda, \quad \lambda(\overline{S}\otimes T) = \sprod{S,L(T)}, \ T\in (\K^m)^{\otimes k} , S\in (\K^m){\otimes \ell}.
    \end{align*}
    Hereby, $\sprod{\cdot, \cdot}$ denotes the canonical scalar product on $(\K^m)^{\otimes \ell}$, i.e.
    \begin{align*}
        \sprod{M,N} = \sum_{i_0, \dots, i_{\ell-1}} \overline{M_{i_0, \dots, i_{\ell-1}}} N_{i_0, \dots, i_{\ell-1}}
    \end{align*}
     \begin{enumerate}[(i)]
         \item $\Phi_{k,\ell}$ is an isomorphism between the spaces of linear maps $(\K^m)^{\otimes k} \to (\K^m)^{\otimes \ell}$ and functionals on $(\K^m)^{\otimes(k+\ell)}$.
         \item $\Phi_{k\ell}$ maps  $\calL(k,\ell)$ to $\calL(k+\ell,0)$ and $\calL_0(k,\ell)$ to $\calL_0(k+\ell,0)$. In particular, the respective pairs of spaces are isomorphic.
     \end{enumerate}
\end{lemma}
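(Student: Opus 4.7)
The plan is to treat $\Phi_{k,\ell}$ as a standard tensor–hom / sesquilinear duality and then verify that the group actions transfer correctly under this identification. For part (i), I would first observe that $\lambda$ is well defined: the assignment $(\overline{S}, T) \mapsto \sprod{S, L(T)}$ is linear in both of its arguments (the conjugate is placed in the first slot precisely so that it becomes linear in $\overline{S}$ rather than conjugate-linear in $S$), so by the universal property of the tensor product it extends uniquely to a linear functional on $(\K^m)^{\otimes \ell} \otimes (\K^m)^{\otimes k} \cong (\K^m)^{\otimes(k+\ell)}$. Linearity of $\Phi_{k,\ell}$ in $L$ is then immediate. Injectivity follows because $\Phi_{k,\ell}(L) = 0$ forces $\sprod{S, L(T)} = 0$ for all $S, T$, and non-degeneracy of $\sprod{\cdot, \cdot}$ then gives $L = 0$. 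A dimension count — both spaces have dimension $m^{k+\ell}$ — completes the isomorphism; equivalently, one may exhibit the inverse explicitly, associating to $\lambda$ the unique $L$ with $\sprod{S, L(T)} = \lambda(\overline{S}\otimes T)$ for every $S$, via the Riesz-type correspondence in finite dimensions.

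For part (ii), the decisive observation is that the $S_m$-action on $(\K^m)^{\otimes n}$ is unitary with respect to the canonical inner product, because it merely permutes the tensor basis. Explicitly, $\sprod{\pi^*A, \pi^*B} = \sprod{A, B}$ for all $A, B$, and $\overline{\pi^*S} = \pi^*\overline{S}$. Given $L \in \calL(k,\ell)$, $\pi \in S_m$ and $\lambda = \Phi_{k,\ell}(L)$, one then computes
\begin{align*}
\lambda(\pi^*(\overline{S}\otimes T)) &= \lambda(\overline{\pi^*S}\otimes \pi^*T) = \sprod{\pi^*S, L(\pi^*T)} \\
&= \sprod{\pi^*S, \pi^*L(T)} = \sprod{S, L(T)} = \lambda(\overline{S}\otimes T),
\end{align*}
using the equivariance of $L$ in the middle equality. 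Since simple tensors of the form $\overline{S}\otimes T$ span $(\K^m)^{\otimes(k+\ell)}$, this shows $\lambda$ is $S_m$-invariant, hence lies in $\calL(k+\ell, 0)$. Reading the chain backwards and using non-degeneracy of $\sprod{\cdot, \cdot}$ (to pass from $\sprod{S, L(\pi^*T)} = \sprod{S, \pi^*L(T)}$ for every $S$ to the equivariance $L(\pi^*T) = \pi^*L(T)$) shows conversely that if $\lambda \in \calL(k+\ell, 0)$ then the corresponding $L$ is $S_m$-equivariant. Thus $\Phi_{k,\ell}$ restricts to a bijection $\calL(k,\ell) \to \calL(k+\ell, 0)$. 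The argument used nothing about $S_m$ beyond the fact that it acts by permutations of the tensor basis, so restricting verbatim to the subgroup $\Stab(0) \subseteq S_m$ yields the analogous identification $\calL_0(k,\ell) \cong \calL_0(k+\ell, 0)$ with no additional work.

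I do not anticipate a real obstacle here — the content is essentially a repackaging of finite-dimensional linear algebra. The one place to tread carefully is the conjugation convention, so that $\Phi_{k,\ell}$ comes out $\K$-linear (and not conjugate-linear) in $L$, and so that the placement of $\overline{(\cdot)}$ is consistent on both sides of $\lambda(\overline{S}\otimes T) = \sprod{S, L(T)}$ throughout the $\pi^*$-calculation above.
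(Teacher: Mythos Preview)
Your proposal is correct and follows essentially the same route as the paper: part (i) via linearity, non-degeneracy of the inner product for injectivity, and a dimension count for surjectivity; part (ii) via the unitarity of the permutation action on the canonical inner product, with the forward computation matching the paper's line for line and the converse handled by reversing the chain of equalities (the paper makes this explicit through the substitution $R=(\pi^{-1})^*S$). Your remark that the $\Stab(0)$ case needs no separate argument also mirrors the paper.
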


\begin{proof}
    To not overload the notation, we fix $k$ and $\ell$ and drop the index on $\Phi$.
    
    \underline{Ad (i):} The linearity is evident. For proving the injectivity, suppose that $\lambda=\Phi(L)$ is the zero functional. That means per definition that $\sprod{S,L(T)}=0$ for all $S\in (\K^m)^{\otimes \ell}$, which implies that $L(T)=0$ for all $T$ in $(\K^m)^{\otimes k}$, i.e. that $L=0$. The surjectivity now follows from dimensionality considerations.
    
    \underline{Ad (ii)} We concentrate on the case of $S_m$-equivariant maps, since the $\Stab(0)$-case is proven in exactly the same way. We need to prove two things: First, we need to show that $\Phi(L)\in \calL(k+\ell,0)$ for all $L \in \calL(k,\ell)$. Secondly, we need to show that for every $\lambda \in \calL(k+\ell,0)$, the (unique) $L$ with $\Phi(L)=\lambda$ is in $\calL(k,\ell)$.
    
    To prove the former, let $L \in \calL(k,\ell)$ and $\pi \in S_m$ be arbitrary. Writing $\lambda = \Phi(L)$, we have
    \begin{align*}
        \lambda(\pi^*(\overline{S} \otimes T)) = \sprod{\pi^*S, L(\pi^*T)} = \sprod{\pi^*S, \pi^*L(T)}  = \sprod{S, L(T)} = \lambda(\overline{S} \otimes T),
    \end{align*}
   for each $S$ and $T$. Note that we used the equivariance of $L$ in the second step, and the (obvious) invariance of the scalar product under permutations in the third. This exactly means that $\lambda \in \calL(k+\ell,0)$.
   
   To prove the latter, let $\lambda \in \calL(k+\ell,0)$, $L=\Phi^{-1}(\lambda)$ and $\pi \in S_m$ arbitrary. For $S$ and $T$ arbitrary, defining $R= (\pi^{-1})^*S$, we then get
   \begin{align*}
       \sprod{S, L(\pi^*T)} =  \sprod{\pi^*R, L(\pi^*T)} = \lambda(\pi^*(\overline{R}\otimes T)) = \lambda(\overline{R} \otimes T) = \sprod{R,L(T)} = \sprod{\pi^*R,\pi^*L(T)} = \sprod{S, \pi^*L(T)}.
   \end{align*}
   We used the invariance of $\lambda$ in the third step, and the invariance of the scalar product in the fifth. Since $S$ is arbitrary, this proves that $L(\pi^*T)= \pi^*L(T)$ for all $T$, i.e., $L \in \calL(k,\ell)$
\end{proof}

The above lemma links spaces of equivariant linear maps to spaces of invariant functionals, in an isomorphic fashion. This means that in order to link the spaces $\calL(k,\ell)$ to the spaces $\calL_0(k,\ell)$, it suffices to provide a link between one space of functionals of the one kind to a space of equivariant maps of the other. This is the purpose of the following theorem.

\begin{theo} \label{theo:stabvsperm}
The map
    \begin{align*}
      \Psi: \calL(k,1) \to \calL_0(k,0),  L \mapsto \lambda, \quad   \lambda(T )=\sprod{e_0,L(T)}
    \end{align*}
    is an isomorphism. In particular, $\calL_0(k,0)\simeq \calL(k,1)$ and $\dim(\calL_0(k,0)) = B_{k+1}$.
\end{theo}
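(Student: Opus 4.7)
The plan is to construct an explicit inverse to $\Psi$, since linearity is immediate and the dimension claim will follow from the isomorphism together with the result $\dim \calL(k,1) = B_{k+1}$ from \cite{maron2018invariant}.

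First I would verify that $\Psi$ is well-defined, i.e.\ that $\lambda = \Psi(L)$ lies in $\calL_0(k,0)$. The key observation is that every $\pi \in \Stab(0)$ satisfies $\pi^* e_0 = e_0$. Using this together with the $S_m$-equivariance of $L$ and the invariance of the scalar product under permutations gives $\lambda(\pi^*T) = \sprod{e_0, L(\pi^*T)} = \sprod{e_0, \pi^* L(T)} = \sprod{e_0, L(T)} = \lambda(T)$, as required.

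Next I would deal with injectivity. Suppose $\Psi(L) = 0$. For any $i \in [m]$, pick $\pi \in S_m$ with $\pi(0) = i$, so that $\pi^*e_0 = e_i$. Then by the same manipulation as above, $\sprod{e_i, L(T)} = \sprod{e_0, L((\pi^{-1})^*T)} = \lambda((\pi^{-1})^*T) = 0$ for every $T$, whence $L = 0$. This is the easy direction.

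The main work, and the place where the $\Stab(0)$-invariance of $\lambda$ really enters, is surjectivity. Given $\lambda \in \calL_0(k,0)$, for each $i \in [m]$ fix some $\pi_i \in S_m$ with $\pi_i(0) = i$, and define $L(T)$ coordinatewise by $\sprod{e_i, L(T)} = \lambda((\pi_i^{-1})^*T)$. I expect the main obstacle to be checking that this formula is \emph{independent of the choice of} $\pi_i$: if $\pi_i' = \pi_i \sigma$ for $\sigma \in \Stab(0)$, then $(\pi_i')^{-1} = \sigma^{-1}\pi_i^{-1}$, and applying the functorial identity $(\alpha\beta)^* = \alpha^*\beta^*$ together with the $\Stab(0)$-invariance of $\lambda$ gives $\lambda(((\pi_i')^{-1})^*T) = \lambda((\pi_i^{-1})^*T)$. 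Since any two permutations sending $0$ to $i$ differ by a right multiplication by an element of $\Stab(0)$, $L$ is well defined.

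Finally I would verify that the $L$ so defined is $S_m$-equivariant and recovers $\lambda$. For $\tau \in S_m$, computing $\sprod{e_i, L(\tau^*T)} = \lambda((\tau^{-1}\pi_i)^{-*}T)$ (using $(\pi_i^{-1})^*\tau^* = (\tau^{-1}\pi_i)^{-*}$... more precisely $(\pi_i^{-1}\tau)^*$ via the left-action identity) and comparing to $\sprod{e_{\tau^{-1}(i)}, L(T)}$, which can be evaluated using the choice $\pi_{\tau^{-1}(i)} = \tau^{-1}\pi_i$ (valid since it sends $0$ to $\tau^{-1}(i)$), shows the two agree; hence $L \in \calL(k,1)$. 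Taking $i=0$ with $\pi_0 = \mathrm{id}$ gives $\Psi(L) = \lambda$. The bijective linear map $\Psi$ is therefore an isomorphism, and composing with the characterization in \cite{maron2018invariant} yields $\dim \calL_0(k,0) = \dim \calL(k,1) = B_{k+1}$.
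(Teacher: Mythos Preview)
Your proposal is correct and follows essentially the same approach as the paper: construct an explicit inverse by setting the $i$-th coordinate of $L(T)$ equal to $\lambda$ evaluated at a permuted tensor, then verify $S_m$-equivariance. The only cosmetic difference is that the paper fixes the coset representatives to be the transpositions $\tau_i$ (which obviates your well-definedness check but in exchange requires the observation $\tau_i\circ\pi\circ\tau_{\pi^{-1}(i)}\in\Stab(0)$ in the equivariance step), whereas you allow arbitrary $\pi_i$ with $\pi_i(0)=i$ and absorb the ambiguity into the $\Stab(0)$-invariance of $\lambda$.
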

\begin{proof}
 Let us begin by proving that $\Psi$ is well-defined, i.e. that $\Psi(L) \in \calL_0(k,0)$ for each $L \in \calL(k,1)$.  Let $\sigma \in \Stab(0)$ be arbitrary. Due to the equivariance of $L$ and invariance of the scalar product, we then get 
\begin{align*}
    \lambda(\sigma^*T) = \sprod{e_0, L(\sigma^*T)} = \sprod{e_0, \sigma^*L(T)} = \sprod{(\sigma^{-1})^* e_0, L(T)} = \sprod{e_0,L(T)} = \lambda(T).
\end{align*}
In the penultimate step, we used that $(\sigma^{-1})^*e_0=e_{\sigma^{-1}(0)} = e_0$ for $\sigma \in \Stab(0)$. This means that $\lambda$ is invariant, and that $\Psi$ indeed is well defined.

Now for the isomorphy. It is clear that $\Psi$ is linear. To prove injectivity, assume that $\lambda= \Psi(L)=0$. Due to the equivariance of $L$, we then get for every $i \in [m]$ and $T \in (\K^m)^{k}$
\begin{align*}
    0=\lambda(\tau_i^*T) = \sprod{e_0,L(\tau_i^*T)} = \sprod{\tau_i^*e_0, L(T) } = \sprod{e_i, L(T)}.
\end{align*}
i.e. $L=0$. To show surjectivity, let $\lambda \in \calL_0(k,0)$ be arbitrary. Define a map $L : (\K^m)^{\otimes k} \to \K$ through
\begin{align*}
    \sprod{e_i,L(T)} = \lambda(\tau_i^*T), \quad i \in [m]
\end{align*}
We then have $\Psi(L)(T) = \sprod{e_0,L(T)}=\lambda(\tau_0^*T)=\lambda(T)$, i.e., $\lambda = \Psi(L)$. If we can prove that $L$ is equivariant, we are done. So let $\pi\in S_m$ and $i \in [m]$ be arbitrary. A direct computation shows that $\tau_i \circ \pi \circ \tau_{\pi^{-1}(i)} \in \Stab(0)$. This, together with the assumed invariance of $\lambda$, shows that 
\begin{align*}
 \sprod{e_i,L(\pi^*T)} &=  \lambda(\tau_i^*\pi^*T) =  \lambda(\tau_i^*\pi^* \tau_{\pi^{-1}(i)}^* \tau_{\pi^{-1}(i)}^* T) = \lambda( \tau_{\pi^{-1}(i)}^* T)  = \sprod{e_{\pi^{-1}(i)},L(T)} \\
 & = \sprod{(\pi^{-1})^*e_i, L(T)} = \sprod{e_i, \pi^*L(T)}.
\end{align*}
Since $i$ is arbitrary, this means that $L(\pi^*T)=\pi^*L(T)$, i.e., that $L$ is equivariant. The proof is finished.
\end{proof}

We can now use Lemma \ref{lem:duality} and Theorem \ref{theo:stabvsperm} to construct an isomorphism between $\calL_0(k,\ell)$ and  $\calL(k,\ell+1)$

\begin{cor} \label{cor:isomorphism}
$\calL(k,\ell+1) \simeq \calL_0(k,\ell)$. An isomorphism is given by
\begin{align*}
    \Xi : \calL_0(k,\ell) \to \calL(k,\ell+1), L_0 \mapsto K, \quad K(T) = \sum_{i \in [m]}e_i \otimes \tau_i^*L_0(\tau_i^*T).
\end{align*}
\end{cor}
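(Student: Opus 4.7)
The natural strategy is to realize $\Xi$ as a composition of the isomorphisms already produced by Lemma~\ref{lem:duality} and Theorem~\ref{theo:stabvsperm}, and then to verify that this composition reproduces the explicit formula. Both $\calL_0(k,\ell)$ and $\calL(k,\ell+1)$ have dimension $B_{k+\ell+1}$ and can each be identified with a space of invariant functionals on $(\K^m)^{\otimes(k+\ell+1)}$ (via different groupings of the tensor slots), so a natural isomorphism between them has to exist.

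Concretely, I chain
\[
\calL_0(k,\ell) \xrightarrow{\Phi_{k,\ell}} \calL_0(k+\ell,0) \xrightarrow{\Psi^{-1}} \calL(k+\ell,1) \xrightarrow{\Phi_{k+\ell,1}} \calL(k+\ell+1,0) \xrightarrow{\Phi_{k,\ell+1}^{-1}} \calL(k,\ell+1).
\]
Each arrow is an isomorphism by the earlier results, so the composition is too, and this already establishes the abstract statement. The role of $\Phi_{k,\ell+1}^{-1}\circ\Phi_{k+\ell,1}$ is simply to regroup the $(k+\ell+1)$ tensor slots of an invariant functional, changing the split from ``one slot versus $k+\ell$'' to ``$\ell+1$ versus $k$''.

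To check that the composition yields the stated formula, I would chase $L_0 \in \calL_0(k,\ell)$ through the chain. Setting $\lambda_0 = \Phi_{k,\ell}(L_0)$ gives $\lambda_0(\overline{S}\otimes T) = \sprod{S, L_0(T)}$. Then $M = \Psi^{-1}(\lambda_0)$ satisfies $\sprod{e_i, M(U)} = \lambda_0(\tau_i^* U)$, and substituting $U = \overline{S}\otimes T$ together with $\tau_i^*(\overline{S}\otimes T) = \overline{\tau_i^* S}\otimes \tau_i^* T$ yields $\sprod{e_i, M(\overline{S}\otimes T)} = \sprod{\tau_i^* S, L_0(\tau_i^* T)}$. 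Finally, the equality of invariant functionals $\Phi_{k,\ell+1}(K) = \Phi_{k+\ell,1}(M)$, tested on elements of the form $\overline{r\otimes S}\otimes T$, reads $\sprod{r\otimes S, K(T)} = \sum_{i \in [m]} \overline{r_i}\,\sprod{\tau_i^* S, L_0(\tau_i^* T)}$; using $\sprod{\tau_i^* S, W} = \sprod{S, \tau_i^* W}$ (which holds since $\tau_i^{-1} = \tau_i$ and the scalar product is permutation-invariant) this collapses to $K(T) = \sum_{i \in [m]} e_i \otimes \tau_i^* L_0(\tau_i^* T)$, matching the definition of $\Xi$.

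The main obstacle is purely bookkeeping: correctly tracking the four different splittings of the $(k+\ell+1)$ tensor slots used by $\Phi_{k,\ell}$, $\Phi_{k+\ell,1}$, $\Phi_{k,\ell+1}$, and $\Psi$, and handling the complex conjugations carefully. No new idea beyond the self-inverse identity $\tau_i = \tau_i^{-1}$ is required — this is exactly what allows $\tau_i^*$ to be moved across the inner product in the final step of the verification.
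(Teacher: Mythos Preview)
Your proposal is correct and follows essentially the same approach as the paper: the paper also defines $\Xi$ as the composite $\Phi_{k,\ell+1}^{-1}\circ\Phi_{k+\ell,1}\circ\Psi^{-1}\circ\Phi_{k,\ell}$ and then verifies the explicit formula by evaluating $\sprod{u\otimes S, K(T)}$ and chasing through the chain, arriving at $\sum_{i}\overline{u_i}\,\sprod{S,\tau_i^* L_0(\tau_i^* T)}$ just as you do. The only cosmetic difference is that the paper invokes the $S_m$-equivariance of $L=\Psi^{-1}(\lambda_0)$ to rewrite $\sprod{e_i,L(\cdot)}$ as $\sprod{e_0,L(\tau_i^*\cdot)}$, whereas you use the explicit formula for $\Psi^{-1}$ directly; these are equivalent.
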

\begin{proof}
    If $\Phi_{k,\ell}$ and $\Psi$ are as in Lemma \ref{lem:duality} and Theorem \ref{theo:stabvsperm}, respectively, we define the isomorphism $\Xi$ through the following chain
      \begin{align*}
          \begin{matrix}
                \calL_0(k,\ell) \\ L_0 
          \end{matrix} \quad  \stackrel{\Phi_{k,\ell}}{\to} \quad  \begin{matrix}
                \calL_0(k+\ell,0) \\ \lambda_0 
          \end{matrix} \quad  \stackrel{\Psi^{-1}}{\to} \quad  \begin{matrix}
                \calL(k+\ell,1) \\ L 
          \end{matrix}  \quad  \stackrel{\Phi_{k+\ell,1}}{\to} \quad  \begin{matrix}
                \calL(k+\ell+1,0) \\ \lambda 
          \end{matrix} \quad  \stackrel{\Phi_{k,\ell+1}^{-1}}{\to} \quad  \begin{matrix}
                \calL(k,\ell+1) \\ K
          \end{matrix}.
      \end{align*}
      It now only is left to prove that $\Xi$ has the claimed form. For convenience, we named all of the intermediate objects above. For $u \in \K^m, S\in (\K^m)^{\otimes\ell}$ and $T \in (\K^m)^{\otimes k}$, we calculate
      \begin{align*}
          \sprod{u\otimes S, K(T)}= \lambda(\overline{u} \otimes \overline{S} \otimes T) = \sprod{u, L(\overline{S} \otimes T)} = \sum_{i \in [m]}\overline{u}_i \sprod{e_i, L(\overline{S}\otimes T)}.
      \end{align*}
      Now, notice that since $L \in \calL(k+\ell,1)$ and the scalar product is $S_m$-invariant, we have  
      \begin{align*}
          \sprod{e_i, L(\overline{S}\otimes T)} = \sprod{\tau_i^* e_0, L(\overline{S} \otimes T)} = \sprod{e_0, \tau_i^*L(\overline{S}\otimes T)} = \sprod{e_0, L(\tau_i^*(\overline{S}\otimes T))}.
      \end{align*}
      Consequently ,
      \begin{align*}
          \sum_{i \in [m]}\overline{u}_i \sprod{e_i, L(\overline{S}\otimes T)} &= \sum_{i \in [m]} \overline{u_i}\sprod{e_0, L(\tau_i^*(\overline{S}\otimes T))} = \sum_{i \in [m]}\overline{u_i} \lambda_0(\tau_i^*\overline{S}\otimes \tau_i^*T) = \sum_{i \in [m]}\overline{u_i} \sprod{\tau_i^*S, L_0(\tau_i^*T)} \\
          &= \sum_{i \in [m]}\overline{u_i} \sprod{S,  \tau_i^* L_0(\tau_i^*T)}  = \sum_{i \in [m]}\sprod{u,e_i} \sprod{S,  \tau_i^* L_0(\tau_i^*T)} = \sprod{u \otimes S, \sum_{i \in [m]} e_i \otimes \tau_i^*L_0(\tau_i^*T)}.
      \end{align*}
      Since $u$ and $S$ are arbitrary, we obtain the claim.
\end{proof}

We may now easily construct spanning systems of $\calL_0(k,\ell)$ by, using the last corollary, transforming the spanning sets of $\calL(k,\ell+1)$ from  \cite{maron2018invariant}. In Section \ref{sec:spanning sets}, we carry this out and write down explicit spanning sets for the spaces  $\calL_0(k,\ell)$ for $0 \leq k,\ell\leq 2$.

Let us here only comment that the above Corollary in particular proves that $\dim \calL_0(k,\ell)=\dim \calL(k,\ell+1) \leq B_{k+\ell+1}$. In particular, we may describe each linear input-output channel pair of the first layer of our weight units  (which is an element of $\calL_0(2,1)$)  with $B_{2+1+1}=15$ parameters, and each channel of the bias (which is an element of $\calL_0(0,1)$) with $B_{1+1}=2$ parameters. For the later early layers, we need $B_{1+1+1}=5$ parameters per input-output-channel pair for the linear part (which is then an element of $\calL_0(1,1)$) , and $B_{1+1}=2$ parameter per output channel bias (which is still an element of $\calL_0(0,1)$) . 

\subsection{Proof of Theorem \ref{th:universality}} \label{sec:universal}
We now prove the main result. Note that we have to assume that the activation function in the weight units is not a polynomial (in order to be able to apply the classical universality result for neural networks \cite{cybenko1989approximation}.) The first step is to prove that the  $\calNS(m)$-architecture, i.e. the ones for the weight units is universal for functions restricted to a subset of  $\calR\calP_0^m$.
\begin{lemma} \label{lem:weightsdense}
    For $\epsilon>0$, define the set
    \begin{align*}
        C_\epsilon^m &= \set{Z \in \calR\calP_0^m  \, \vert \, \abs{z_0} >\epsilon}.
    \end{align*}
    Then, $\mathcal{NS}(m)$ is dense in $\calC(C_\epsilon^m)$.
\end{lemma}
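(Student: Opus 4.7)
The plan is to reduce the density claim, via an appropriate invariant embedding, to a DeepSet-type universality statement, and then to realize the resulting DeepSet form inside the $\calNS(m)$ architecture. Fix a compact set $K \subset C_\epsilon^m$ and a continuous target $f$; I would work throughout on $K$ and obtain the global statement by the usual compact-convergence argument. First, I introduce the map
\[
\iota: C_\epsilon^m \to \R_+ \times \left(\C^{m-1}/S_{m-1}\right), \qquad \iota(Z) = \bigl(|z_0|^2,\, \{z_0\overline{z_j}\}_{j=1}^{m-1}\bigr),
\]
and check that it is well-defined on the quotient $\calR\calP_0^m$: the first component is manifestly rotation- and $\Stab(0)$-invariant, and for the second, a global rotation $\theta$ sends $z_0\overline{z_j}$ to $\theta z_0\overline{\theta z_j} = z_0\overline{z_j}$, while a $\Stab(0)$-permutation merely permutes the $j \geq 1$ indices. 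The map is injective on $C_\epsilon^m$: given $\iota(Z)=\iota(W)$, pick rotation-representatives with $z_0,w_0 > 0$; then $|z_0|^2 = |w_0|^2$ (so $z_0 = w_0 > \epsilon$), and dividing the multiset identity $\{z_0\overline{z_j}\} = \{w_0\overline{w_j}\}$ by the nonzero $z_0$ yields equality of $\{\overline{z_j}\}_{j \geq 1}$ as multisets, giving $Z \sim W$ in $\calR\calP_0^m$. Being a continuous injection from a compact space into a Hausdorff space, $\iota|_K$ is a homeomorphism onto its image, so any continuous $f$ on $K$ factors as $f = \phi\circ\iota$ for some continuous $\phi$ on $\iota(K)$.

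\textbf{Approximation.} Next, I would invoke standard PointNet/DeepSet universality: any continuous function $\phi(r,\{w_1,\ldots,w_{m-1}\})$ that is permutation-invariant in its multiset argument admits uniform approximation on compacta by expressions of the form $\chi\bigl(r,\sum_{j=1}^{m-1}\varrho(w_j)\bigr)$ for MLPs $\varrho:\C\to\R^K$ and $\chi:\R\times\R^K\to\C$. Pulling back by $\iota$, it suffices to realize
\[
\Phi(Z) \;=\; \chi\Bigl(|z_0|^2,\; \sum_{j=1}^{m-1}\varrho(z_0\overline{z_j})\Bigr)
\]
as an $\calNS(m)$-network. For this, I would choose the very first layer $B_0$ so that its output channels include both $V^{(1)}(Z)=(|z_0|^2,0,\ldots,0)$ and $V^{(2)}(Z)=(0,z_0\overline{z_1},\ldots,z_0\overline{z_{m-1}})$; a direct check confirms that both are $\Stab(0)$-equivariant linear functions of $Z\otimes\overline{Z}$. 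Using the 5-dimensional family of $\Stab(0)$-equivariant maps on $\C^m$ characterized in Section~\ref{sec:linearlayers} (in particular, the element acting as the identity on coordinates $\geq 1$ and as zero on coordinate $0$), the subsequent early layers alternated with a pointwise nonlinearity $\rho$ satisfying $\rho(0)=0$ implement an MLP $\varrho$ entrywise on positions $j\geq 1$ of $V^{(2)}$ while preserving the zero at position $0$, and carry $V^{(1)}$ along unchanged. The invarization step (summation over $i$) then yields $\bigl(|z_0|^2,\sum_{j\geq 1}\varrho(z_0\overline{z_j})\bigr)$, and the fully connected late layers realize $\chi$ by classical MLP universality.

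\textbf{Main obstacle.} The technically most delicate point will be the third step: confirming that the restricted family of $\Stab(0)$-equivariant linear maps is expressive enough to simultaneously extract the required initial channels from $Z\otimes\overline{Z}$, isolate coordinate $0$ from the DeepSet sum, and implement an arbitrary MLP pointwise on the remaining coordinates. The first two subtasks reduce to exhibiting explicit elements of the $\Stab(0)$-equivariant linear families, and the third to applying standard coordinatewise MLP approximation once the "mask" killing coordinate $0$ is available; here the assumption $\rho(0)=0$ is what keeps position $0$ quiescent through the nonlinearities. Combining the three approximations by the triangle inequality on $K$ then yields the claim.
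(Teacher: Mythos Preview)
Your proposal is correct and follows a genuinely different route from the paper's proof. The paper applies Stone--Weierstrass directly: it defines $\calV(m)$ as the set of scalar functions obtainable from $\calNS(m)$ networks up through the invarization step, argues that the fully connected late layers can generate the algebra these span (by classical MLP universality), and then verifies that $\calV(m)$ separates points of $C_\epsilon^m$. The separation argument shows that $\calV(m)$ can (approximately) compute $|z_0|^2$ and, for every $\lambda\in\R$, the power-sum symmetric polynomials of $\mathrm{re}(z_0\overline{Z_\vee})+\lambda\,\mathrm{im}(z_0\overline{Z_\vee})$; a finiteness-of-$S_{m-1}$ trick with $\lambda\to 0$ then pins down the multiset $\{z_0\overline{z_j}\}_{j\geq 1}$, and division by $z_0\neq 0$ finishes.

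Your route is more constructive: you identify the same complete invariant $(|z_0|^2,\{z_0\overline{z_j}\}_{j\geq 1})$ up front, factor $f$ through it via a compactness/homeomorphism argument, invoke DeepSet universality on the factored function, and then explicitly realize the resulting expression $\chi(|z_0|^2,\sum_{j\geq 1}\varrho(z_0\overline{z_j}))$ inside $\calNS(m)$. The underlying mathematical content---that this invariant separates $C_\epsilon^m$---is identical; the packaging differs. The paper's approach is cleaner at the top level, at the cost of the somewhat delicate $\lambda$-argument needed to recover a \emph{complex} multiset from \emph{real} power sums. Your approach trades that for more bookkeeping in the realization step: you must check that $\calL_0(2,1)$ contains the maps producing $V^{(1)},V^{(2)}$ (these are $K_2$ and $K_{11}-K_2$ in the paper's basis), that $\calL_0(1,1)$ contains the mask $L_1-L_2$ zeroing coordinate $0$, that biases in $\calL_0(0,1)=\mathrm{span}\{e_0,\one\}$ permit separate offsets at position $0$ versus positions $\geq 1$, and that real-linearity of the layers (allowing $v\mapsto\overline{v}$, not just complex scaling) suffices to implement an arbitrary pointwise MLP $\varrho:\C\to\R^K$. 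All of these go through using the explicit bases in the paper's appendix. Your assumption $\rho(0)=0$ is a convenience rather than a requirement: without it, position $0$ accumulates a $Z$-independent constant through the early layers which $\chi$ can absorb.
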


\begin{figure}
    \centering
    \includegraphics[width=.35\textwidth]{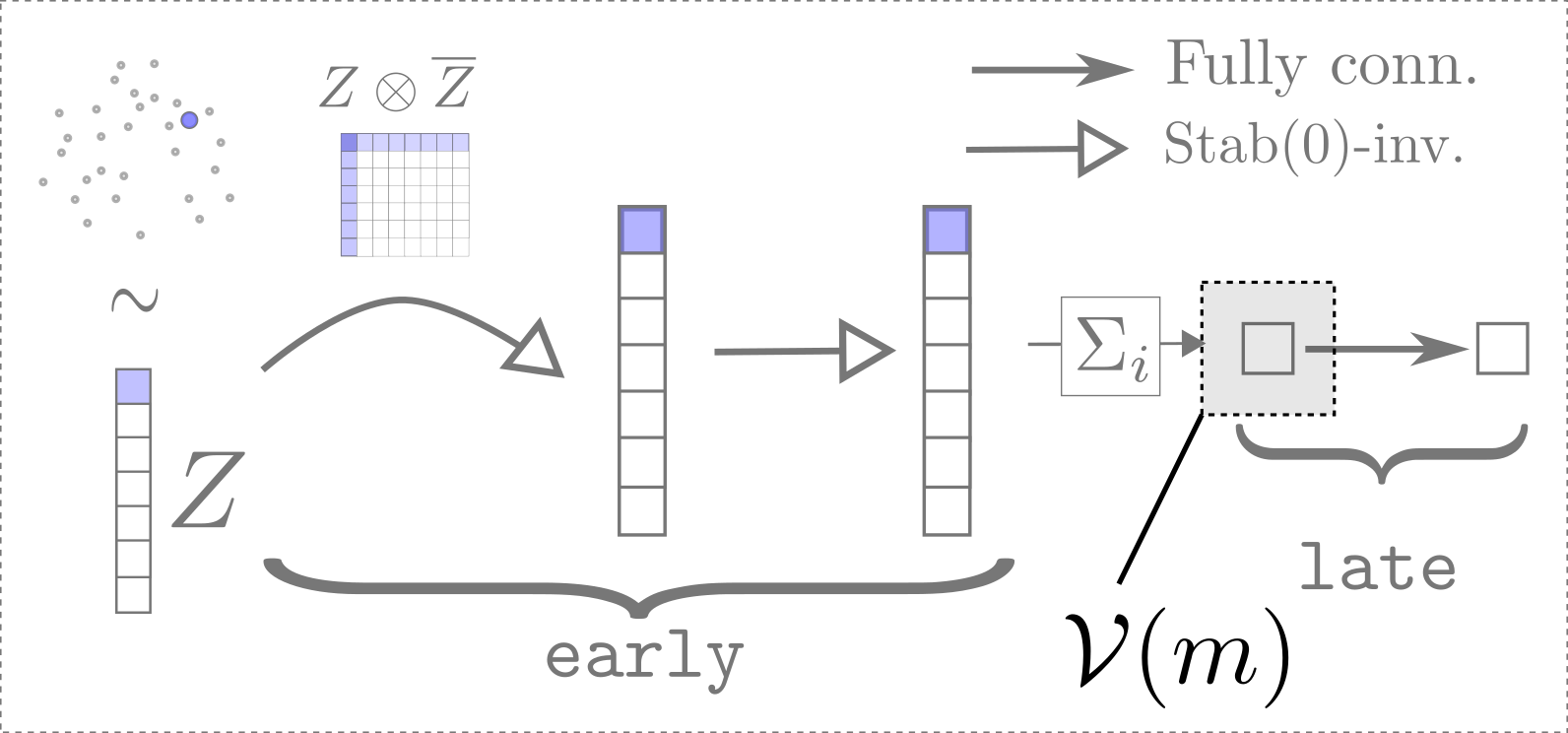}
    \caption{Definition of the space $\calV(m)$. \label{fig:Vm}}
\end{figure}

\begin{proof}
We aim to apply the the Stone-Weierstrass theorem \cite[Th.7.32]{rudin}. This theorem says that if a set $S$ of continuous functions defined on a compact metric space $M$
\begin{itemize}
    \item separates points, e.g. if there for each $x\neq y \in M$ exists an $f\in S$ with $f(x)\neq f(y)$,
    \item vanishes nowhere, e.g. that there for $x\in M$ exists a $f\in S$ such that $f(x)\neq 0$,
\end{itemize}
the algebra generated by $S$ is dense in $\calC(M)$. Note that we may use the real version of the theorem, since we are applying real-linear layers.

In our setting, we want to apply the theorem with $M$ equal to an arbitrary compact subset of  $C_\epsilon^m$, and $S$ equal to the functions $v$ defined by the averaging the output of the last early layer of the networks in $\calNS(m)$. For convenience, let us call this set $\calV(m)$. (See also Figure \ref{fig:Vm}.)  Due to the classical universality result of neural networks \cite{cybenko1989approximation}, the final fully connected layers can namely generate the algebra of those functions. 

That $\calV(m)$ is  nowhere vanishing is imminent, simply due to the fact that the linear layers have biases. Thus, we can concentrate on proving that it separates points. So let $Z\neq W \in C_\epsilon^m$. We aim to show that if $v(Z)=v(W)$ for all functions in $\calV(m)$, $Z$ must be equal to $W$ as points in $\calR\calP_0^m$, i.e., up to a $\Stab(0)$-permutation and global rotation. For convenience, let us introduce the notations $Z_\wedge = (0,z_1, \dots, z_{m-1})$ and $Z_\vee =(z_1, \dots, z_{m-1}) $

\paragraph{Claim 1: $\abs{z_0}=\abs{w_0}$.} The map $T \mapsto e_0 T_{00}$ is a member of $\calL_0(2,1)$. This can be seen through a direct calculation (see also Section \ref{sec:spanning sets}.) Therefore, channels of the  first layer of $\alpha$ can be chosen to output $\abs{z_0}^2e_0$. By choosing the subsequent input-output-channel pairs as multiples of the identity, it can therefore be achieved that the output of the $L$:th layer can be made equal to  $\psi(\abs{z_0}^{2})e_0$ for some neural network, which surely can be designed to be arbitrarily close to the identity (we hereby again appeal to the classical universality result). This vector is of course summed to (something arbitrarily close to) $\abs{z_0}^{2}$. Hence,  $\abs{z_0}^2$ can be approximated arbitrarily well with functions in $\calV(m)$, and consequently, $\abs{z_0}=\abs{w_0}$.

\paragraph{Claim 2: $z_0\overline{Z_\vee}=w_0 \overline{W_\vee}$ up to a permutation.} Now we use that the maps $T \mapsto Te_0$ and $T \mapsto T^Te_0$ are members of $\calL_0(2,1)$ (This can again be realized through a direct calculation, or a consultation of Section \ref{sec:spanning sets}). Since we apply such functions on $Z\otimes \overline{Z}$ in the very first layer of $\alpha$, channels of its output can be chosen equal to output $z_0\overline{Z}$ and $\overline{z_0}Z$. By subtracting the map $\abs{z_0}^2e_0$ from above, we may even make them equal to $z_0\overline{Z_\wedge}$ and $\overline{z_0}Z_\wedge$. By taking linear combinations of those two, we may hence make the very first layer equal
\begin{align*}
   Y_\lambda= \mathrm{re}(z_0 \overline{Z_\wedge}) +\lambda \mathrm{im}(\overline{z_0}Z_\wedge).
\end{align*}
Now, by letting each input-output-channel of the subsequent layers be a multiple of the identity, we can see to it that the output of the $L$:th layer is equal to $\psi(Y_\lambda)$, where $\psi: \C \to \C$ is any neural network applied pointwise. By the classical universality result, we can in particular make it arbitrarily close to $(Y_\lambda)^k$ for any $k \in \N$. These vectors are averaged to the so called \emph{powersum polynomials} in $Y_\lambda$, i.e.
\begin{align*}
    ps_k(Y_\lambda) = \sum_{i \geq 1} (Y_\lambda)_i^k
\end{align*}
These polynomials are, of course, exactly equal to the powersum polynomials in
\begin{align*}
    \widecheck{Y}_\lambda = \mathrm{re}(z_0 \overline{Z_\vee}) +\lambda \mathrm{im}(\overline{z_0}Z_\vee).
\end{align*}
Let us correspondingly write $\widecheck{X}_\lambda =\mathrm{re}(w_0 \overline{W_\vee}) +\lambda \mathrm{im}(\overline{w_0}W_\vee) $
Since the set of powersum polynomials generate the algebra of symmetrical polynomials \cite{eidswick1968proof}, which in turn are dense in $\calC(\calP^m)$, we conclude (due to Urysohn's separation lemma) that if $v(Z)=v(W)$ for all $v\in \calV(m)$, there must for every lambda be $\widecheck{Y}_\lambda = \widecheck{X}_\lambda$ as points in $\calP^{m-1}$, i.e. up to a permutation $\pi_\lambda$
\begin{align} \label{eq:lambda}
\widecheck{Y}_\lambda = \pi_\lambda^*\widecheck{X}_\lambda.
\end{align}
Now, simply because $S_m$ is finite, there must exist a $\pi_0$ and a sequence $\lambda_n \to 0$ with $\pi_{\lambda_n}=\pi_0$ for all $0$. Inserting $\lambda=\lambda_n$ into equation \eqref{eq:lambda} and letting $\lambda\to \infty$ we get, since $\pi_0^*$ is continuous, that
\begin{align*}
    \mathrm{re}(z_0\overline{Z_\vee}) =\pi_0^* \mathrm{re}(w_0\overline{W_\vee}).
\end{align*}
By subsequently inserting a small but non-zero $\lambda_n$ into \eqref{eq:lambda} and subtracting $\mathrm{re}(z_0\overline{Z_\vee}) =\pi_0^* \mathrm{re}(w_0\overline{W_\vee})$ from both sides, we obtain
\begin{align*}
    \lambda_n\mathrm{im}(z_0\overline{Z}_\vee) =\lambda_n\pi_0^* \mathrm{im}(w_0\overline{W_\vee}) \, \Rightarrow \, \mathrm{im}(z_0\overline{Z_\vee}) =\pi_0^* \mathrm{im}(w_0\overline{W_\vee}).
\end{align*}
Hence, $z_0\overline{Z_\vee}=w_0\overline{W_\vee}$ up to a permutation, as claimed.

\paragraph{Claim 3: $Z=W$.} Since $\abs{z_0}=\abs{w_0}$, we must have $z_0=\theta w_0$ for some $\theta \in \mathbb{S}$. By inserting this into Claim 2 and dividing by $w_0\neq 0$ (which is true due to $W \in C_\epsilon$), we get that $\theta \overline{Z_\vee}$ equals $\overline{W_\vee}$ up to a permutation. By conjugating that equality, and using that $\theta^{-1}=\overline{\theta}$, we get $Z_\vee = \theta W_\vee$ up to a permutation. This together with $z_0=\theta w_0$ however exactly means that $Z=W$ as points in $\calR\calP_0(m)$.

The claim now follows from Stone-Weierstrass.

\end{proof}

The previous lemma shows that $\calNS(m)$ is capable of approximating the function $\gamma$ in Theorem \ref{th:denseset} to arbitrary precision, as long as cases where $z_0$ is close to the origin is ignored.  In order to handle also cases in which $z_0$ is zero, we need to choose the vector unit $\psi$  in a certain manner. This is what the following, simple, lemma is for.

\begin{lemma} \label{lem:vectordense}
    Let $\epsilon>0$. There exists a function $s\in \calNC$ which vanishes for $\abs{z}<\epsilon$, equals $z$ for $\abs{z}>2\epsilon$, and satisfies $\abs{s(z)}\leq\abs{z}$ everywhere.
\end{lemma}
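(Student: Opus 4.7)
The key observation is that the complex ReLU $\rho_\C(z,\eta)=\mathrm{ReLU}(\abs{z}-\eta)\,z/\abs{z}$ acts as a soft-thresholding: it kills $z$ when $\abs{z}\leq \eta$, and otherwise preserves the phase of $z$ while reducing its modulus by exactly $\eta$. Consequently, any finite $\C$-linear combination of the form $\sum_j b_j\,\rho_\C(z,\eta_j)$ is a member of $\calNC$ (realised as: a $\C$-linear duplication layer $z\mapsto(z,\dots,z)$, a pointwise $\rho_\C$ with per-channel learnable thresholds $\eta_j$, and a $\C$-linear output layer $(u_j)_j\mapsto\sum_j b_j u_j$) and produces a piecewise $\R$-linear function of $\abs{z}$ times $z/\abs{z}$. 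This is the natural ``basis'' in which to try to build $s$.

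I propose the explicit two-channel network
\begin{align*}
    s(z) \;=\; 2\,\rho_\C(z,\epsilon)\;-\;\rho_\C(z,2\epsilon),
\end{align*}
which clearly lies in $\calNC$ (only $\C$-linear layers with coefficients $1,1,2,-1$ and two pointwise $\rho_\C$'s are used, no bias is needed). The coefficients $2$ and $-1$ are chosen precisely so that the linear piece on $\{\abs{z}>2\epsilon\}$ collapses to $z$: writing $\rho_\C(z,\eta)=(\abs{z}-\eta)\,z/\abs{z}$ on this region gives $[2(\abs{z}-\epsilon)-(\abs{z}-2\epsilon)]\,z/\abs{z}=\abs{z}\,z/\abs{z}=z$.

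Correctness is then a case split on $\abs{z}$. For $\abs{z}\leq\epsilon$ both terms vanish, so $s(z)=0$. For $\abs{z}>2\epsilon$ the computation above gives $s(z)=z$ and in particular $\abs{s(z)}=\abs{z}$. The only delicate region is the intermediate one $\epsilon<\abs{z}\leq 2\epsilon$, where $\rho_\C(z,2\epsilon)=0$ and hence $s(z)=2(\abs{z}-\epsilon)\,z/\abs{z}$; here the bound $\abs{s(z)}\leq\abs{z}$ reduces to $2(\abs{z}-\epsilon)\leq\abs{z}$, i.e.\ $\abs{z}\leq 2\epsilon$, which holds in this region (with equality at the endpoint, matching continuity with the region $\abs{z}>2\epsilon$). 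This mild tightness check in the middle region is the only step requiring a moment's thought; everything else is a direct substitution.
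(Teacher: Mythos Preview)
Your proof is correct and takes essentially the same approach as the paper. In fact, your construction $s(z)=2\rho_\C(z,\epsilon)-\rho_\C(z,2\epsilon)$ is literally the same function the paper builds: the paper defines auxiliary functions $m$ and $n$ and sets $s(z)=(m(\abs{z})+\tfrac{3}{2}\epsilon\, n(\abs{z}))\,z/\abs{z}$, but simplifying that combination yields exactly $2\,\mathrm{ReLU}(\abs{z}-\epsilon)-\mathrm{ReLU}(\abs{z}-2\epsilon)$ times $z/\abs{z}$, so your presentation is simply a more direct route to the same $\calNC$ element and the same three-region case analysis.
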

\begin{proof}
One easily realizes that
 \begin{align*}
    n(t)&=\tfrac{1}{\epsilon}\left(\mathrm{ReLU}(t-\epsilon) -\mathrm{ReLU}(t-2\epsilon)\right)\begin{cases}0 &\text{ if } \abs{z}<\epsilon \\
        \tfrac{t-\epsilon}{\epsilon}  &\text{ if }\epsilon \leq t<2\epsilon \\
        1 & \text{ else.}\end{cases} \\
    m(t)&=\tfrac{1}{2}\left(\mathrm{ReLU}(t-\epsilon) +\mathrm{ReLU}(t-2\epsilon)\right)\begin{cases}0 &\text{ if } \abs{z}<\epsilon \\
        \tfrac{t-\epsilon}{2}  &\text{ if }\epsilon \leq t<2\epsilon \\
        t-\tfrac{3}{2}\epsilon & \text{ else.}\end{cases}
        \end{align*}
    If follows that $m(t)+\tfrac{3}{2}\epsilon n(t)$ equals zero for $t<\epsilon$, equals $t$ for $t>2\epsilon$, and is smaller than $t$ for all $t\geq 0$. Consequently, 
    \begin{align*}
       s(z)=\left(m(z)+\tfrac{3}{2}\epsilon n(z)\right) \tfrac{z}{\abs{z}}
    \end{align*}
    fulfills the requirements of the lemma and is, due to the definition of $\rho_\C$, in $\calNC$.
\end{proof}
We can now prove the universality of our architecture.

\begin{proof}[Proof of Theorem \ref{th:universality}]
    Fix a compact, arbitrary set $K \sse \mathcal{P}^m$, $\delta>0$, and $f \in \mathcal{C}(\calRP^m)$ arbitrary.  Our goal is to show that there exists a $\Psi \in \mathcal{NR}(m)$ with $\sup_{Z \in K} \abs{\Psi(Z)-f(Z)} \leq \delta$. For future reference, set  $\omega = \sup_{Z \in K} \sup_{i \in [m]} \abs{z_i}$.
    
    By Theorem \ref{th:denseset}, there exists a function of the form
    \begin{align}
        g(Z)= \sum_{i \in [m]} \gamma(\tau_i^*Z)z_i
    \end{align}
    with $\sup_{Z \in K} \abs{f(Z)-g(Z)}<\tfrac{\delta}{2}$ and $\gamma \in \calC(\mathcal{RP}_0^m)$. Write $\omega' = \sup_{Z\in K} \sup_{i \in \tau_i^*}\abs{\gamma(\tau_i^*Z)}$, and define
    \begin{align*}
        \epsilon = \tfrac{\delta}{4(5m\omega'+2m)}
    \end{align*}
    Lemma \ref{lem:weightsdense}  proves that there exists an $\alpha \in \mathcal{NS}(m)$ with 
    \begin{align*}
        \sup_{Z \in C_\epsilon^m\cap K}\abs{\alpha(Z) - \gamma(Z)} \leq \delta' :=  \min(\tfrac{\delta}{4m\omega},1).
    \end{align*}
    Concretely, this means that 
    \begin{align}
        \abs{\alpha(\tau_i^*Z) - \gamma(\tau_i^*Z)} \leq \delta' \text{ if } \abs{z_i} \geq \epsilon. \label{eq:close}
    \end{align}
    Applying Lemma \ref{lem:vectordense}, we may further choose $\psi$ equal to $s$ as defined in that Lemma. Then, by definition, 
    \begin{align*}
        \Psi(Z) = \sum_{i \in [m]} \alpha(\tau_i^*Z) s(z_i) \in \mathcal{NR}(m).
    \end{align*}
    We now have
    \begin{align*}
        \abs{\Psi(Z)- g(Z)} \leq & \underbrace{\sum_{i: \abs{z_i}<\epsilon} \abs{\alpha(\tau_i^*Z)s(z_i)-\gamma(\tau_i^*Z)z_i }}_{(I)} + \underbrace{\sum_{i:\epsilon \leq  \abs{z_i}<2\epsilon}\abs{\alpha(\tau_i^*Z)s(z_i)-\gamma(\tau_i^*Z)z_i }}_{(II)} \\
        &+\underbrace{\sum_{i: 2\epsilon \leq \abs{z_i}} \abs{\alpha(\tau_i^*Z)s(z_i)-\gamma(\tau_i^*Z)z_i }}_{(III)}.
    \end{align*}
    Let us discuss each of these terms these terms separately.
    
    \underline{$(I)$} For this terms, we have $s(z_i)=0$, and $z_i$ is small. Therefore,
    \begin{align*}
        (I) = \sum_{i: \abs{z_i}<\epsilon} \abs{\gamma(\tau_i^*Z)z_i} \leq m \omega' \epsilon .
    \end{align*}
    
    \underline{$(III)$} On this set, $s(z_i)=z_i$. Therefore
    \begin{align*}
        (III) = \sum_{i: 2\epsilon \leq \abs{z_i}} \abs{\alpha(\tau_i^*Z)-\gamma(\tau_i^*Z)} \abs{z_i} \leq m \delta' \omega,
    \end{align*}
    due to \eqref{eq:close}.
    
    \underline{$(II)$} For these $i$, we have $\abs{s(z_i)-z_i} \leq \abs{s(z_i)}+ \abs{z_i} \leq 4\epsilon$, and $\abs{s(z_i)}\leq \abs{z_i} \leq 2\epsilon$. Again using  \eqref{eq:close}, we consequently obtain
    \begin{align*}
        (II) \leq &\sum_{i:\epsilon \leq  \abs{z_i}<2\epsilon} \abs{\alpha(\tau_i^*Z)-\gamma(\tau_i^*Z)}\abs{s(z_i)}  + \abs{\gamma(\tau_i^*Z)}\abs{s(z_i)-z_i} \leq 2m\delta' \epsilon  + 4m\omega' \epsilon \leq m(2 + 4\omega') \epsilon
    \end{align*}
    
    Using the above three estimates, and our definition of $\delta'$ and $\epsilon$, we obtain 
    \begin{align*}
        \abs{\Psi(Z)-g(Z)} &\leq \epsilon( 5m \omega' + 2m  ) +  \delta' m \omega 
    \end{align*}
    The proof is finished.
\end{proof}

\subsection{Proof of Proposition \ref{prop:NRplus}} \label{sec:Nrplus}
Here, we prove that the networks in $\calNR^+(m)$ are rotation equivariant and permutation invariant, and that the set of them includes the networks in $\calNR(m)$.

\begin{proof}[Proof of Proposition \ref{prop:NRplus}]
(i). It is clear that each $\alpha^+ \in \calNS^+(m)$ still is rotation invariant(this follows from the transition to $Z\otimes \overline{Z}$ in the very first step) and that each $\psi^+\in \calNC^+(m)$ still is rotation equivariant (this follows from the fact that $\C$-linear maps and $\rho_{\C}$ both are). Since all of the linear layers are permutation equivariant, and all nonlinearities are applied pointwise, it also obvious that they are both permutation equivariant. Because of this,
\begin{align*}
    \Psi^+(\theta \pi^*Z) &= \sum_{i \in [m]} \alpha^+(\theta \pi^*Z)_i\cdot\psi^*(\theta \pi^*Z)_i = \sum_{i \in [m]} \alpha^+(Z)_{\pi^{-1}(i)}\cdot \theta\psi^+( \pi^*Z)_{\pi^{-1}(i)} = \big\lceil k = \pi^{-1}(i) \big\rceil \\
    &= \theta\cdot \sum_{k \in [m]} \alpha^+(Z)_k\cdot \psi^+( \pi^*Z)_k = \theta\cdot\Psi^+(Z),
\end{align*}
i.e. $\Psi^+ \in \calC(\mathcal{PR}(m))$.

(ii) First, by choosing all input-output-channel pairs in the linear layers of $\psi^+$ as multiples of the identity, we can for any $\psi \in \calNC(m)$ achieve $\psi^+(Z)_i = \psi(z_i)$, $i \in [m]$. We may hence concentrate our efforts of proving that for any $\alpha \in \calNS(m)$, it is possible to choose the $S_m$-invariant layers of an $\alpha^+ \in \calNS^+(m)$ such that $\alpha^+(Z) = \alpha(\tau_i^*Z)_i$, $i\in [m]$. We do this in three steps.

\paragraph{Step 1:} We claim that there for each first linear layer $B_0$ of an $\alpha\in \calNS(m)$ exists a first linear layer $B_0^+$ of an $\alpha^+\in \calNS^+(m)$ with $$ B_0^+(T)=\sum_{i \in [m]} e_i \otimes \tau_i^*B_0(\tau_i^*T),$$ where $B_0$ is a linear layer of an $\alpha$-unit. It is enough to prove that this is true for each input-output-channel pair of the linear layer. However, this is exactly the statement of Corollary \eqref{cor:isomorphism}.

\paragraph{Step 2:} Now we claim that for each subsequent linear layer $B_0$ of an $\alpha$, there exists a corresponding linear layer $B_0^+$ of an $\alpha^+$ so that
\begin{align*}
    B_0^+(\sum_{i \in [m]}e_i \otimes v_i) =  \sum_{i \in [m]}e_i \otimes \tau_i^*B_0(\tau_i^*v_i)
\end{align*}
It is again enough to prove this for each input-output-channel pair. Each such in $B_0$ is a map $L_0 \in \calL_0(1,1)$. Hence, it suffices to show that the  the map defined by
\begin{align*}
    \calK(\sum_{i \in [m]}e_i \otimes v_i) =  \sum_{i \in [m]}e_i \otimes \tau_i^*L_0(\tau_i^*v_i)
\end{align*}
is in $\calL(2,2)$. To this end, let $\pi \in S_m$ be arbitrary. We  have
\begin{align}
     \calK(\pi^*(\sum_{i \in [m]}e_i \otimes v_i))) = \calK(\sum_{i \in [m]}e_{\pi(i)} \otimes \pi^*v_i))\label{eq:parallellLnoll} = \calK(\sum_{i \in [m]}e_i \otimes \pi^*v_{\pi^{-1}(i)}))  =  \sum_{i \in [m]}e_i \otimes \tau_i^*L_0(\tau_i^*\pi^*v_{\pi^{-1}(i)}).
\end{align}
We performed an index shift in the second step,. Now we utilize that $\tau_i \circ \pi \circ \tau_{\pi^{-1}(i)} \in \Stab(0)$ to see that
\begin{align*}
    L_0(\tau_i^*\pi^*v_{\pi^{-1}(i)}) =  L_0(\tau_i^*\pi^*\tau_{\pi^{-1}(i)}^*\tau_{\pi^{-1}(i)}^*v_{\pi^{-1}(i)}) =
   \tau_i^*\pi^*\tau_{\pi^{-1}(i)}^* L_0(\tau_{\pi^{-1}(i)}^*v_{\pi^{-1}(i)}),
\end{align*}
since $L_0$ is $\Stab(0)$-equivariant. Consequently,  \eqref{eq:parallellLnoll} is equal to
\begin{align*}
    \sum_{i \in [m]}e_i \otimes \pi^* \tau_{\pi^{-1}(i)}^*L_0(\tau_{\pi^{-1}(i)}^*v_{\pi^{-1}(i)})
   = \sum_{i \in [m]}e_{\pi(i)} \otimes \pi^* \tau_{i}^*L_0(\tau_{i}^*v_{i})  = \pi^*(\sum_{i \in [m]}e_{i} \otimes \tau_{i}^*L_0(\tau_{i}^*v_{i})) = \pi^*\calK(\sum_{i \in [m]}e_i \otimes v_i).
\end{align*}
We again performed index shifts. Thus, $\calK$ is $S_m$-equivariant, which was to be proven.

\begin{figure}
    \centering
    \includegraphics[width=.35\textwidth]{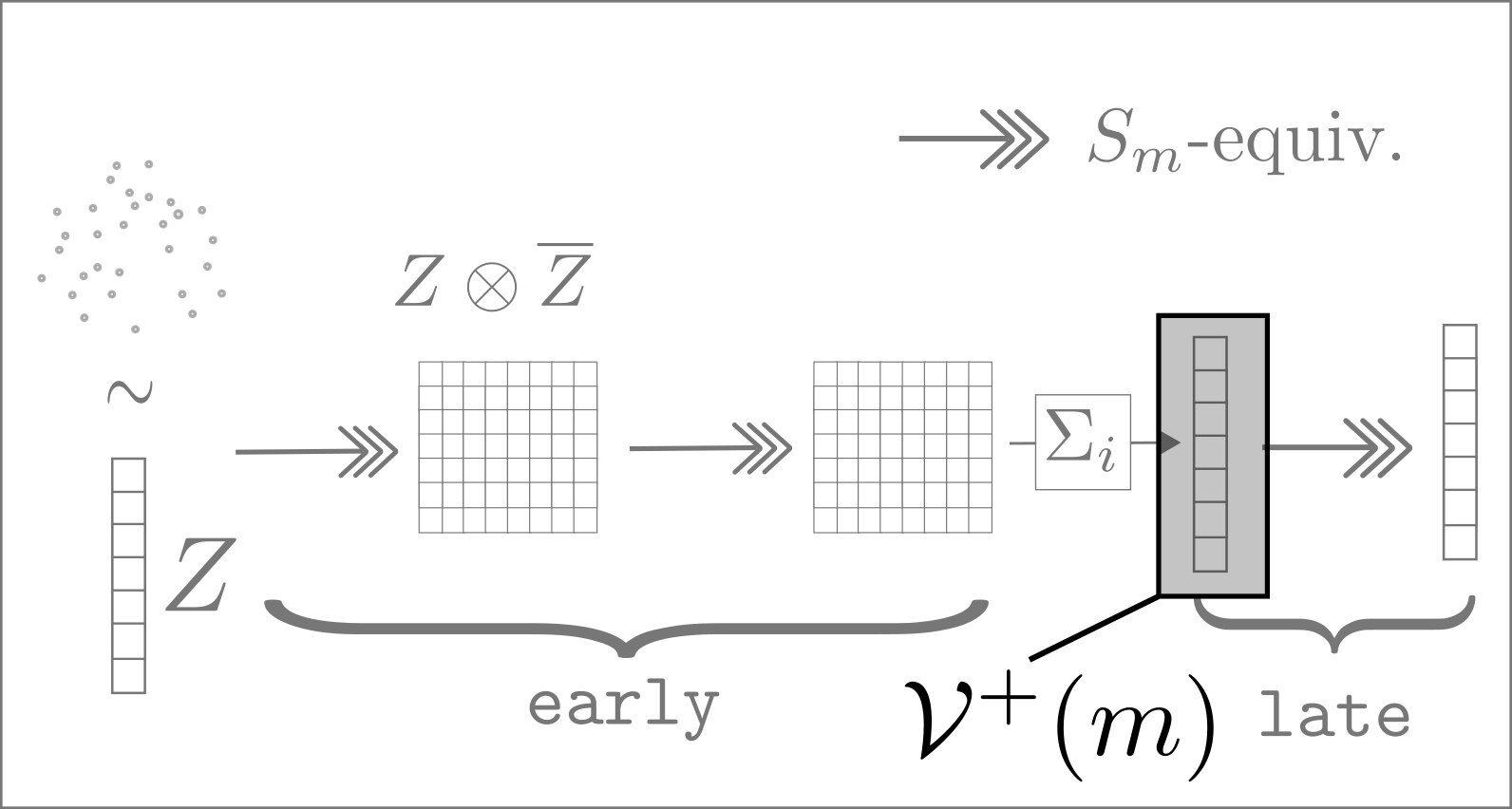}
    \caption{Definition of the space $\calV^+(m)$. \label{fig:Vmplus}}
\end{figure}
\paragraph{Step 3:}  By inductively applying Step 1 and 2, we obtain that there for every function $f$ corresponding to the early layers of a network in $\calNS(m)$, there exists a network in $\calNS(m)^+$ whose first early layers have an output
\begin{align*}
   f^+(Z)= \sum_{i \in [m]} e_i \otimes \tau_i^*(f(\tau_i^*Z)).
\end{align*}
We now carry out the summation over one of the tensor dimensions of this to obtain
\begin{align*}
    \sum_{j \in [m]} f^+(Z)_{ji} = \sum_{j \in [m]}  [\tau_i^*(f(\tau_i^*Z))]_j = \big\lceil k= \tau_i(j) \big\rceil =  \sum_{k \in [m]}  (f(\tau_i^*Z))_k
\end{align*}
Remember the definition of the space $\calV(m)$ in the proof of Lemma \ref{lem:weightsdense}. If we correspondingly define $\calV^+(m)$ as the set of functions defined by summing the output of the early layers of members of $\calNS^+(m)$-networks (see Figure \ref{fig:Vmplus}), the above shows   there for every $v \in \calV(m)$ exists a $v^+ \in \calV^+(m)$ with
\begin{align*}
    v^+(Z)_i = v(\tau_i^*Z), \quad i \in [m].
\end{align*}
 By subsequently choosing all channels in the final layers as appropriate multiples of the identity, we can therefore achieve that $\alpha^+(Z)_i = \alpha(\tau_i^*Z)$ for all $i$, which was to be proven.

\end{proof}
\subsection{The two-cloud architecture} \label{sec:twoclouds}

Here, we provide a discussion on the architectures for handling pairs of point clouds. Similarly as in the proof of the main result, we first need to equip the space of clouds of point pairs with a metric structure.

\begin{defi}
    For a subgroup of $G \sse S_m$, we let $\sim_G$ denote the equivalence relation $$(Z,X)\sim (W,Y) \Leftrightarrow \exists \, \pi \in G : (Z,X)=(\pi^*W,\pi^*Y)$$ on $\C^m \times \C^m$. We equip the set of such equivalence classes with the metric
    \begin{align*}
        d_G((Z,X),(W,Y)) = \inf_{\pi \in G} \big(\norm{Z-\pi^*W}^2 + \norm{W-\pi^*Y}^2\big)^{\sfrac{1}{2}}
    \end{align*}
    We denote the space that emerges for $G=S_m$ with $\calPP^m$, and for $G=\Stab(0)$ with $\calPP_0^m$.
    
    On $\calPP^m$ and $\calPP_0^m$ we define a further equivalence relation via $$(Z,X) \sim (W,Y) \Leftrightarrow \exists \, \theta, \omega \in \S :\ Z=\theta W, X= \omega Y. $$
    On the resulting spaces of equivalence classes, which we denote $\calRPP^m$ and $\calRPP_0^m$, we define a metric through
    \begin{align}
        d_{\S^2}((Z,X),(W,Y)) = \inf_{\theta, \omega \in \S} d((Z,X), (\theta W, \omega Y)).
    \end{align}
\end{defi}

Recall that $\calR_2(m)$ was the space of functions in $\calC(\calPP^m)$ which were rotation equivariant with respect to the first cloud, and rotation invariant to the second, and the neural network architectures $\calNR_2(m)$ and $\calNR_2^+(m)$ proposed in Section \ref{sec:NR2} of the main paper. 

The first result we wish to present for $\calNR_2(m)$ is a negative one. Its proof explicitly utilizes the basis for $\calL_2(2,1)$ provided in Section \ref{sec:spanning sets}. Hence, it might be wise to familiarize oneself with that basis before reading the proof.

\begin{prop} \label{prop:nogoNR}
    $\mathcal{NR}^2(m)$ is not dense in $\calR^2(m)$ for any $m \geq 5$.
\end{prop}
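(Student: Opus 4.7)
The plan is to reduce the density question to a statement about the weight-unit architecture $\mathcal{NS}_2(m)$ and then exhibit the obstruction caused by the additive form of its first layer. A two-cloud analogue of Theorem \ref{th:denseset}, proved by the same algebraic manipulations on real rotation-equivariant polynomials (now with separate rotation weights in $Z$ and $X$), shows that every $f \in \mathcal{R}_2(m)$ can be uniformly approximated on compacta by expressions of the form $\sum_i \gamma(\tau_i^*Z,\tau_i^*X)z_i$ with $\gamma$ continuous, jointly $\Stab(0)$-invariant, and separately rotation-invariant in each cloud. Thus non-density of $\mathcal{NR}_2(m)$ in $\mathcal{R}_2(m)$ follows if $\mathcal{NS}_2(m)$ fails to be dense in $\mathcal{C}(\mathcal{RPP}_0^m)$ on a compact region where $|z_0|$ is bounded away from zero, since the vector unit $\psi(z_0)$ cannot absorb the error as it could in the proof of Theorem \ref{th:universality}.

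Next I would exploit the explicit basis of $\mathcal{L}_0(2,1)$ from Section \ref{sec:spanning sets} to examine the very first layer $L(Z,X)=A(Z\otimes\overline{Z})+B(X\otimes\overline{X})$. The essential structural feature is that this combination is additive in the two cloud-tensors: linear cross-interactions of the form $z_i\overline{x_j}$ or $x_i\overline{z_j}$ simply do not appear. When $x_0=0$, every entry of $X\otimes\overline{X}$ involving index $0$ vanishes, so the entire contribution of $B(X\otimes\overline{X})$ at position $0$ collapses to a linear combination of $S_{m-1}$-symmetric bilinear forms in $X_\vee$ (only two independent such functionals, $\sum_{j\geq 1}|x_j|^2$ and $|\sum_{j\geq 1}x_j|^2$). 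Combined with the full $A$-contribution that still carries $|z_0|^2$ and $z_0\overline{Z}$, the pointwise nonlinearity and subsequent $\Stab(0)$-equivariant vector-to-vector layers can only generate a restricted algebra of functions in which the pairing of $z_0$ to individual $x_j$ features (for $j\geq1$) has been lost. I would then, in the spirit of Proposition \ref{prop:nogo1}, single out a specific jointly-invariant continuous function $\gamma$ that distinguishes two orbits $(Z,X)$ and $(Z,X')$ in $\mathcal{RPP}_0^m$ having $x_0=x_0'=0$, the same $X_\vee$-multiset, and the same $Z$, but genuinely different $\Stab(0)$-pairings of $X_\vee$ against $Z_\vee$; and check via the representation above that every $\alpha\in\mathcal{NS}_2(m)$ evaluates identically on the two orbits up to an error controlled by $|x_0|$ alone.

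To pass from non-density of $\alpha$-units to non-density of full networks, I would place the bad orbits inside a family of inputs parametrized by $|z_0|$ large (so that $\psi(z_0)$ is bounded below), and show that the $\ell^2$-difference $|\Psi(Z,X)-f(Z,X)|$ stays bounded away from $0$ uniformly over $\mathcal{NR}_2(m)$; the assumption $m\geq 5$ enters, as in Proposition \ref{prop:nogo1}, via the need to have enough degrees of freedom in $X_\vee$ to realize a non-trivial pair of unpaired but $B$-indistinguishable configurations. The main obstacle is making the separation step rigorous: one must show, using the 15-dimensional basis, that the algebra generated by $\mathcal{NS}_2(m)$-outputs on the slice $\{x_0=0\}$ depends on $X$ only through $\Stab(0)$-invariant functions of $X_\vee$, independent of how $X_\vee$ is ``paired'' with the remaining entries of $Z$, which is the ingredient that the positive density result on the restricted region of Section \ref{sec:twoclouds} must work hard to avoid.
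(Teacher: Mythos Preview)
Your proposal contains a genuine gap: the obstruction you identify is not actually an obstruction. You claim that on the slice $\{x_0=0\}$ the algebra generated by $\mathcal{NS}_2(m)$ ``depends on $X$ only through $\Stab(0)$-invariant functions of $X_\vee$, independent of how $X_\vee$ is paired with the remaining entries of $Z$''. This is false. It is true that no cross-term $z_i\overline{x_j}$ appears in the additive first layer, and it is also true that position~$0$ of $B(X\otimes\overline{X})$ only sees $\sum_{j}|x_j|^2$ and $|\sum_j x_j|^2$. But you are ignoring positions $j\geq 1$. Even with $x_0=0$, the basis elements $K_{12}$, $K_{13}$ and $K_{14}$ place, respectively, $(\overline{\sum_k x_k})\,x_j$, $(\sum_k x_k)\,\overline{x_j}$ and $|x_j|^2$ at position $j$. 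In the very same first-layer vector, position $j$ also carries $Z$-dependent quantities such as $z_0\overline{z_j}$ from $K_{11}(Z\otimes\overline{Z})$. The pointwise nonlinearity then mixes these, so already after one nonlinear step the network produces quantities like $\rho(|x_j|^2 + z_0\overline{z_j})$ that genuinely couple the $j$-th $X$-coordinate to the $j$-th $Z$-coordinate. Summing over $j$ gives an $\alpha$-output that is \emph{not} invariant under permuting $X_\vee$ while holding $Z$ fixed. Hence pairs $(Z,X)$ and $(Z,\sigma^*X)$ with the same $X_\vee$-multiset are in general distinguishable by $\mathcal{NS}_2(m)$, and your counterexample scheme collapses.

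The paper's argument locates the obstruction elsewhere. It restricts further to \emph{centered} clouds, meaning $x_0=0$ \emph{and} $\sum_i x_i=0$; the second condition is exactly what kills $K_{12}$ and $K_{13}$. On that slice the only position-dependent $X$-information surviving the first layer is $K_{14}(X\otimes\overline{X})=(|x_j|^2)_j$, so $\alpha(Z,X)$ depends on $X$ only through the norms $(|x_j|)_j$. The counterexample is then a pair of \emph{norm-equal} centered clouds $X,\widetilde X$ (same $|x_j|$ for every $j$, hence indistinguishable to every $\alpha$) with different pairwise distances $|x_j-x_k|$; a function in $\calR_2(m)$ built from such distances separates them. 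The specific $Z$ is chosen with $z_i=0$ for $i\geq 1$, which forces $\psi(z_i)=0$ for $i\geq 1$ and reduces the whole network to $\alpha(Z,X)\psi(z_0)$, so no preliminary reduction to density of $\gamma$'s is needed. The constraint $m\geq 5$ enters because the explicit norm-equal centered pair requires five points. In short: the missing idea is that the correct invariance the network cannot break is ``same norms, different phases'' on centered $X$, not ``same multiset, different pairing''.
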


 \begin{proof}
 First, let us notice that since we only modify the architectures for calculating the weight units compared to the one-cloud case, the networks in $\calNR_2(m)$ all have the form
 \begin{align*}
     \Psi(Z,X) = \sum_{i \in [m]}\alpha(\tau_i^*Z,\tau_i^*X) \psi(z_i).
 \end{align*}
      with $\alpha$ $\Stab(0)$-invariant and  invariant to rotations of either cloud.
      
      Let us call clouds $X$ with $\sum_{i \in [m]} x_i =0$ and  $x_0=0$ \emph{centered}. Consider the basis   $(K_i)_{i \in [15]}$ of $\calL_0(2,1)$ described in Section \ref{sec:spanning sets}. All of their action on elements of the form  $X\otimes \overline{X}$ (see in particular the final paragraph of the mentioned section) are identically zero, except for $$K_1(X\otimes \overline{X})= e_0 \sum_{i \in [m]}\abs{x_i}^2, K_5(X\otimes \overline{X})= \mathbb{1} \sum_{i \in [m]}\abs{x_i}^2 \text{ and } K_{14}(X\otimes \overline{X}) =(\abs{x_i}^2)_{i \in [m]}.$$ 
       Consequently, when $X$ is centered, the very first layer of the network, and therefore the entire value $\alpha(Z,X)$, can only depend on the  norms $(\abs{x_i})_{i \in [m]}$ (and $Z$). Hence, if $X,\widetilde{X}$ are centered clouds with $\abs{x_i}=\abs{\tilde{x}_i}$ for all $i$, there must be
       \begin{align}
           \alpha(Z,X) = \alpha(Z,\widetilde{X}) \label{eq:normeq}
       \end{align}
       To increase readability, let us refer to such pairs of centered clouds as \emph{norm-equal}.

       We now show that \eqref{eq:normeq} leads to a contradiction. Consider functions of the form
        \begin{align}
            f(Z,X)= \sum_{i} \sup_{j \neq i} \inf_{k\neq j,i} a(\abs{x_j-x_k})\cdot b(\abs{z_i}) \ z_i.  \label{eq:badfunction},
        \end{align}
        where $a$ and $b$ are monotone functions. 
        That is, in words: for each $i$, we go over all of the points $x_j$, $j\neq i$, and calculate the distance to nearest neighbor which is not equal to $x_i$. We then insert those distances into $a$, choose the biggest of the resulting values, and  multiply it with $b(\abs{z_i})$ to obtain a weight for $z_i$ to use in a weighted average.
        It is not hard to realize that these are in $\calR_2(m)$.
        
        Let us be a bit more concrete and choose  $b$ to be equal to $0$ on  $[0,1/2]$ and equal to $1$ on $[1,\infty[$ and $a$ in a similar fashion be equal to $0$ on $[0,1/4]$ and equal to $1$ on $[1/2, \infty]$. Now, let $Z$ be a cloud with all points equal to $0$ except for $z_0$, which has norm $1$. We then have 
        \begin{align*}
              f(Z,X)= \sup_{j \neq 0} \inf_{k\neq j,0} a(\abs{x_j-x_k}) z_0.
        \end{align*}
        Note that since both $\rho_\C$ for all $\theta>0$ and all linear layers map $0$ to $0$, we must have $\psi(z_i)=0$ for all $i\neq 0$ and $\psi \in \calNC$. Consequently, for all $\Psi\in \calNR_2(m)$ and $Z$ as above, we have
        \begin{align}
            \Psi(Z,X) = \alpha(Z,X)\psi(z_0). \label{eq:ournetwork}
        \end{align}
         Now suppose that we can construct an norm-equal pair of  balanced clouds $X, \widetilde{X}$ with 
      \begin{enumerate}[(i)]
          \item $\abs{x_i}=\abs{\tilde{x}_i} \leq \frac{1}{2}$ for all $i$
          \item  $\sup_{j \neq 0} \inf_{k\neq j,0} a(\abs{x_j-x_k}) =1$, but $\sup_{j \neq 0} \inf_{k\neq j} a(\abs{\tilde{x}_j-\tilde{x}_k}) =0$,
      \end{enumerate}
      then $f(Z,X)=z_0$, but $f(Z,\widetilde{X})=0$. Consquently,
        \eqref{eq:normeq} would then imply that \eqref{eq:ournetwork} cannot approximate \eqref{eq:badfunction}  for both $(Z,X)$ and $(Z,\widetilde{X})$. To see that this is possible, consider a cloud $X$ with $x_0=0$, $x_{1,2}=\sfrac{1}{2}$, $x_{3,4}=\pm \sfrac{i}{2}$ and, if needed, the rest of the points arranged in a balanced fashion close to the origin. Then, $X$ is balanced, and surely fulfills (i). We would furthermore have
      \begin{align*}
          \sup_{j \neq 0} \inf_{k\neq j,0} a(\abs{x_j-x_k}) \geq  \inf_{k\neq 1,0} a(\abs{x_1-x_k}) =1, 
      \end{align*}
      since all points in the cloud not equal to $1$ are at a distance further than $\sfrac{1}{4}$ from $x_1$. Now define $\widetilde{X}$ by letting all points in $X$ be fixed, but rotating $x_3$ and $x_4$ using the same rotation $\theta$ (see Fig. \eqref{fig:centered}). Then, $(X, \tilde{X})$ surely is a norm-equal pair. However, we can rotate $x_3$ and $x_4$ in a fashion so that each point in $\tilde{X}$ has a nearest neighbor at a distance smaller than $\sfrac{1}{4}$. Consequently,
       \begin{align*}
          \sup_{j \neq 0} \inf_{k\neq j,0} a(\abs{\tilde{x}_j-\tilde{x}_k}) =0.
      \end{align*}
      This proves the proposition.

\begin{figure}
    \centering
    \includegraphics[width=.5\textwidth]{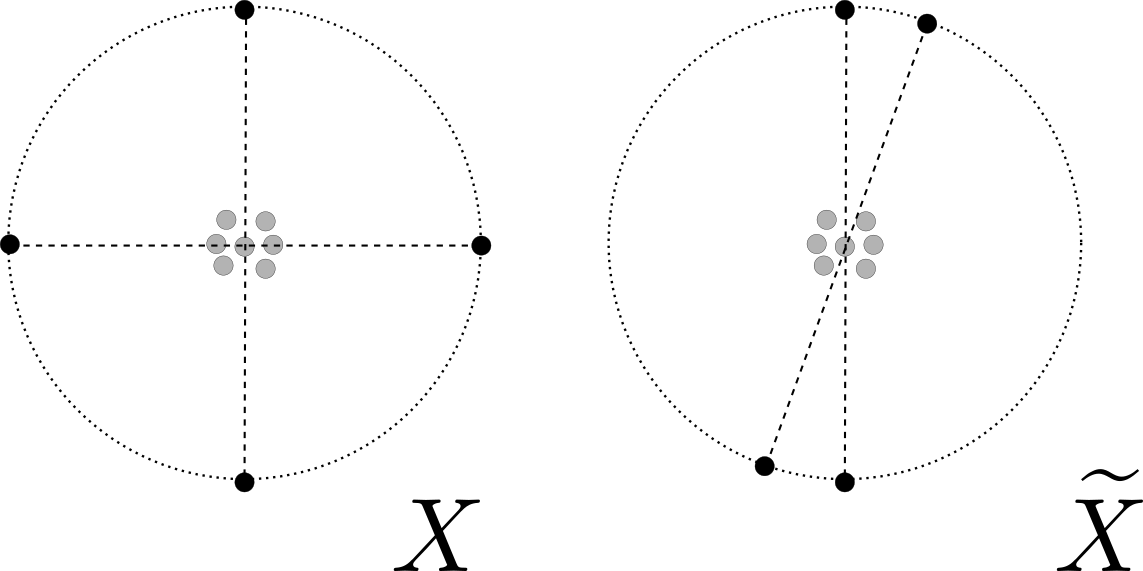}
    \caption{The norm-equal pair of centered clouds $X, \widetilde{X}$ used in the proof of Proposition \ref{prop:nogoNR}.}
    \label{fig:centered}
\end{figure}
    
    \end{proof}
    
The last proposition shows that in order to prove a universality result, we need to restrict the set of functions we want to approximate. The following theorem describes one such possible restriction: If we are only concerned with pairs $(Z,X)$ for which $\abs{z_i} \lesssim \abs{x_i}$, i.e. cloud pairs for which points close to the origin in $X$ correspond to points close to the origin in $Z$, we again obtain universality
\begin{theo} 
     For $a>0$, define the set
    \begin{align*}
        D_{a} = \set{ (Z,X) \in \mathcal{PP}^m \, \vert \, a\abs{z}_i \leq \abs{x}_i \ , i \in [m]}.
    \end{align*}
    Then, both $\mathcal{NR}_2(m)$ and $\mathcal{NR}_2^{+}(m)$ are dense in the space of $\calC(D_a)$-functions which are rotation-equivariant with respect to the first cloud.
\end{theo}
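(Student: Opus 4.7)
The plan is to mirror the proof of Theorem~\ref{th:universality}, adapting each of its three ingredients to the two-cloud setting and exploiting the constraint $a |z_i| \leq |x_i|$ that defines $D_a$. I will first establish a two-cloud analogue of Theorem~\ref{th:denseset}, then a two-cloud analogue of Lemma~\ref{lem:weightsdense}, and finally combine them with the vector-unit trick of Lemma~\ref{lem:vectordense}. The $\calNR_2^+(m)$ case then follows from $\calNR_2(m) \sse \calNR_2^+(m)$, an inclusion proved exactly as in Proposition~\ref{prop:NRplus}(ii).

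For the first step, the task is to show that functions of the form $\sum_{i \in [m]} \gamma(\tau_i^* Z, \tau_i^* X) z_i$, with $\gamma \in \calC(\calRPP_0^m)$ jointly rotation-invariant and $\Stab(0)$-invariant, are dense in $\calR_2(m)$ on compacts. The argument copies the proof of Theorem~\ref{th:denseset}: real polynomials in $(Z, \overline{Z}, X, \overline{X})$ are dense on compacts; symmetrising over the diagonal $S_m$-action enforces joint permutation invariance; averaging over the $SO(2)$-action on $Z$ against $\overline{\theta}$ and independently over the $SO(2)$-action on $X$ projects onto the correct equivariance type; and splitting the resulting $\pi \in S_m$ sums via $\pi = \tau_i \circ \sigma$ with $\sigma \in \Stab(0)$ yields the claimed representation.

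For the second step, I want to prove that for every $\epsilon > 0$, the weight-unit architecture underlying $\calNR_2(m)$ is dense in $\calC(C_\epsilon^{m,a})$, where $C_\epsilon^{m,a} = \{(Z,X) \in D_a \cap \calRPP_0^m : |z_0| > \epsilon\}$. As in Lemma~\ref{lem:weightsdense}, I would apply Stone--Weierstrass: vanishing-nowhere follows from biases, and point separation is the real work. The first layer $L(Z,X) = A(Z \otimes \overline{Z}) + B(X \otimes \overline{X})$, together with the $\Stab(0)$-equivariant maps $T \mapsto e_0 T_{00}$, $T \mapsto T e_0$, $T \mapsto T^T e_0$ applied to both $Z \otimes \overline{Z}$ and $X \otimes \overline{X}$, can output channels equal to $|z_0|^2$, $z_0 \overline{Z_\vee}$, $\overline{z_0} Z_\vee$, $|x_0|^2$, $x_0 \overline{X_\vee}$, $\overline{x_0} X_\vee$. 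Invoking classical neural network universality~\cite{cybenko1989approximation} for the pointwise nonlinearity, followed by identity-channel propagation and the final summation, yields access to
\begin{align*}
    \sum_{i \geq 1} p(z_0 \overline{z_i}, \overline{z_0} z_i, x_0 \overline{x_i}, \overline{x_0} x_i)
\end{align*}
for every polynomial $p$. These are precisely the joint powersum polynomials of the multiset $\{(z_0 \overline{z_i}, x_0 \overline{x_i})\}_{i \geq 1}$ and therefore determine it. Since $|z_0| > \epsilon$ and the $D_a$-constraint forces $|x_0| > a\epsilon$, the $\mathbb{S} \times \mathbb{S}$-quotient lets me normalise both $z_0$ and $x_0$ to positive reals, from which the multiset above uniquely recovers $(Z_\vee, X_\vee)$ up to a common $\Stab(0)$-permutation.

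For the third step, given $f \in \calR_2(m)$ and a compact $K \sse D_a$, Step~1 supplies a $g = \sum_i \gamma(\tau_i^* \cdot) z_i$ close to $f$; Step~2 supplies an $\alpha$ close to $\gamma$ on $C_\epsilon^{m,a}$; and Lemma~\ref{lem:vectordense} supplies the vector unit $\psi = s$ that vanishes for $|z_i| < \epsilon$. The three-regime error bound from the proof of Theorem~\ref{th:universality} then carries over verbatim, using that on $D_a$ the condition $|z_i| > \epsilon$ implies $|x_i| > a\epsilon$ so the weight-unit approximation applies exactly where it is needed. I expect the main obstacle to be the joint point-separation in Step~2, where one must ensure that a \emph{single} $\Stab(0)$-permutation aligns both clouds; this forces the cross-polynomial $p$ above to be genuinely realisable through the first nonlinearity, and it is precisely the $D_a$-constraint that rules out the pathology of Proposition~\ref{prop:nogoNR}, where $|x_0| = 0$ would annihilate all $X$-channels of the first layer and separation of $X$ from its norm-equals would fail.
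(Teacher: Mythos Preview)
Your proposal is correct and follows the paper's three-step template exactly: a two-cloud analogue of Theorem~\ref{th:denseset}, a two-cloud analogue of Lemma~\ref{lem:weightsdense} via Stone--Weierstrass on the set where $|z_0|>\epsilon$, and the cutoff argument with $s\in\calNC$ from Lemma~\ref{lem:vectordense}; the inclusion $\calNR_2(m)\sse\calNR_2^+(m)$ is likewise handled as in Proposition~\ref{prop:NRplus}.

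The only substantive variation is in how you secure a \emph{common} permutation in the point-separation step. You feed four first-layer channels $z_0\overline{Z}$, $\overline{z_0}Z$, $x_0\overline{X}$, $\overline{x_0}X$ through the positionwise MLP obtained by restricting the later $\calL_0(1,1)$ layers to multiples of the identity, and invoke multivariate universality to realise all joint moments $\sum_i p(z_0\overline{z_i}, x_0\overline{x_i})$; these determine the multiset of pairs and hence a single $\pi_0$ aligning both clouds at once. The paper instead reuses only the univariate machinery already established in Lemma~\ref{lem:weightsdense}: it forms the one-parameter single-channel family $z_0\overline{Z_\wedge}+\lambda x_0\overline{X_\wedge}$, applies the powersum argument for each fixed $\lambda$ to obtain $z_0\overline{Z_\vee}+\lambda x_0\overline{X_\vee}=\pi_\lambda^*(w_0\overline{W_\vee}+\lambda y_0\overline{Y_\vee})$, and then extracts a common $\pi_0$ from a sequence $\lambda_n\to 0$ using finiteness of $S_{m-1}$. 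Your route is more direct but leans on multichannel MLP universality; the paper's is slightly more elementary and self-contained relative to the one-cloud lemma. Both hinge on the point you correctly isolate: on $D_a$, the condition $|z_0|>\epsilon$ forces $|x_0|>a\epsilon>0$, so one may divide by both anchors, which is exactly what fails in the counterexample of Proposition~\ref{prop:nogoNR}.
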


\begin{proof}
 The proof follows the beats of Theorem \ref{th:universality}  very closely. We will therefore only provide a sketch, concentrating on the parts of the argument which are significantly different.

One proves   $\mathcal{NR}_2(m)\sse \mathcal{NR}_2^{+}(m)$ just as the corresponding result for single cloud networks. Hence, it is enough to prove universality for $\mathcal{NR}_2(m)$. To do that, on first generalizes Theorem \ref{th:denseset} by proving that the set of functions
\begin{align*}
    g(Z,X) = \sum_{i \in [m]}\gamma(\tau_i^*Z,\tau_i^*X) z_i,
\end{align*}
where $\gamma$ is arbitrary in the space of  $\calC(\mathcal{RPP}^m_0)$, is dense in $\calR_2(m)$ The proof is more or less verbatim equal to the proof of the $\calR(m)$-result : One first proves that we can approximate the function using a polynomial in $\calR^2(m)$, similarly as in Lemma \ref{lem:equipolys}. The proof then boils down to rewriting polynomials of the form
\begin{align*}
    \sum_{\pi \in S_m} Z^{\pi^*\alpha_0}\overline{Z}^{\pi^*\beta_0} Z^{\pi^*\alpha_1}\overline{X}^{\pi^*\beta_1}
\end{align*}
with $\abs{\alpha_0}=\abs{\beta_0}+1$ and $\abs{\alpha_1} = \abs{\beta_1}$. It should be stressed that the last equalities are consequences of the 'separate equivariance' property.

Next, one moves on to generalizing Lemma \ref{lem:weightsdense}. One proves that the space $\mathcal{NS}^2(m)$ of two-cloud $\alpha$-units is dense in $\mathcal{C}(C_{a,\epsilon})$, where
\begin{align*}
    C_{a,\epsilon} = \set{ (Z,X) \in D_{a} \, \vert \, \abs{z_0} \geq \epsilon}.
\end{align*}
Note that if $(Z,X) \in C_{a,\epsilon}$, we also have $\abs{x_0}\geq a \abs{z_0} > 0$.

The idea of the proof is again to apply the Stone-Weierstrass theorem, with the functions $\calV_2(m)$ that are given by outputs of $\alpha$-units after the invarization step as the function set $S$ (see the proofs of Lemma \ref{lem:weightsdense} and Proposition \ref{prop:NRplus}, as well as Figures \ref{fig:Vm} and \ref{fig:Vmplus}).
To do this, let us first note that by letting the very first layer of $\alpha$ only depend on either cloud, and applying the same steps as before, we get that if $v(Z,X)=v(W,Y)$ for all $v \in \calV_2(m)$, we must have $\abs{z_0}=\abs{w_0}$ and $\abs{x_0}=\abs{y_0}$. Now
notice that for every $\lambda>0$, we can also choose the output of the very first linear layer of $\alpha$ equal to
\begin{align*}
    z_0 \overline{Z_\wedge} + \lambda x_0 \overline{X_\wedge}, \quad \overline{z_0} Z_\wedge + \lambda \overline{x_0} X_\wedge,
\end{align*}
using the same notation as in the previous proof.
By subsequently following the same arguments as in the one-cloud proof, we see that there must be
\begin{align}
     z_0 \overline{Z_\vee} + \lambda x_0 \overline{X_\vee} = \pi_\lambda^*(w_0 \overline{W_\vee} + \lambda y_0 \overline{Y_\vee}) \label{eq:lambdaperm}
\end{align}
for some permutation $\pi_\lambda$, possibly dependent on $\lambda$. By applying the same trick as we did to the real and imaginary parts of $z_0\overline{Z_\vee}$ and $w_0\overline{W_\vee}$ to conclude that they were equal to each other up to a permutation,  we conclude that there exists a \emph{common} $\pi_0 \in S_m$ with
\begin{align*}
    z_0 \overline{Z_\vee} = \pi_0^* w_0 \overline{W_\vee}, \quad x_0 \overline{X_\vee} = \pi_0^*y_0\overline{Y_\vee}.
\end{align*}
We may now proceed as before -- notice that we can divide by both $z_0$ and $x_0$, since they are both unequal to $0$.

Now, the final argumentation proceeds just as in the proof of Theorem \ref{th:universality}.
\end{proof}

\section{Experiments} \label{sec:exp}

We implemented ZZ-net in PyTorch \cite{pytorch_NEURIPS2019_9015} using PyTorch Lightning \cite{pytorch_lightning_falcon_2019}. For the essential matrix problem we performed hyper parameter tuning using Ray Tune \cite{raytune_liaw2018tune}.

\subsection{Estimating rotations between noisy point clouds} 
\label{exp:toy}
Here, we provide some additional information on experiments on the toy problem.

\paragraph{Data generation}\label{sec:data_gen}
A cloud $Z$ is formed of $m=100$ points distributed on a random triangle. These are subsequently rotated to a cloud $X$ by a random rotation $\theta \in \S$, and low-level inlier noise is added to both clouds. We subsequently, with a probability $r$, exchange each correspondence with an outlier $(\hat{z}_i, \hat{x}_i)$ chosen completely at random. An example of a resulting pair for $r=0.4$ is shown in Figure~\ref{fig:cloudpair}. We generate $2000$, $500$ and $300$ cloud pairs for training, validation and testing, respectively. 
Step by step, the generation procedure is as follows:
\begin{itemize}
    \item To generate the original cloud, without outliers, we first choose three points uniformly randomly on the unit disk - these are the corners of the triangle.
    \item Next, we choose $m=100$ new points uniformly randomly on the unit disk. For each of the points, we choose one of the three sides of the triangle, and orthogonally project the point onto that side. This leaves us with an inlier cloud $Z_{\mathrm{in}}$.
    \item Next, a rotation $\theta \in \S$ is chosen uniformly at random, and we define the other cloud as $X_{\mathrm{in}}=\theta Z_{\mathrm{in}}$. We add independent Gaussian noise to each of the points in either cloud, with a standard deviation of $\sigma=0.03$.
    \item Then, we go through the point pairs, throwing each one out with a probability $r$. The ones that are thrown out are replaced with a pair of points  $(z_i, x_i)$ independently chosen uniformly on the unit disk.
\end{itemize}

\paragraph{Comparison models}\label{sec:comp_mod}
Here we outline the two comparative methods for the experiments on rotation estimation.
The first one is a PointNet with $5$ equivariant layers and a head with $5$ fully connected layers, with additional learnable batch normalization layers. The model as a whole has around $34K$ parameters. We also consider a model better adapted to handle outliers, incorporating an attentive context normalization \cite{sun-cvpr-2020} with $7$ layers, for a total of around $11K$ parameters. We refer to the latter as `ACNe\textminus', since it lacks a lot of mechanisms (such as group normalization, skip connections, and other things) compared to the actual ACNe model. To reiterate, we think it would be dishonest to claim that we in this experiment compare our method with\cite{sun-cvpr-2020}. Our aim is rather to show that our approach can compete also with networks tailor-made for outlier-heavy scenarios. Both of these models take in the correspondences as vectors in $\R^4$, used as the channels in the first layer, and outputs two real scalars, which we reinterpret as a complex outputs. They are in particular not rotation equivariant.  

\paragraph{The 'ACNe\textminus'-model} Let us discuss our implementation of an 'ACNe-architecture' \emph{inspired} by \cite{sun-cvpr-2020}. The ACNe\textminus model consists of so called ACNe-units. In each such, each point in the input is first fed through one linear layer and an activation function to produce a cloud of features $F \in (\R^C)^m$. These weights are then fed through two different linear layers to produce two vectors $v_1,v_2 \in \R^m$. A sigmoid is applied pointwise to $v_1$ to produce the \emph{local weight vector} $w_1$. $\mathrm{SoftMax}$ is applied to $v_2$ to produce a \emph{global} weight vector $w_2$. These are then multiplied pointwise, and normalize to sum to one, to produce the final weight vector $w$. 

This vector is subsequently used to \emph{context normalize} the feature cloud $F$. That is, each channel is normalized to have zero mean and unit variance, with respect to the probability distribution defined by $w$. That is, with $\hat{F}=\sum_{j \in [m]}w_j F_j$, the $k$:th channel of the output of the ACNe unit is equal to 
\begin{align*}
    G_i^k  = \frac{ F_i^k -\hat{F}^k}{\bigg( \sum_{i\in [m]} w_i(F_i^k -\hat{F}^k)^2\bigg)^{\sfrac{1}{2}}}.
\end{align*}

The entire 'ACNe\textminus'-net has two additional steps: First, the initial input is fed through one perceptron layer before being fed to the first ACNe-unit. The actual output of the net is formed by the weighted average $\hat{F}$ of the final ACNe unit. This is different from \cite{sun-cvpr-2020}, where the output of the final layer is processed further in a problem-dependent manner.

\paragraph{Model sizes} For the broad model, the number of channel in the early layers are both equal to $4$, the late layers have $4$, $16$, $4$ and $1$ channels, respectively.  The vector unit layers have $32$ and $1$ channel, respectively. 

For the deep model, each $\calR^2(m)$-unit has $4$ channels in the early layer. The late layers in the two earlier units have $4$, $8$ and $4$ units each -- the final unit instead as late layers with $4$, $8$ and $1$ channels, respectively. The first two vector layers have $4$ channels, whereas the last has $1$. 

The permutation equivariant layers of the PointNet have $32$, $64$, $128$, $64$, $64$ and $64$ layers.  The layers of the fully connected head have $64$, $32$, $16$ and $2$ channels. We use max-pooling in between the permutation-equivariant layers and the fully connected head.

The layers of the 'ACNe\textminus' model have $4$, $32$, $32$, $64$, $64$, $32$, $32$ and $2$ layers, respectively.

\paragraph{Nonlinearities} We use the $\mathrm{ReLU}$ as a non-linearity for the PointNet, and leaky $\mathrm{ReLU}$s (where the slope parameter is set to the PyTorch standard of $.01$) for our models and the perceptrons in the 'ACNe\textminus'-model.

In addition to the mechanisms described in the main paper, we choose, for the deep and broad model, to normalize each channel of the weight unit, which is a vector in $\C^m$, to have $\ell_2$-norm $1$. We found this useful to prohibit the model to not get stuck at outputs of very small magnitudes. 
The learnable $\theta$-parameters in the complex $\mathrm{ReLU}$s are initalized to  $0.1$.

\paragraph{Training details} For the training of the PointNet, we use a stochastic gradient descent with a momentum of $0.9$. The learning rate is set to $10^{-3}$ and we train it for $400$ epochs.

For the training of the ACNe model, we use Adam \cite{adam}. The learning is initially set to $10^{-3}$, and halved after $200$ and $300$ epochs. We train it for $400$ epochs.

The broad and deep models are trained using Adam. We set the initial learning rate to $5\cdot 10^{-3}$, and half it after $70$ and $150$ epochs. We train it for $300$ epochs. 

All models are evaluated at the final epoch, with the exception of the experiment of the broad model for $r=0.8$, which severly overfitted the data (the final model had scores $0$, $0$ and $.02$ on the three metrics). Therefore, we (manually) stopped it early after 120 epochs, when the validation loss still was low.

\subsection{Essential Matrix Estimation} \label{sec:sup_exp_ess}
In this section we present more information on the experiment on essential matrix estimation from
Section~\ref{sec:exp_ess}

\begin{table}
\begin{minipage}{.48\linewidth}
    \centering
\begin{tabular}{|r || c | c | c | c |}
\hline
     Max. test rot. $a=$& $\ang{0}$ & $\ang{30}$ & $\ang{60}$  & $\ang{180}$ \\
   \hline ZZ-net (Ours)  & 0.15 & 0.15 & \textbf{0.16} & \textbf{0.15}\\
   ACNe         & \textbf{0.58} & \textbf{0.16} & 0.087 & 0.0096 \\
   CNe          & 0.30 & 0.077 & 0.058& 0.0 \\
   OANet        & 0.30 & 0.14 & 0.038 & 0.0 
   \\ \hline
\end{tabular}
\caption{Essential matrix estimation. mAP at $w=\ang{10}$ error in the estimated translation and rotation vectors for different values of
image plane rotations $a$ at test time.
\label{tab:essential10}}
\end{minipage}
\hfill
\begin{minipage}{.48\linewidth}
    \centering
\begin{tabular}{|r || c | c | c | c |}
\hline
     Max. test rot. $a=$& $\ang{0}$ & $\ang{30}$ & $\ang{60}$  & $\ang{180}$ \\
   \hline ZZ-net (Ours)  & 0.33 & \textbf{0.33} & \textbf{0.33} & \textbf{0.33} \\
   ACNe         & \textbf{0.72} & 0.32 & 0.20 & 0.054 \\
   CNe          & 0.50 & 0.21 & 0.15 & 0.022 \\
   OANet        & 0.50 & 0.30 & 0.12 & 0.026
   \\ \hline
\end{tabular}

\caption{Essential matrix estimation. mAP at $w=\ang{30}$ error in the estimated translation and rotation vectors for different values of
image plane rotations $a$ at test time.
\label{tab:essential30}}
\end{minipage}
\end{table}

\paragraph{Loss function}
Let $\{(\xi_1, \xi_2)\}$ 
denote a set of virtual matches (generated as the authors of OANet do by using the OpenCV \texttt{correctMatches} function), where $\xi_1$ and $\xi_2$ are in $\R^2$ and 
$\tilde\xi_1$ and $\tilde\xi_2$ are the homogeneous representations.
Then the symmetric squared epipolar loss of an estimated essential matrix $E$ is
\[
    \frac{(\tilde\xi_2^T E\tilde\xi_1)^2}{(E\tilde\xi_1)^2_{[0]} + (E\tilde\xi_1)^2_{[1]}} +
    \frac{(\tilde\xi_2^T E\tilde\xi_1)^2}{(E^T\tilde\xi_2)^2_{[0]} + (E^T\tilde\xi_2)^2_{[1]}},
\]
which we average over the set of virtual matches.

\paragraph{Evaluation metric}
The mAP score proposed by~\cite{moo-cvpr-2018} is obtained by first, for equispaced angle values
$v=\ang{5},\ang{10},\ldots,\ang{30}$, calculating 
the proportion of estimated $E$-matrices that 
have an error in angle of both the translation vector and the rotation
axis vector below $v$. The obtained proportion can be called the precision at $v$. 
The mAP at an angle $w$ is then obtained by averaging the precision at all $v\leq w$.

\paragraph{Further results} \label{sec:ess_further}
We present mAP scores at $\ang{10}$ and $\ang{30}$ in Tables~\ref{tab:essential10} and \ref{tab:essential30}.
Once again our results are averaged over two runs. The maximum difference between the scores in these two runs
for mAP at $\ang{10}$ was 0.03 and at $\ang{30}$ it was 0.02.

\paragraph{Model details}
The layer structures are as follows. The backbone $\mathcal{B}$ has three ZZ-units.
The first has two early layers which 
both have 8 output channels, two late layers which have 8 and 3 output channels, and two vector layers which
have 8 and 3 output channels. The second ZZ-unit has two early layers again both with 8 output channels,
two late layers with 8 and 3 output channels, and two vector layers with 8 and 3 output channels.
The last ZZ-unit has one early layer with 8 output channels, one late layer with 8 output channels and one
vector layer with 8 output channels.
We add skip connections so that the input to each ZZ-unit is both the input to the previous unit as well as the previous unit's output.

The equivariant angle predictor $\mathcal{E}$ consist of one ZZ-unit. It has one early layer with 8 output channels,
one late layer with 1 output channel and two vector layers with 8 and 1 output output channels.
The output of $\mathcal{E}$ is averaged over the point cloud to predict one complex number, interpreted as one angle.

The invariant angle predictor $\mathcal{I}$ takes the outputted $\alpha^+$-weights of the backbone
(which are rotation invariant) as input and applies a PointNet/Deepset to it. Here the real and imaginary
channels are 
treated like any other channel, i.e. the number of input channels to $\mathcal{I}$ is twice the number of
(complex) output channels of $\mathcal{B}$. $\mathcal{I}$ consists of three layers, with 32, 64 and 4 output channels respectively. The output of $\mathcal{I}$ is averaged over the point cloud to get permutation invariance and the 4 outputted real numbers are then reinterpreted as 2 complex numbers or angles.

We add context normalization (CN) \cite{moo-cvpr-2018}
between the early and late layers as well as after the vector layers
in each ZZ-unit. CN normalizes the features within a point cloud to mean 0 and variance 1.

\paragraph{Training details} \label{sec:ess_train_setup}
We implemented our model in Pytorch using Pytorch Lightning.
We used Ray Tune to find reasonable hyperparameters and then retrained the method with those.

We train the model for 30 epochs using early stopping on the validation loss.
We use a learning rate of 0.01 and train using Adam. We use a batch size of 1 due to the heavy memory need.

For all comparisons we use the settings supplied by the respective authors, except for the number of
training iterations which we change to 100000 to compare with our method (30 epochs corresponds to 
$30\cdot 3302 = 99060$ iterations).

\section{Spanning sets for spaces of \texorpdfstring{$\Stab(0)$}{Stab(0)}-equivariant linear maps} \label{sec:spanning sets}

Here we present explicit spanning sets for the spaces $\calL_0(k,\ell)$ from Section \ref{sec:spanning sets}. They are obtained via applying the isomorphism given in \ref{cor:isomorphism} to the spanning sets of $\calL(k,\ell+1)$ described in \cite{maron2018invariant}.

\paragraph{$\calL_0(0,0)$} This is simply the space scalars, i.e. $\K$.

\paragraph{$\calL_0(1,0)$} The space has dimension $B_2\leq2$. A basis is given by
\begin{align*}
    \mu_0(v)=v_0, \ \mu_1(v) =\sprod{\one,v}.
\end{align*}

\paragraph{$\calL_0(0,1)$} The space has dimension $B_2\leq 2$. A basis is given by
\begin{align*}
    w_0=e_0, \ w_1 =\one.
\end{align*}

\paragraph{$\calL_0(2,0)$} The space has dimension $B_3\leq 5$. A basis is given by
\begin{align*}
     \lambda_0(T) &= \sprod{\one,T\one}, \ \lambda_1(T) = \sprod{\one,\diag(T)}, \ \lambda_2(T) = T_{00} \\ \lambda_3(T)&=\sprod{e_0,T\one}, \ \lambda_4(T)=\sprod{e_0,T^T\one}.
\end{align*}

\paragraph{$\calL_0(1,1)$} The space has dimension $B_3 \leq 5$. A basis is given by
\begin{align*}
    L_0(v) &= \sprod{\one,v}\one, \ L_1(v) = v, \ L_2(v) = v_0 e_0 \\ L_3(v)&=\sprod{\one, v}e_0, \ L_4(T)=v_0 \one.
\end{align*}

\paragraph{$\calL_0(0,2)$} This space has dimension $B_3\leq 5$. A basis is given by
\begin{align*}
    T_0 &= \one \otimes \one, \ T_1 =  \diag^*(\one), T_2 = e_0 \otimes e_0 \\
    T_3 &= e_0 \otimes \one , T_3 = \one \otimes e_0
\end{align*}
where $\diag^*: \K^m \to \K^m \otimes \K^m$ is the dual operator of $\diag$. Concretely, $\diag^*(v)$ is the tensor with diagonal $v$. 

\paragraph{$\calL_0(2,1)$} The space has dimension $B_4\leq 15$. If we let $\lambda_{i}$ denote the basis of $\calL_0(2,0)$ from above, the first 10 basis elements are given by
\begin{align*}
   K_i(T) = \lambda_i(T) e_0 , \ K_{4+i}(T) = \lambda_i(T) \one , \ i =0, \dots,4.
\end{align*}
The final five are given by
\begin{align*}
    K_{10}(T) &= Te_0, \ K_{11}(T) = T^Te_0, \ K_{12}(T) = T\one \\
    K_{13}(T) &= T^T\one, \ K_{14}(T) = \diag(T)
\end{align*}

\paragraph{$\calL_0(1,2)$} The space has dimension $B_4\leq 15$. If we let $T_i$ denote the basis of $\calL_0(0,2)$ from above, the first 10 basis elements are given by
\begin{align*}
   L_i(v) = v_0 T_i, \ L_{4+i}(v) = \sprod{\one,v} T_i , \ i =0, \dots,4.
\end{align*}
The final five are given by
\begin{align*}
    L_{10}(v) &= e_0 \otimes v, \ L_{11}(T) = v \otimes e_0 \ L_{12}(T) = \one \otimes v \\
    L_{13}(T) &= v \otimes \one, \ L_{14}(T) = \diag^*(v)
\end{align*}

\paragraph{$\calL_0(2,2)$} The space has dimension $B_5\leq 52$. If we let $T_i$ denote the basis of $\calL_0(0,2)$  and $\lambda_i$ the one of $\calL_0(2,0)$, from above, the first 25 basis elements are given by
\begin{align*}
    \calK_{5i+j}(T)= \lambda_j(T)T_i , \  i,j =0, \dots,4.
\end{align*}
Letting $K_i$ denote the basis of $\calL_0(2,1)$  and $L_i$ the one of $\calL_0(1,2)$, the next 25 are given by
\begin{align*}
    \calK_{25+5i+j}(T) = L_{10+i}(K_{10+j}(T)), i,j=0, \dots, 4
\end{align*}
The final two are given by
\begin{align*}
    \calK_{50}(T)=T, \calK_{51}=T^T.
\end{align*}

\paragraph{Applying \texorpdfstring{$\calL(2,1)$}{L(2,1)}-maps to \texorpdfstring{$Z\otimes \overline{Z}$}{ZZ}.} When describing the $\calNS(m)$-architecture, we argued that the very first layer of an $\calNS(m)$-unit can be applied without calculating $Z\otimes \overline{Z}$. Let us show this. We have
\begin{align*}
    \lambda_0(Z\otimes \overline{Z}) &= \big\vert \sum_{i\in [m]} z_i \big \vert^2,   \quad \lambda_1(Z\otimes \overline{Z}) = \sum_{i\in [m]} \abs{z_i}^2, \quad  \lambda_2(Z\otimes \overline{Z}) = \abs{z_0}^2 \\
    \lambda_3(Z\otimes \overline{Z}) &=z_0 \cdot \overline{\sum_{i \in [m]}z_i}, \quad\lambda_4(Z\otimes \overline{Z}) =\overline{z_0} \cdot \sum_{i \in [m]}z_i.
\end{align*}
Clearly, all of these expressions can be calculated directly from $Z \in \C^m$, which implies that the same is true for $K_i$, $i=0, \dots, 9$. As for the last five maps, we have
\begin{align*}
    K_{10}(Z\otimes \overline{Z}) &= \overline{z_0}Z, \quad K_{11}(Z\otimes \overline{Z}) = z_0 \overline{Z}, \quad K_{12}(Z \otimes \overline{Z}) = \overline{\sum_{i \in [m]}z_i} \cdot Z\\
    K_{13}(Z \otimes \overline{Z}) &= \big(\sum_{i \in [m]}z_i \big) \cdot \overline{Z}, \quad K_{14}(Z) = (\abs{z_i}^2)_{i \in [m]}
\end{align*}
These expressions can clearly also be calculated without actually accessing $Z\otimes \overline{Z}$ as a tensor.

\end{document}